\newtheorem{theorem}{Theorem}
\newcommand{\MSESHAP}{MSE SHAP\xspace}
\journal{Arxiv}
\begin{document}

\begin{frontmatter}



\title{Explainability of Machine Learning Models under Missing Data}

\author[label3]{Tuan L. Vo$^*$}
\author[label1]{Thu Nguyen$^*$\footnote{$^*$denotes equal contribution}}
\author[label1]{Luis M. Lopez-Ramos}
\author[label2,label1]{Hugo L. Hammer}
\author[label1,label2]{Michael A. Riegler}
\author[label1,label2]{Pål Halvorsen}

\affiliation[label1]{organization={SimulaMet},
            city={Oslo},
            country={Norway}}

\affiliation[label2]{organization={Oslo Metropolitan University},
            city={Oslo},
            country={Norway}}
\affiliation[label3]{organization={LTCI, Télécom Paris, Institut Polytechnique de Paris},
            city={Paris},
            country={France}}



\begin{abstract}
Missing data is a prevalent issue that can significantly impair model performance and explainability. This paper briefly summarizes the development of the field of missing data with respect to  Explainable Artificial Intelligence and experimentally investigates the effects of various imputation methods on  SHAP (SHapley Additive exPlanations), a popular technique for explaining the output of complex machine learning models. 
Next, we compare different imputation strategies and assess their impact on feature importance and interaction as determined by Shapley values. Moreover, we also theoretically analyze the effects of missing values on Shapley values. Importantly, our findings reveal that the choice of imputation method can introduce biases that could lead to changes in the Shapley values, thereby affecting the explainability of the model. Moreover, we also show that a lower test prediction MSE (Mean Square Error) does not necessarily imply a lower MSE in Shapley values and vice versa. Also, while XGBoost (eXtreme Gradient Boosting) is a method that could handle missing data directly, using XGBoost directly on missing data can seriously affect explainability compared to imputing the data before training XGBoost. This study provides a comprehensive evaluation of imputation methods in the context of model explanations, offering practical guidance for selecting appropriate techniques based on dataset characteristics and analysis objectives. The results underscore the importance of considering imputation effects to ensure robust and reliable insights from machine learning models.
\end{abstract}



\begin{keyword}
Missing Data \sep Imputation \sep Model Explainability



\end{keyword}

\end{frontmatter}

\section{Introduction}\label{sec-intro}

Missing data is a common issue that can significantly affect model performance and explainability. This problem can occur due to various reasons, such as data collection errors, privacy concerns, or intentional omission. Addressing missing data is crucial, and one effective approach is imputation, i.e., filling in the missing data points to create a more complete dataset for analysis, thereby improving the overall reliability of the results.

Various imputation methods exist, ranging from simple techniques like mean imputation to more sophisticated approaches such as multiple imputation by chained equations (MICE) and k-nearest neighbors (KNN). Each method has its strengths and weaknesses, influencing not only the performance of predictive models but also their explainability. For example, in \cite{pham2024correlation}, the authors studied the effects of missing data on the correlation heatmap and found out that the technique that computes a correlation matrix with the highest accuracy (in terms of root mean square error, RMSE) does not always produce correlation plots that closely resemble those derived from complete data. However, the effects of missing data on various aspects of explainable AI (XAI) have not yet been fully studied.

In terms of explainability, Shapley values, a concept derived from cooperative game theory, have gained prominence as a robust method for interpreting complex models \cite{molnar2022}. Shapley values attribute the contribution of each feature to the final prediction, offering insights into feature importance and interaction. However, the accuracy and reliability of Shapley values can be affected by the choice of the imputation method, as the imputed values can introduce biases or distortions. Despite the importance of this issue, little attention has been paid to the effect of imputation on the Shapley values of the downstream model. For example, in \cite{lipovetsky2013modeling}, the authors examine various imputation methods to choose the best one and then use Shapley values to explain the prediction of the downstream task. However, the effects of imputation methods on Shapley values have not been thoroughly examined.

This paper aims to explore the effects of various imputation methods on the calculation of Shapley values of the downstream classification/regression task. By comparing different imputation strategies, we seek to understand how they influence the explainability of machine learning models and the robustness of the insights derived from Shapley values. The study provides a comprehensive evaluation of imputation methods in the context of model interpretation, offering guidance for practitioners on selecting appropriate techniques for their specific datasets and objectives.

In summary, the contributions of this paper are: (i) we explore the effects of various imputation methods on Shapley values of the downstream regression/classification task; (ii) we demonstrate that the choice of imputation method can significantly impact Shapley values, influencing the explainability of the downstream machine learning models; (iii) we highlight that imputation methods achieving the highest prediction accuracy do not necessarily produce the most reliable feature importance values, suggesting the need for method selection aligned with dataset characteristics and analysis goals.

The rest of the paper is organized as follows. In Section \ref{sec-related}, we review related works on XAI and missing data to gain insights into the development of missing data in relation to XAI and highlight open issues for future consideration. Next, in Section \ref{sec-methods}, we describe the methods being examined in this work. Following that, we theoretically analyze the effects of missing data on Shapley values in Section \ref{sec-theo} and conduct and analyze the experiment results in Section \ref{sec-experiments}. Finally, the paper concludes with a discussion of our findings and suggestions for future work in Section \ref{sec-discuss} and Section \ref{sec-conclusion}.

\section{Related works}\label{sec-related}
In this section, we review various related works on missing data handling techniques and their explainability. 
\subsection{Explainable AI}
Explainable Artificial Intelligence (XAI) has garnered significant attention in recent years due to its critical role in ensuring transparency, trustworthiness, and accountability in AI systems. As AI continues to be integrated into various domains, the need for models that can not only perform well but also provide human-understandable explanations for their predictions has become paramount.

A significant development in XAI is the emergence of intrinsic explainability approaches, which involve designing models that are interpretable by nature. Decision trees \cite{quinlan1986induction}, rule-based models \cite{kuhn2013classification}, and linear models \cite{hastie2009elements} are classic examples of intrinsically interpretable models. More recently, models such as Generalized Additive Models (GAMs) \cite{hastie2017generalized} and attention mechanisms in neural networks (NNs) \cite{vaswani2017attention} have been explored for their ability to provide insights into the decision-making process. However, their explanation follows their specific explanation style, making it difficult to compare with other models that do not follow the same explanation schemes during model selection.

Recent advancements have also seen the integration of XAI with deep learning models. Techniques such as Deep SHAP \cite{lundberg2017unified} and Integrated Gradients \cite{sundararajan2017axiomatic} provide explanations for neural network predictions by attributing the output to the input features in a manner consistent with the model's internal workings. These methods bridge the gap between the high performance of deep learning models and the need for explainability.

One prominent approach in XAI is the use of post-hoc explanation methods, such as LIME (Local Interpretable Model-agnostic Explanations) \cite{ribeiro2016should} and SHAP (SHapley Additive exPlanations) \cite{lundberg2017unified}, offer explanations by approximating the behavior of complex models locally around a given prediction. LIME creates interpretable local linear surrogate models, while SHAP leverages Shapley values from cooperative game theory to fairly distribute the contribution of each feature to the prediction. Both methods have been widely adopted due to their model-agnostic nature, allowing them to be applied to any black-box model.

Overall, compared to other techniques, the Shapley value offers several distinct advantages over other Explainable AI (XAI) techniques. Unlike methods that provide only local explanations or are limited to specific model architectures, the Shapley value delivers a unified and theoretically grounded approach applicable to any machine learning model. It ensures a mathematical notion of fairness by distributing the model's output among the features based on their marginal contributions over different patterns of availability of the other features, offering an additive feature attribution that respects interaction effects. This fairness and consistency make Shapley values particularly valuable in high-stakes domains like medicine or finance \cite{fryer2021shapley, watson2022rational, heuillet2022collective, storaas2023using, storaas2022explainability,nguyen2023multimedia}, where understanding the relative importance and interplay of various features is crucial for trust and accountability. Additionally, Shapley values provide a global explainability framework, making it easier to understand the model's behavior across different scenarios, thus enhancing transparency and facilitating more informed decision-making. However, the treatment of missing data can significantly influence the robustness and reliability of XAI techniques, including Shapley values.

\subsection{Missing data imputation techniques}

The most common approach to handling missing values is to use imputation methods to fill in the gaps. Techniques such as matrix decomposition or matrix completion, including missing Gaussian processes (MGP) \cite{jafrasteh2023gaussian}, Polynomial Matrix Completion \cite{fan2020polynomial}, matrix completion via alternating least squares (ALS) \cite{hastie2015matrix}, and a congestion imputation model (CIM) based on joint matrix factorization \cite{jia2021missing}, allow for continuous data to be completed and subsequently analyzed using standard data analysis procedures. Additionally, regression or clustering-based methods, such as cumulative Bayesian ridge with less NaN (CBRL) and cumulative Bayesian ridge with high correlation (CBRC) \cite{m2020cbrl}, utilize Bayesian Ridge Regression and cluster-based local least square methods \cite{keerin2013improvement} for imputation. For large datasets, deep learning imputation techniques have gained popularity due to their performance \cite{choudhury2019imputation, lall2022midas, mohan2021graphical}. It is important to note that different imputation methods may produce varying values for each missing entry, making the modeling of uncertainty for each missing value crucial. Bayesian and multiple imputation techniques, such as Multiple Imputation with Denoising Autoencoders (MIDAS) \cite{lall2022midas}, Bayesian principal component analysis-based imputation \cite{audigier2016multiple} and multiple imputations using Deep Denoising Autoencoders \cite{gondara2017multiple}, are particularly useful in these scenarios. Furthermore, certain tree-based methods can naturally handle missing data through prediction, including missForest \cite{stekhoven2012missforest}, the decision tree missing-value imputation (DMI) algorithm \cite{rahman2013missing}, and decision trees and iterative fuzzy clustering (DIFC) \cite{nikfalazar2020missing}. Methods that can manage mixed data types have also emerged, such as Single Center Imputation from Multiple Chained Equation (SICE) \cite{khan2020sice}, 
and Clustering Mixed Numerical and Categorical Data with Missing Values (k-CMM) \cite{dinh2021clustering}.

Although various missing data methods have been developed, 
most of them do not have a built-in explanation mechanism for imputed values. However, the way a value is imputed can have a profound impact on the performance and explainability of downstream machine learning models. If one uses a prediction model such as a regression or classification model, then one can use Shapley values or some other XAI technique to predictions obtained from models trained after imputing training data. However, many times, one may want to use a more complicated imputation method that requires building regression or classification models and looping through the data multiple times to improve imputation accuracy. This can make it more challenging to explain the imputed values than imputing the data without using loops such as based on a regression model, for example. 

Sometimes, the explanation for imputed values of some imputation techniques can be easily achieved even though the original algorithm does not provide an explanation. K-nearest-neighbor imputation may offer some explanation of the imputed values based on the $K$ nearest neighbors. Next, since tree-based techniques are explainable, the explanation for tree-based imputation methods such as missForest \cite{stekhoven2012missforest} can be achieved by building a tree to explain the classification results. However, a potential ambiguity can be the prediction of one missing entry based on the imputed values of the other entries during the iteration process. Similar concerns apply to MICE imputation \cite{buuren2010mice}, a method for handling missing data through iterative predictions based on the relationships between observed and missing variables.

Attention to the explainability of imputation techniques under missing data has intensified in recent years. For example, \cite{vu2023conditional} introduces Distribution-based Imputation of Missing Values with Regularization (DIMV), an interpretable imputation method that characterizes the uncertainty of imputation based on the conditional distribution of a multivariate Gaussian model, with regularization and feature selection; or \cite{pham2024correlation} analyzes the effects of various imputation and direct parameter-estimation methods to the correlation plot. Next, a technique for obtaining local explanations when the point to be explained and/or the reference dataset is affected by data missingness is proposed in \cite{cinquini2023handling}. More specifically, they propose an extension of LIME that can be used to explain the output of models that handle missing data without imputation, such as the ones introduced in the ensuing section.

However, to our knowledge, so far, there has not been much work on the effects of missing data on Shapley values. In fact, while \cite{lyngdoh2022prediction} implements five imputation methods (elimination, mean, mode, KNN, MICE) to deal with missing data, they made predictions using 
various prediction models to conclude that XGBoost combined with KNN imputation gives the best results; they only examine SHAP for this combination.

\subsection{Direct missing data handling techniques without imputation}

Different from the imputation approaches, methods that directly handle missing data can have clearer implications in terms of explainability.
Specifically, Nguyen et al.~\cite{NGUYEN20211} introduced an efficient Parameter Estimation for Multiple Class Monotone Missing Data (EPEM) algorithm to estimate the maximum likelihood estimates (MLEs) for multiple class monotone missing data when the covariance matrices of all classes are assumed to be equal. Additionally, Direct Parameter Estimation for Randomly missing data (DPER) \cite{NGUYEN2022108082} addresses a more general case where missing data can occur in any feature by using pairs of features to estimate the entries in the mean and covariance matrices.
The implication to model explainability of using direct parameter estimation methods, like EPEM and DPER, includes improved transparency and explainability of the model's behavior, as these methods provide clear estimates of means and covariances, which can be directly examined and understood. In fact, \cite{pham2024correlation} analyzes the effects of various imputation and direct parameter estimation methods on the correlation plot, and finds that DPER is not only scalable but also better for this task than imputation techniques such as SOFT-IMPUTE \cite{mazumder2010spectral}, KNN-based imputation (KNNI), generative adversarial imputation network (GAIN) \cite{yoon2018gain}, graph imputer neural network (GINN) \cite{spinelli2020missing}, and missForest \cite{stekhoven2012missforest}.

Note that long short-term memory (LSTM) networks \cite{hochreiter1997long} can handle missing data directly. In fact, it is used in the work of Ghazi et al.~\cite{ghazi2018robust} to model disease progression while handling missing data in both the inputs and targets. The bi-clustering problem~\cite{castanho2024biclustering}, a statistical learning methodology that enables simultaneous partitioning of rows and columns of a rectangular data array into homogeneous subsets, is tackled by Li et al.~\cite{LI2020304} while handling missing data. Learning directly from the data may offer advantages in speed and reduced storage costs by eliminating the need for separate models for imputation and the target task. 

Using techniques that directly handle missing data can significantly enhance model explainability. By learning directly from the data, the model avoids the added complexity of managing separate imputation and prediction models, simplifying the overall structure and making it easier to understand. Additionally, it helps avoid potential biases that can be introduced by filling in missing values, as in imputation methods.

\subsection{Studies on the impact of imputation on model interpretation}

In the work of Shadbahr et al., it is observed that poor imputation quality can lead to misleading feature importance assessments in ML classifiers \cite{shadbahr2023impact}. More specifically, they design a simulated dataset where they expect the distribution of the Shapley values to be symmetric, as the clusters are separated by a fixed distance and are normally distributed within those clusters. Using the skewness, they measure the symmetry of the feature importance values for each feature for all models. According to their experiments, models fit poorly imputed data giving rise to misleading feature importance metrics, demonstrating the adverse impact of poor imputation quality on model explainability.

As highlighted by Payrovnaziri et al., changes in the underlying data due to imputation can alter model characteristics and feature importance, necessitating a careful examination of how imputed values influence predictions \cite{payrovnaziri2021assessing}. Their results show that the level of missingness and the imputation method used can have a significant impact on the interpretation of the models.
To assess model explainability, they focused on the stability of feature importance rankings across multiple imputations. Their findings revealed that the choice of imputation method significantly influences the explainability of predictive models, with MICE providing more stable feature importance rankings compared to mean imputation and k-NN.

Assessing the impact of imputation in the model explainability is particularly relevant in clinical settings, where accurate interpretation of model outputs is essential for decision-making. The effectiveness of interpretable ML models under missing data was also studied in \cite{stempfle2024expert} from a subjective point of view (surveying clinicians' perception).

The impact of missing data imputation in the performance and explainability of prediction rule ensembles (PREs) was studied in \cite{schroeder2024interpretable}. A simulation study is conducted where a PRE model is trained using data imputed using several imputation schemes, and the model size and performance are evaluated, observing a trade-off between performance and model complexity.






\section{Methods}\label{sec-methods}

In this paper, we will also examine the effects of various imputation techniques on Shapley values. Therefore, in this section, we will briefly summarize the basics of Shapley values, the types of plots, and the imputation techniques that will be used to examine Shapley values in the experiments.
\subsection{Shapley values}
Shapley values, originally derived from cooperative game theory~\cite{shapley:book1952}, have been adapted as a powerful tool for interpreting machine learning models~\cite{lundberg2017unified}. They provide a systematic way to attribute the contribution of each feature to a model's prediction, in accordance with a mathematical notion of fairness in the distribution based on the interaction of features.
Consider a machine learning model $v$ that takes an input vector $\mathbf{x} = (x_1, x_2, ..., x_p)$ and produces a prediction $v(\mathbf{x})$. The Shapley value for the value $x_i$ of corresponding feature $\boldsymbol{f}_i$ quantifies its contribution to the prediction $v(\mathbf{x})$.
Formally, let $P = \{1,2,\dots,p\}$ be the complete index set of features and $S$ a coalition of feature indices (a subset of $P \setminus \{i\}$). For $i=1,2,\dots,p$, the Shapley value $\phi_i$ for the feature value $x_i$ is defined as
\begin{equation}
    \phi_i = \sum_{S \subseteq P \setminus \{i\}} \frac{|S|! (p - |S| -1)!}{p!} \left[v(S \cup \{i\}) - v(S)\right],
\end{equation}
where $|S|$ is the number of elements in subset $S$, and $v(S \cup \{i\})$, $v(S)$ denote the model prediction using feature indices in $S$ plus $i^{th}$ feature and only in $S$, respectively.
In the context of classification problems, the model outputs a probability distribution over classes. The Shapley values can be calculated for each class's probability, providing insights into how each feature influences the likelihood of each class. Specifically, calculating Shapley values in classification problems can be broken down into Algorithm~\ref{alg:shapleyvalues}.
\begin{algorithm}
\caption{\textbf{Shapley value calculation}}\label{alg:shapleyvalues}
    \hspace*{\algorithmicindent} \textbf{Input:} a classification model $v$ that predicts the probabilities for each class based on a sample $\mathbf{x} = (x_1, x_2, \ldots, x_p)$ and $P = \{1, 2, \ldots, p\}$ represents the index set of all features.\\
    \hspace*{\algorithmicindent} \textbf{Output:} The Shapley value $\phi_i$ for each feature value $x_i$, for $i=1,2,\dots,p$.
\begin{algorithmic}[1]
    \For{S $\subset$ P}
            \State $v(S) \leftarrow $ trained classification model $v$ with index in $S$.
            \EndFor
    \For{i $\in$ P}
        \For{S $\subset P \setminus \{i\}$}
            \State $\text{Marginal Contribution}_{(i)} \leftarrow v(S \cup \{i\}) - v(S).$
            \State $\text{Weight}(S) \leftarrow \frac{|S|! (p - |S| -1)!}{p!}.$
            \EndFor
        \State $\phi_i \leftarrow \sum_{S \subseteq P \setminus \{i\}} \text{Weight}(S) \times \text{Marginal Contribution}_{(i)}.$
    \EndFor
\end{algorithmic}
\end{algorithm}

The types of plot that will be examined in this paper include:
\begin{itemize}
    \item A global feature importance plot allows visualizing the impact of individual features within a predictive model. More specifically, it highlights the features that contribute the most to the model's predictions, enabling a deeper understanding of the model's behavior. Here, the global importance of each feature is determined by calculating the mean absolute value of Shapley values of that feature across all the given samples. 
\item A beeswarm plot is a useful visualization tool to depict the distribution and influence of individual feature contributions on the model's predictions, particularly when using Shapley values. It provides a detailed view of how each feature affects the output of a model across the entire dataset. 
By displaying each data point as a dot and arranging these dots to show the distribution of Shapley values for each feature, the beeswarm plot offers a comprehensive overview of feature importance. It allows for the identification of patterns and outliers, helping to understand the behavior of the model with respect to individual features. 

\end{itemize}

\subsection{Imputation techniques}
In this section, we briefly summarize the missing data handling methods that we will examine for the effects on Shapley values. The methods being investigated consist of a method that can directly learn from missing data, such as XGBoost, to a simple imputation method as Mean Imputation, as well as the widely used or recently developed imputation techniques, such as MICE, DIMV, missForest, and SOFT-IMPUTE. The details of the methods are as follows: 
\begin{itemize}
    \item \textbf{XGBoost} (Extreme Gradient Boosting) \cite{chen2016xgboost} is a powerful and efficient algorithm that belongs to the family of gradient boosting techniques. It builds an ensemble of decision trees, where each tree corrects errors made by the previous ones, enhancing predictive accuracy. XGBoost stands out for its speed and performance, employing advanced features like tree pruning, regularization, and parallel processing, which help in reducing overfitting and handling large-scale data. XGBoost can handle missing data directly both during training and inference.

    \item \textbf{Mean Imputation (MI)} is a basic technique consisting of replacing the missing values with the mean (sample average) of the available data for that particular variable. This method is straightforward and easy to implement. 
    
    \item \textbf{Multiple Imputation by Chained Equations (MICE)}~\cite{buuren2010mice} is a technique for imputing missing data in datasets through regression. It considers the variability of missing values by crafting various imputed datasets. This technique entails sequentially modeling each incomplete variable based on other data variables, producing numerous imputations for each lacking value. The iteration ceases when the divergence among different entries falls below a set threshold. Through this approach, MICE preserves inter-variable connections and yields a stronger and more precise approximation of the missing values.

    \item \textbf{Conditional Distribution-based Imputation of Missing Values with Regularization (DIMV)}~\cite{vu2023conditional}, is an innovative algorithm designed to handle missing data. This method works by determining the conditional distribution of a feature with missing entries, leveraging information from fully observed features. Even though DIMV is based on the assumption that the data follows a multivariate normal distribution, it is robust in the sense that it still works well when this assumption is not met. DIMV can also explain the contribution of each feature to the imputation of a feature with a missing value in a regression coefficient-like manner. Moreover, the technique is robust to multicollinearity due to $L_2$ norm regularization.

    \item \textbf{MissForest}~\cite{stekhoven2012missforest} is an imputation method that leverages the predictive power of random forests to estimate missing values in datasets. In particular, it operates by iteratively predicting the missing values for each variable using the observed values and previously imputed values. This process continues until the imputations converge, minimizing the error between successive iterations. Moreover, missForest can handle both continuous and categorical data and capture complex interactions and non-linear relationships between variables, leading to more accurate imputations that make it versatile and suitable for various types of datasets.

    \item \textbf{SOFT-IMPUTE}~\cite{mazumder2010spectral} is an imputation technique based on matrix completion and low-rank approximation, particularly useful for handling missing data in large and sparse datasets. This method operates by iteratively replacing the missing entries with values that minimize the reconstruction error of the data matrix, using singular value decomposition (SVD). Furthermore, SOFT-IMPUTE alternates between imputing missing values and performing a soft-thresholding operation on the singular values of the data matrix. This process promotes a low-rank structure in the resulting matrix, effectively capturing the underlying low-rank structure of the data while filling in the missing entries. 

    \item \textbf{Generative Adversarial Imputation Nets (GAIN)}~\cite{yoon2018gain} 
    is a deep learning method for handling missing data by leveraging the Generative Adversarial Networks (GAN) framework. In GAIN, the generator (G) is trained to impute missing values in a data vector by conditioning the observed values, producing a completed vector that closely resembles the true data distribution. The discriminator (D), on the other hand, is tasked with distinguishing between observed and imputed components within this completed vector. 
\end{itemize}
\section{Theoretical analysis}\label{sec-theo}
In this section, we consider a simple linear regression model to analyze the effects of missing data (a missing-completely-at-random mechanism) in training and test datasets on global feature importance, measured by mean absolute Shapley values.

Assume we have an original training dataset, denoted as $\mathcal{D} = (\mathbf{x}, \mathbf{y})$, consisting of $n$ samples, where $\mathbf{x}$ is a univariate input and $\mathbf{y}$ is the target variable. The corresponding dataset with missing values from $\mathcal{D}$ is denoted as $\mathcal{D}^* = (\mathbf{x^*}, \mathbf{y})$. Here, only $\mathbf{x}^*$ has missing values and we denote $\text{Obs}(\mathbf{x}^*)$ as the index set of observed samples in $\mathbf{x}^*$, i.e., $\text{Obs}(\mathbf{x}^*) = \{i\in \{1, 2, \dots ,n\}: x_i \text{ is observed}\}$. Moreover, we denote $\mathcal{D}'=(\mathbf{x}',\mathbf{y})$ as imputed data by using mean imputation. It means that
\begin{equation}
        x'_i = \begin{cases}
    x_i, & \text{ if } i \in \text{Obs}(\mathbf{x}^*)\\
    \mathbb{E}[\mathbf{x}^*], & \text{ if } i \notin \text{Obs}(\mathbf{x}^*)
    \end{cases}, \text{ for } i=1,2,\dots,n.
\end{equation}
Now, suppose that the corresponding linear regression models on $\mathcal{D}$ and $\mathcal{D'}$ are
\begin{align*}
    &y= \widehat{f}(x)  = \beta_0 +\beta_1 x,\\
    &y' = \widehat{f^\prime}(x') = \beta'_0 + \beta'_1x',
\end{align*}
where
 \begin{align*}
     &\beta_1 = \frac{\text{Cov}(\mathbf{x},\mathbf{y})}{\text{Var}(\mathbf{x})}, \beta_0 = \mathbb{E}[\textbf{y}] - \beta_1\mathbb{E}[\textbf{x}], \beta'_1 = \frac{\text{Cov}(\mathbf{x'},\mathbf{y})}{\text{Var}(\mathbf{x'})}, \text{ and } \beta'_0 = \mathbb{E}[\textbf{y}] - \beta'_1\mathbb{E}[\textbf{x}'].
\end{align*}

We consider a test dataset $\mathbf{z}$, consisting of $m$ samples. It should be recalled that the test data $\mathbf{z}$ could contain missing values (denoted by $\mathbf{z}^*$) and the imputed version $\mathbf{z}'$ is presented as
\begin{equation}\label{z_imputed}
    z'_j = \begin{cases}
    z_j, & \text{ if } j \in \text{Obs}(\mathbf{z}^*)  \\
    \mathbb{E}[\mathbf{x}^*], & \text{ if } j \notin \text{Obs}(\mathbf{z}^*)
    \end{cases}, \text{ for } j=1,2,\dots,m.
\end{equation}
Then, we want to illustrate a global feature importance of $\mathbf{x}$ and $\mathbf{x}'$ on $\mathbf{z}$ and $\mathbf{z}'$ respectively by using the mean absolute Shapley values, i.e.,
\begin{align}
    |\phi(\mathbf{z})| = \frac{1}{m}\left(\sum_{j=1}^{m}|\phi(z_j)|\right),\\
   |\phi'(\mathbf{z}')| = \frac{1}{m}\left(\sum_{j=1}^{m}|\phi'(z'_j)|\right),
\end{align}
where $|\phi(\mathbf{z})|$ and $|\phi'(\mathbf{z'})|$ represent the global feature importance on $\mathbf{z}$ and $\mathbf{z'}$, respectively; $\phi(z_j)$ and $\phi'(z'_j)$ denote the Shapley values for the predictions of samples $z_j$ and $z'_j$.
The following theorem shows that the global feature importance on $\mathbf{z'}$ can be simplified.
\begin{theorem}
The global feature importance of $\mathbf{x}'$ on $\mathbf{z'}$ can be simplified to
\begin{equation}
     |\phi'(\mathbf{z}')| = \frac{1}{m}\left(\sum_{j\in\text{Obs}(\mathbf{z}^*)}|\phi'(z'_j)|\right).
\end{equation}
\end{theorem}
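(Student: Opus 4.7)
The plan is to show that every imputed test point contributes a Shapley value of exactly zero, so the sum over $j=1,\dots,m$ collapses to a sum over $\text{Obs}(\mathbf{z}^*)$ only. The argument will combine two facts: (i) the Shapley value of a univariate linear model admits a very clean closed form, and (ii) mean imputation preserves the sample mean.

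First I would specialize Algorithm~\ref{alg:shapleyvalues} to the univariate case. With $P=\{1\}$, the only subset $S\subseteq P\setminus\{1\}$ is $S=\emptyset$, so the Shapley formula reduces to
\begin{equation*}
\phi'(z'_j) \;=\; v(\{1\}) - v(\emptyset) \;=\; \widehat{f^{\prime}}(z'_j) - \mathbb{E}\!\left[\widehat{f^{\prime}}(\mathbf{x}')\right].
\end{equation*}
Substituting the linear form $\widehat{f^{\prime}}(x')=\beta'_0+\beta'_1 x'$ and using linearity of expectation then yields the compact expression $\phi'(z'_j)=\beta'_1\bigl(z'_j - \mathbb{E}[\mathbf{x}']\bigr)$.

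Next I would verify the mean-preservation identity $\mathbb{E}[\mathbf{x}']=\mathbb{E}[\mathbf{x}^*]$. This is a one-line calculation from the definition of $\mathbf{x}'$: the observed entries sum to $|\text{Obs}(\mathbf{x}^*)|\cdot\mathbb{E}[\mathbf{x}^*]$ by the definition of the sample mean over the observed part, while each of the remaining $n-|\text{Obs}(\mathbf{x}^*)|$ imputed entries equals $\mathbb{E}[\mathbf{x}^*]$; averaging over all $n$ samples returns $\mathbb{E}[\mathbf{x}^*]$.

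Combining the two steps, for any $j\notin \text{Obs}(\mathbf{z}^*)$ Equation~(\ref{z_imputed}) gives $z'_j=\mathbb{E}[\mathbf{x}^*]=\mathbb{E}[\mathbf{x}']$, hence $\phi'(z'_j)=0$. Removing these zero summands from $\frac{1}{m}\sum_{j=1}^m |\phi'(z'_j)|$ leaves exactly the claimed restriction to $\text{Obs}(\mathbf{z}^*)$. The only genuinely subtle ingredient is the mean-preservation identity $\mathbb{E}[\mathbf{x}']=\mathbb{E}[\mathbf{x}^*]$: it is what aligns the imputed test value with the reference expectation implicit in $v(\emptyset)$, and without this alignment no term could be dropped. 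Everything else is routine substitution.
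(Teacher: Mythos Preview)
Your proposal is correct and follows essentially the same route as the paper: obtain the closed form $\phi'(z'_j)=\beta'_1\bigl(z'_j-\mathbb{E}[\mathbf{x}']\bigr)$, invoke $\mathbb{E}[\mathbf{x}']=\mathbb{E}[\mathbf{x}^*]$, and conclude that imputed test points contribute zero. The only cosmetic difference is that the paper cites Corollary~1 of \cite{lundberg2017unified} for the linear Shapley formula, whereas you derive it directly from the univariate specialization of Algorithm~\ref{alg:shapleyvalues}; your version is slightly more self-contained and also spells out the mean-preservation computation that the paper merely asserts.
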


\begin{proof}
In~\cite{lundberg2017unified}, the \textit{Corollary 1}, the Shapley value for the prediction on sample $z'_j$ for $j=1,2,\dots,m$ with the corresponding model on $\mathcal{D}'$ are defined as

\begin{equation}
    \phi'(z'_j) =(z'_j - \mathbb{E}[\mathbf{x'}]) \beta'_1.
\end{equation}
By using the mean imputation~\eqref{z_imputed}, if $z_j$ is missing then $z'_j = \mathbb{E}[\mathbf{x}^*]$. Thus,
\begin{equation}
    \phi'(z'_j) =(\mathbb{E}[\mathbf{x}^*] - \mathbb{E}[\mathbf{x'}]) \beta'_1,\text{ for } j \notin \text{Obs}(\mathbf{z}^*).
\end{equation}
Moreover, it is worth noting that $\mathbf{x}'$ is the version of mean imputation of $\mathbf{x}^*$, therefore $\mathbb{E}[\mathbf{x}^*]=\mathbb{E}[\mathbf{x}']$. This implies
\begin{equation}
    \phi'(z'_j) = 0, \text{ for } j \notin \text{Obs}(\mathbf{z}^*).
\end{equation}
The global feature importance of $\mathbf{x}'$ on $\mathbf{z}'$ is
\begin{align}
   |\phi'(\mathbf{z}')| &= \frac{1}{m}\left(\sum_{j=1}^{m}|\phi'(z'_j)|\right)\nonumber \\
   &=\frac{1}{m}\left(\sum_{j\in \text{Obs}(\mathbf{z}^*)}|\phi'(z'_j)| + \sum_{j \notin \text{Obs}(\mathbf{z}^*)}|\phi'(z'_j)|\right)\nonumber\\
    &=\frac{1}{m}\left(\sum_{j\in \text{Obs}(\mathbf{z}^*)}|\phi'(z'_j)|\right).
\end{align}
\end{proof}
Based on the theorem, we have the following analysis. First, when the missing rate of $\mathbf{z}^*$ increases, the number of samples that have non-zero contributions in $ |\phi'(\mathbf{z}')|$ is decreased. We can find that $ |\phi'(\mathbf{z}')|$ would be small and less than $|\phi(\mathbf{z})|$. However, one important thing is that we cannot conclude that $|\phi'(\mathbf{z}')|$ is always less than $ |\phi(\mathbf{z})|$. The reason is that $|\phi'(z'_j)|$ could be greater than $|\phi(z_j)|$ for some $j\in \text{Obs}(\mathbf{z}^*)$.
To state it clearly, expanding the formulas $\phi(z_j)$ and $\phi'(z'_j)$ gives
\begin{align}
    &|\phi(z_j)| = \left|(z_j - \mathbb{E}[\mathbf{x}])\beta_1 \right|= \left|z_j - \mathbb{E}[\mathbf{x}]\right|.\frac{|\text{Cov}(\mathbf{x},\mathbf{y})|}{\text{Var}(\mathbf{x})}, \\
    &|\phi'(z'_j)| = \left|(z'_j - \mathbb{E}[\mathbf{x'}] \beta'_1 \right|= \left|z'_j - \mathbb{E}[\mathbf{x'}]\right|.\frac{|\text{Cov}(\mathbf{x'},\mathbf{y})|}{\text{Var}(\mathbf{x'})}.
\end{align}
Note that if $j\in \text{Obs}(\mathbf{z}^*)$, then $z'_j = z_j$. Given that the missing mechanism is missing-completely-at-random (MCAR), we have $\mathbb{E}[\mathbf{x}] = \mathbb{E}[\mathbf{x}']$, so the difference between $|\phi(z_j)|$ and $|\phi'(z'_j)|$ comes from the remaining terms $\frac{|\text{Cov}(\mathbf{x},\mathbf{y})|}{\text{Var}(\mathbf{x})}$ and $\frac{|\text{Cov}(\mathbf{x'},\mathbf{y})|}{\text{Var}(\mathbf{x'})}$. Additionally, these terms depend on the missing rate in training data $\mathbf{x}$ and $\mathbf{x}'$. In particular, because $\mathbf{x'}$ is the imputed version of $\mathbf{x}$ with mean imputation then $\text{Var}(\mathbf{x'}) \le \text{Var}(\mathbf{x})$. It means when the missing rate in training data increases, the term $\frac{|\text{Cov}(\mathbf{x'},\mathbf{y})|}{\text{Var}(\mathbf{x'})}$ could be greater than $\frac{|\text{Cov}(\mathbf{x},\mathbf{y})|}{\text{Var}(\mathbf{x})}$. To illustrate this, consider that, when the $i$-th sample becomes missing, the impact on the covariance is
\begin{equation}
    \text{Cov}(\mathbf{x'},\mathbf{y}) - \text{Cov}(\mathbf{x},\mathbf{y}) = \frac{-1}{N} (y_i - \mathbb{E}[\mathbf{y}]) \cdot (x_i - \mathbb{E}[\bf x']),
\end{equation}
which can be positive or negative and can contribute to an increase or decrease in the absolute value of the covariance. The derivation of the expression above is provided in Appendix \ref{appendix-derivation-cov-diff}. \\
Thus, $|\phi'(z'_j)|$ could be greater than $|\phi(z_j)|$ for some $j\in \text{Obs}(\mathbf{z}^*)$. In summary, there is a different trend in global feature importance changes, this trend depends on the missing rates in training and test datasets. A high missing rate in the test dataset can decrease global feature importance, while a high missing rate in the training dataset can increase it.

Second, the distribution of the Shapley values on $\mathbf{z}$ and $\mathbf{z'}$ should have a zero mean. The evidence for that is 
\begin{align}
    \mathbb{E}[\phi(\mathbf{z})] &=\mathbb{E}[(\textbf{z}- \mathbb{E}[\mathbf{x}])\beta_1] =  (\mathbb{E}[\mathbf{z}] - \mathbb{E}[\mathbf{x}])\beta_1, \\
    \mathbb{E}[\phi'(\mathbf{z'})] &= \mathbb{E}[(\textbf{z}'- \mathbb{E}[\mathbf{x}])\beta'_1] =(\mathbb{E}[\mathbf{z}'] - \mathbb{E}[\mathbf{x}'])\beta'_1.
\end{align}
In practice, $\mathbf{x}$ and $\mathbf{z}$ come from one dataset, so it is possible to have $\mathbb{E}[\mathbf{z}] \approx \mathbb{E}[\mathbf{x}]$ and $\mathbb{E}[\mathbf{z}'] \approx \mathbb{E}[\mathbf{x}']$. Therefore, $\mathbb{E}[\phi(\mathbf{z})] \approx  \mathbb{E}[\phi'(\mathbf{z'})] \approx 0$. This is why we see the symmetry in beeswarm plots of the original data and mean imputation.

\clearpage
\section{Experiments}\label{sec-experiments}
\subsection{Experiment settings}\label{sec-experiment settings}
In this study, we conduct experiments to study the effects of missing values on Shapley values on regression and classification using four data sets from the Machine Learning Database Repository at the University of California, Irvine\footnote{http://archive.ics.uci.edu/ml}. Detailed descriptions of the datasets are provided in Table~\ref{tab:data}. 
Specifically, three scenarios were considered in our experiments: (1) Linear Regression on Original data (denoted as LRO and considered the ground truth), (2) XGBoost without imputation, and (3) linear regression on imputed data. Here, in the third scenario, we tested six imputation methods: Mean Imputation (MI), Multiple Imputation by Chained Equations (MICE)~\cite{buuren2010mice}, conditional Distribution-based Imputation of Missing Values with 
Regularization (DIMV)~\cite{vu2023conditional}, missForest~\cite{stekhoven2012missforest}, SOFT-IMPUTE~\cite{mazumder2010spectral}, and Generative Adversarial Imputation Nets (GAIN)~\cite{yoon2018gain}. These methods are implemented with default settings using the \textit{fancyimpute}~\footnote{\url{https://github.com/iskandr/fancyimpute}}, \textit{scikit-learn}~\cite{scikit-learn}, and \textit{DIMV Imputation} packages~\footnote{\url{https://github.com/maianhpuco/DIMVImputation}}. For regression tasks, the prediction model used is XGBoost regressor, while for classification, it is XGBoost Classifier. The code for the experiment is made available on \href{https://github.com/simulamet-host/SHAP/tree/main}{Github}.

We use global feature importance and beeswarm plots to analyze the effects of missing rate and handling missing strategy on Shapley values. Furthermore, the Mean Square Error (MSE) can also be utilized for that evaluation. For regression tasks, the MSE is calculated between the true values and the predicted values in the test set labels. For classification tasks, the MSE is computed between the imputed values and the original values (before simulating missing data) in the test set inputs. Similarly, MSESHAP is defined as the MSE between the Shapley values derived from a model fitted on the imputed data and the Shapley values derived from a model fitted on the original data. In the resulting tables and figures, MI denotes Mean Imputation, and SOFT denotes SOFT-IMPUTE. Values highlighted in boldface indicate the best performance. 


\begin{table}[h]
\centering
\begin{tabular}{cccc}
\hline
Tasks & Datasets & \#Features & \#Samples\\
\hline
\multirow{2}{*}{Regression} & California & 9 & 20,640 \\
& Diabetes & 8 & 768 \\
\hline
\multirow{2}{*}{Classification} &MNIST & (28,28) & 60,000\\
& Glass & 9 & 214\\
\hline
\end{tabular}
\caption{Descriptions of datasets used in the experiments}
\label{tab:data}
\end{table}

For the missing data, we simulated missing-completely-at-random (MCAR) data from the original dataset by randomly assigning missing values with missing rates ranging from 0.2 to 0.8. The missing rate \(r\) is defined as the ratio of the number of missing values to the total number of values in the dataset.

The experiments were run on a Windows-based machine equipped with an AMD Ryzen 7 3700X 8-Core Processor running at 3.59 GHz, 16 GB of RAM, and a 64-bit operating system. Each experiment was repeated ten times, and the results were averaged. The code for the experiments is available at \url{https://github.com/simulamet-host/SHAP}

\subsection{Global feature importance plot analysis}
In this section, we analyze the effects of missing rate and handling missing strategy on global feature importance. We focus on examining the results of the California dataset because the key insights in this case can also be seen in the Diabetes dataset. We present the Global Feature Importance plots across various missing rates for the California dataset in Figures~\ref{fig:california-r2-bar},~\ref{fig:california-r4-bar},~\ref{fig:california-r6-bar} and~\ref{fig:california-r8-bar}, while the plots for the Diabetes dataset are included in \ref{appendix-importance-plots}. In addition, it is important to note that we compare eight processing workflows: LRO (Linear Regression on Original data, considered the ground truth), XGBoost, MI, MICE, DIMV, missForest, SOFT-IMPUTE, and GAIN. In the case of XGBoost, no previous imputation is necessary as that technique can deal with missing data by default. For the last six imputation methods, we simplify the notation by using the name of the imputation method to represent the combination of that method followed by linear regression.

To begin with, regardless of the missing rates, the global feature importance consistently highlights three main predictors of data: Latitude, Longitude, and Median Income (MedInc). However, their orders seem to change compared to the plot on the original data (LRO). For instance, at a missing rate of 0.2, LRO highlights Latitude the most, then Longitude and MedInc, while all the remaining methods highlight MedInc the most.

Interestingly, across all missing rates, when comparing the global feature importance of three features (MedInc, Latitude, and Longitude) of LRO to the remaining methods, then XGBoost produces the most different plots. Specifically, XGBoost exhibits high global feature importance for those three features, with a range from $0.28$ to $0.69$. Meanwhile, for the six imputation methods, those values show a distribution similar to the results on the original data but at a lower range, from $0.05$ to $0.15$.

Additionally, as the missing rate increases from 0.2 to 0.8, there is a decrease in the global feature importance of the three mentioned features for the six comparison imputation methods, while XGBoost shows an upward trend. For example, for the MedInc feature, its global feature importance with the six imputation methods declines from around $0.15$ to $0.13$, whereas with XGBoost, it increases from $0.33$ to $0.69$.

\begin{figure}[!t]
    \centering
    \begin{subfigure}{0.45\textwidth}
        \centering
        \includegraphics[width=\textwidth]{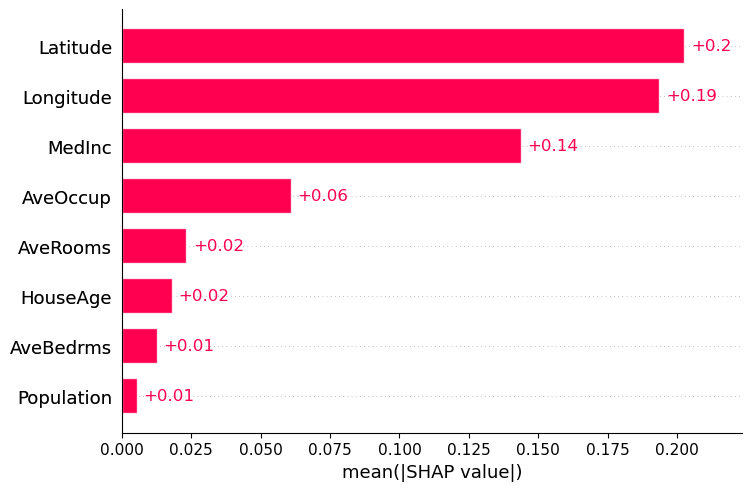}
        \caption{Original (LRO)}
        \label{fig:california-ori02}
    \end{subfigure}
    \begin{subfigure}{0.45\textwidth}
        \centering
        \includegraphics[width=\textwidth]{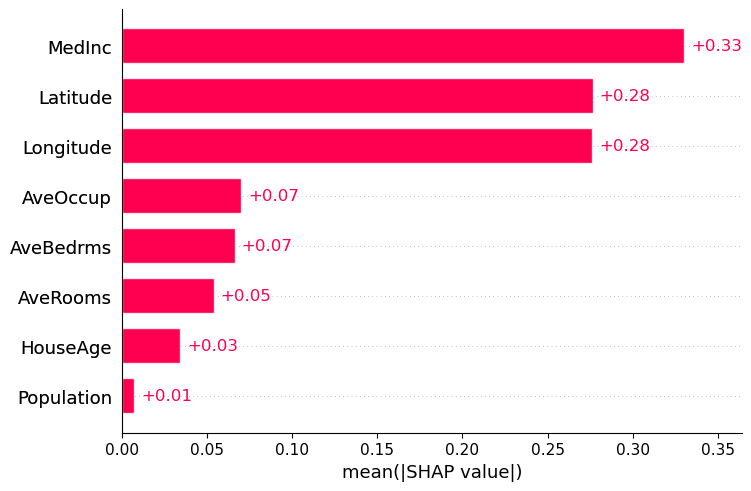}
        \caption{XGBoost without imputation}
        \label{fig:california-xm02}
    \end{subfigure}
    
    \begin{subfigure}{0.45\textwidth}
        \centering
        \includegraphics[width=\textwidth]{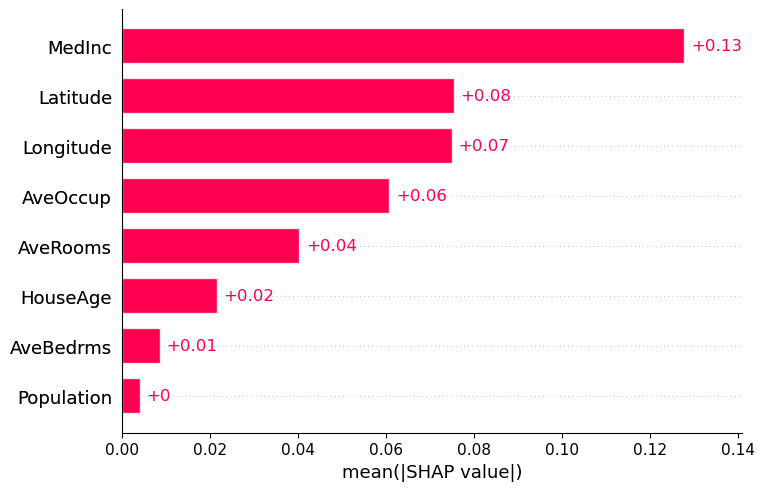}
        \caption{Mean Imputation}
        \label{fig:california-mi02}
    \end{subfigure}    
    \begin{subfigure}{0.45\textwidth}
        \centering
        \includegraphics[width=\textwidth]{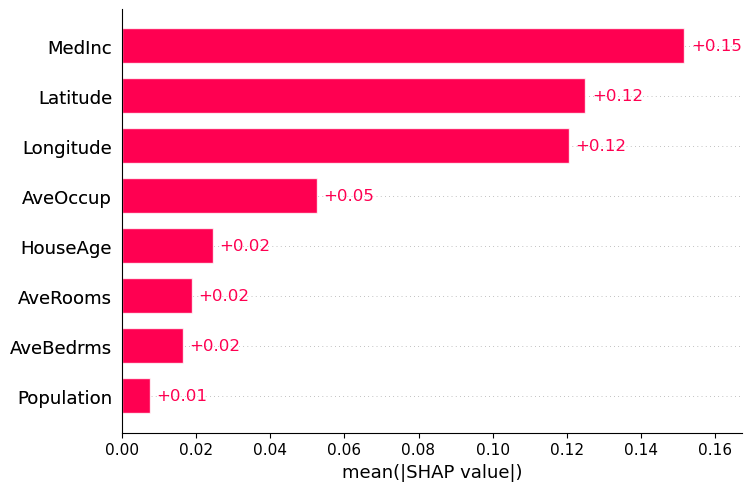}
        \caption{MICE}
        \label{fig:california-mice02}
    \end{subfigure}
    
    \begin{subfigure}{0.45\textwidth}
        \centering
        \includegraphics[width=\textwidth]{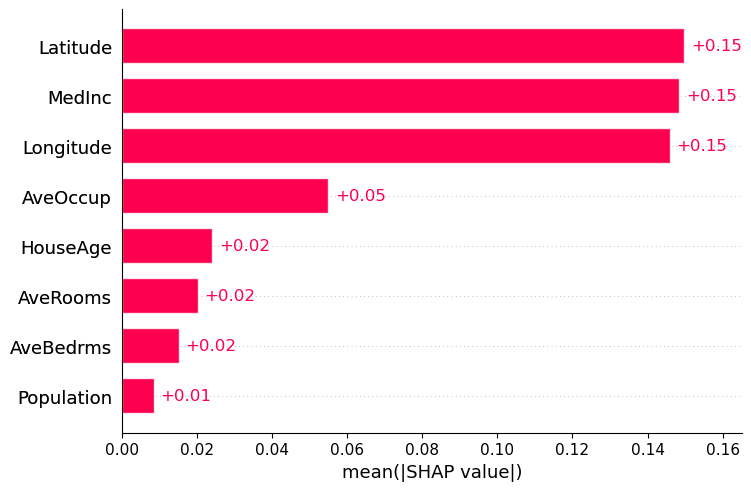}
        \caption{DIMV}
        \label{fig:california-dimv02}
    \end{subfigure}
    \begin{subfigure}{0.45\textwidth}
        \centering
        \includegraphics[width=\textwidth]{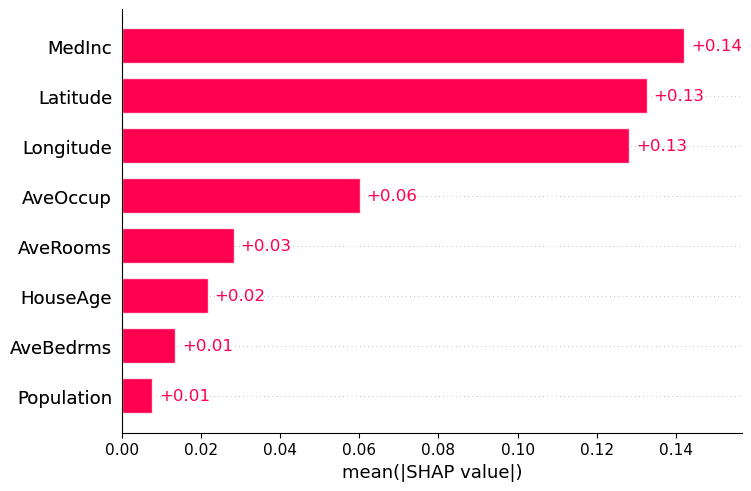}
        \caption{missForest}
        \label{fig:california-mf02}
    \end{subfigure}
    
    \begin{subfigure}{0.45\textwidth}
        \centering
        \includegraphics[width=\textwidth]{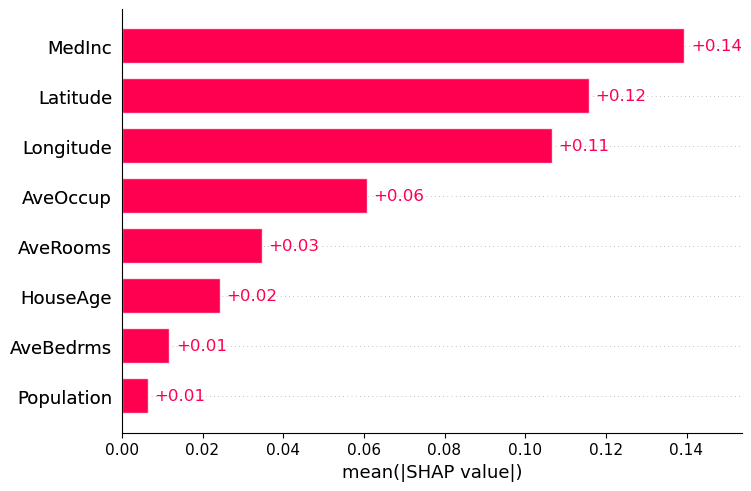}
        \caption{SOFT-IMPUTE}
        \label{fig:california-soft02}
    \end{subfigure}
    \begin{subfigure}{0.45\textwidth}
        \centering
        \includegraphics[width=\textwidth]{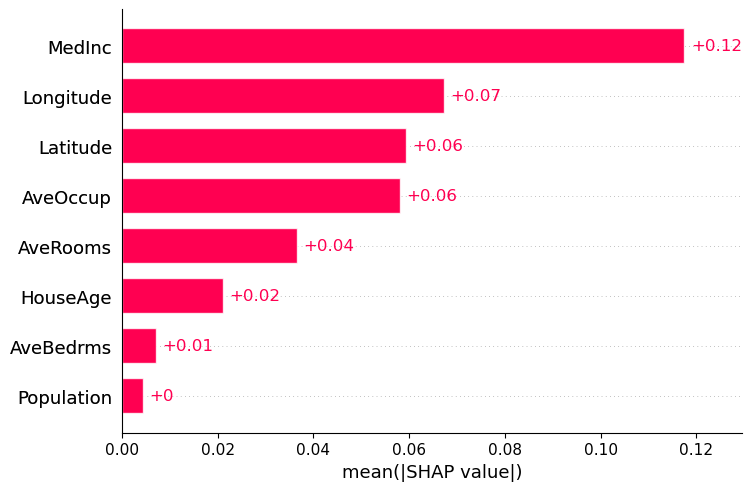}
        \caption{GAIN}
        \label{fig:california-gain02}
    \end{subfigure}
    
    \caption{Global feature importance plot on the California dataset with the missing rate $r=0.2$}
    \label{fig:california-r2-bar}
\end{figure}

\begin{figure}[]
    \centering
    \begin{subfigure}{0.45\textwidth}
        \centering
        \includegraphics[width=\textwidth]{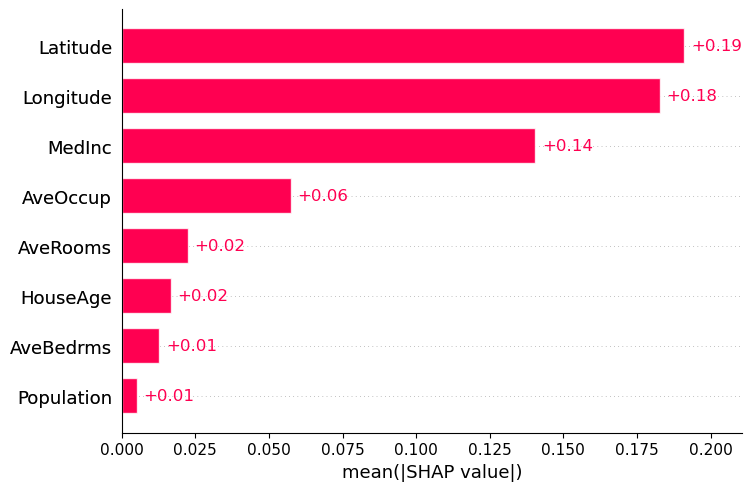}
        \caption{Original (LRO)}
        \label{fig:california-ori04}
    \end{subfigure}
    \begin{subfigure}{0.45\textwidth}
        \centering
        \includegraphics[width=\textwidth]{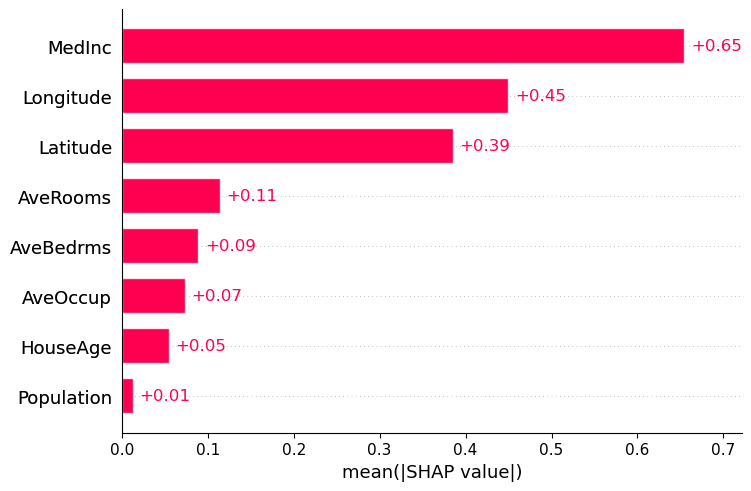}
        \caption{XGBoost without imputation}
        \label{fig:california-xm04}
    \end{subfigure}
    
    \begin{subfigure}{0.45\textwidth}
        \centering
        \includegraphics[width=\textwidth]{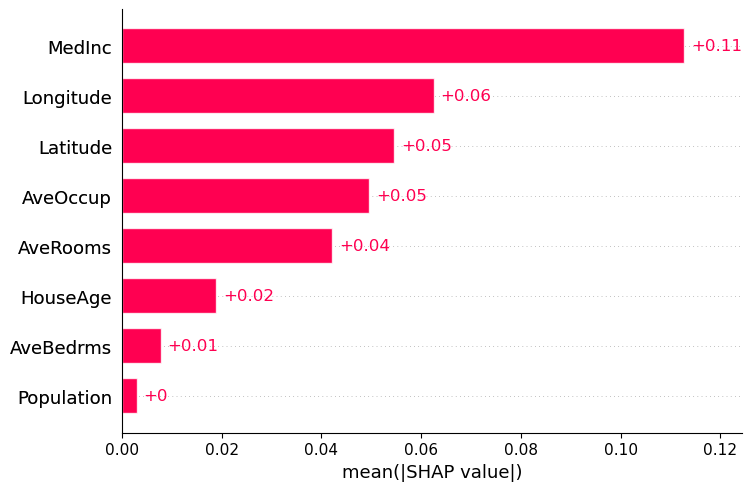}
        \caption{Mean Imputation}
        \label{fig:california-mi04}
    \end{subfigure}    
    \begin{subfigure}{0.45\textwidth}
        \centering
        \includegraphics[width=\textwidth]{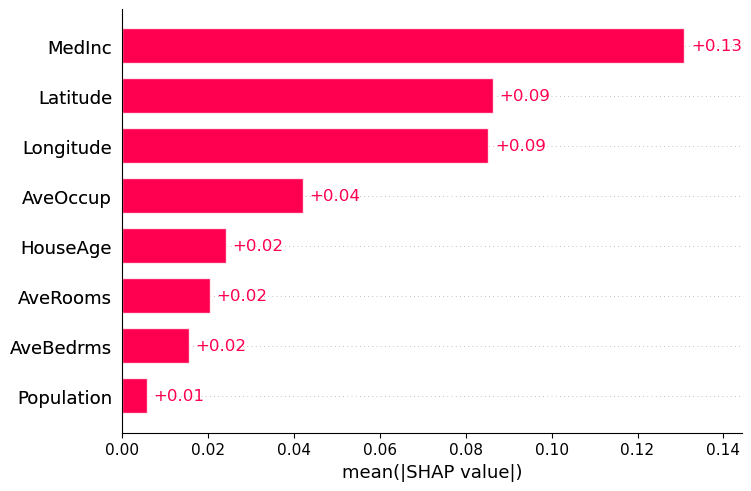}
        \caption{MICE}
        \label{fig:california-mice04}
    \end{subfigure}
    
    \begin{subfigure}{0.45\textwidth}
        \centering
        \includegraphics[width=\textwidth]{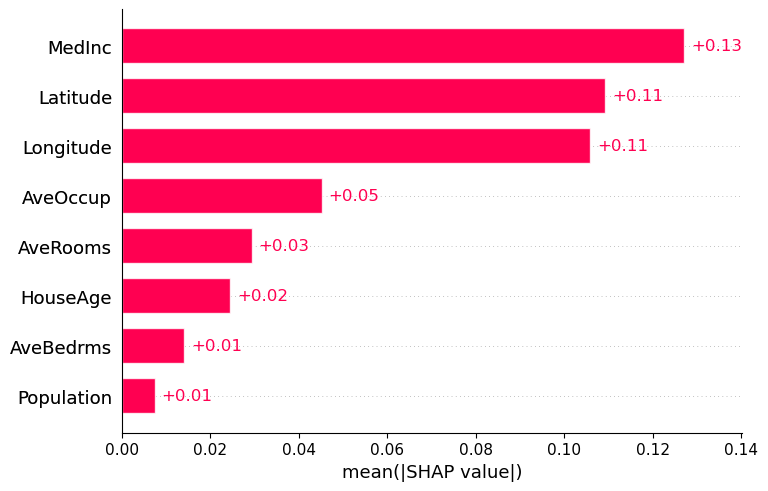}
        \caption{DIMV}
        \label{fig:california-dimv04}
    \end{subfigure}
    \begin{subfigure}{0.45\textwidth}
        \centering
        \includegraphics[width=\textwidth]{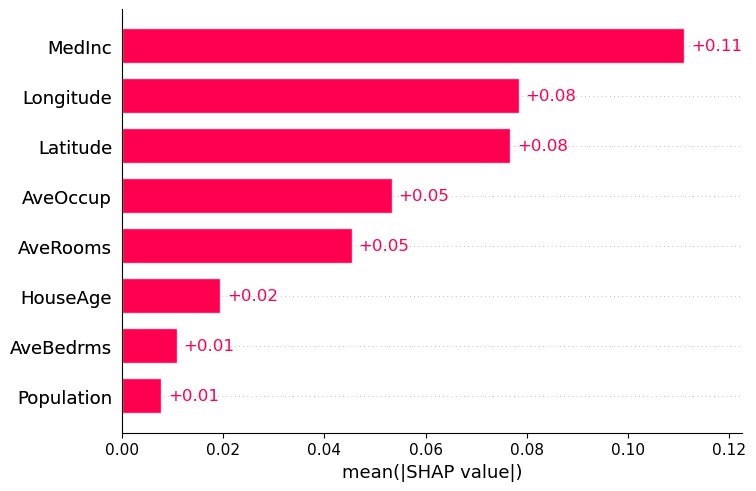}
        \caption{missForest}
        \label{fig:california-mf04}
    \end{subfigure}
    
    \begin{subfigure}{0.45\textwidth}
        \centering
        \includegraphics[width=\textwidth]{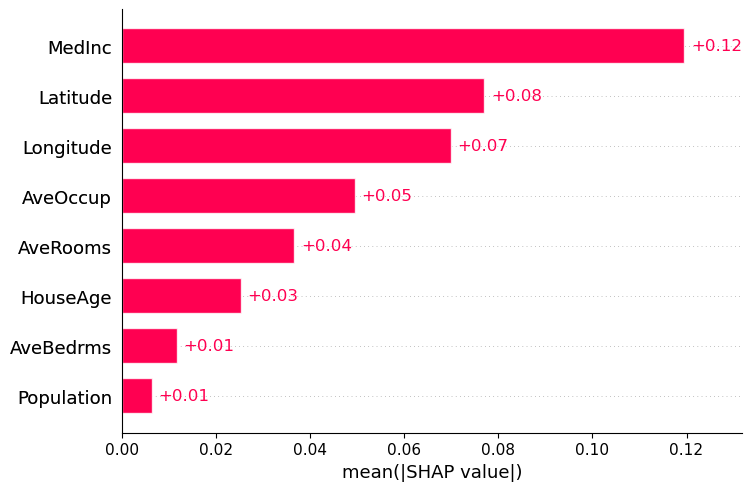}
        \caption{SOFT-IMPUTE}
        \label{fig:california-soft04}
    \end{subfigure}
    \begin{subfigure}{0.45\textwidth}
        \centering
        \includegraphics[width=\textwidth]{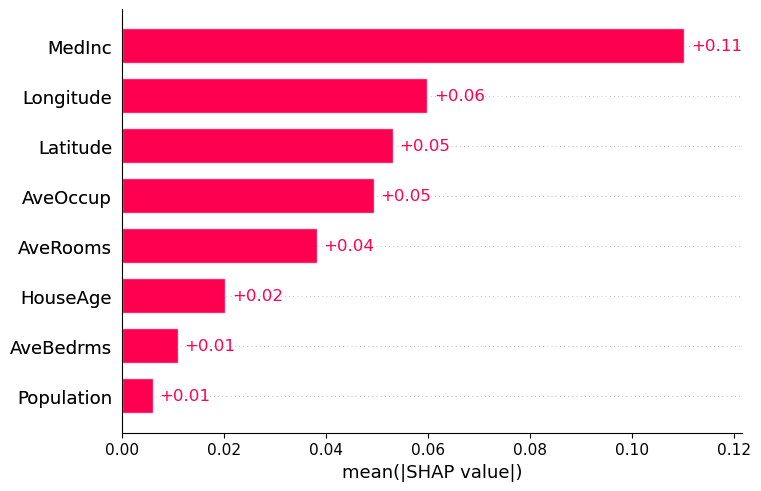}
        \caption{GAIN}
        \label{fig:california-gain04}
    \end{subfigure}
    
    \caption{Global feature importance plot on the California dataset with the missing rate $r=0.4$}
    \label{fig:california-r4-bar}
\end{figure}

\begin{figure}[]
    \centering
    \begin{subfigure}{0.45\textwidth}
        \centering
        \includegraphics[width=\textwidth]{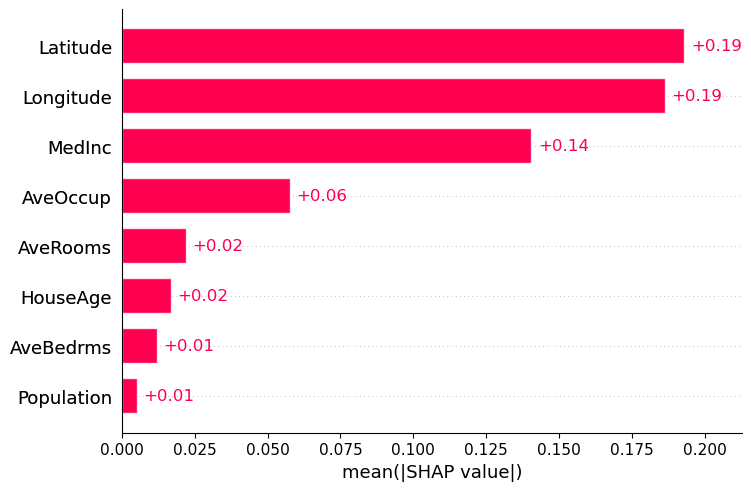}
        \caption{Original (LRO)}
        \label{fig:california-ori06}
    \end{subfigure}
    \begin{subfigure}{0.45\textwidth}
        \centering
        \includegraphics[width=\textwidth]{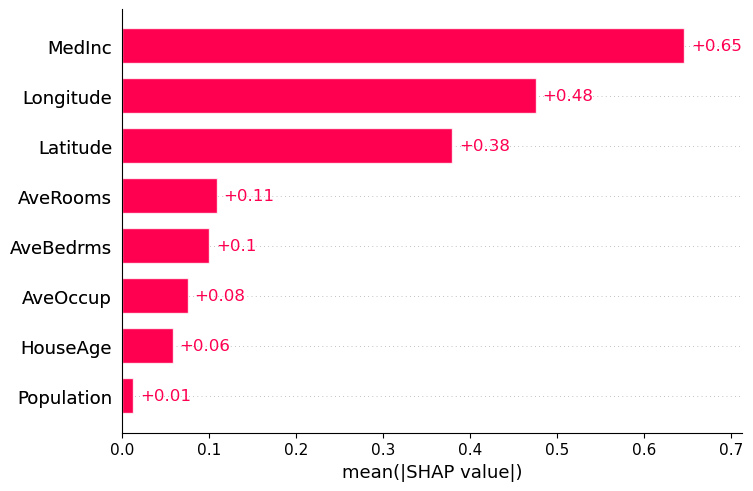}
        \caption{XGBoost without imputation}
        \label{fig:california-xm06}
    \end{subfigure}
    
    \begin{subfigure}{0.45\textwidth}
        \centering
        \includegraphics[width=\textwidth]{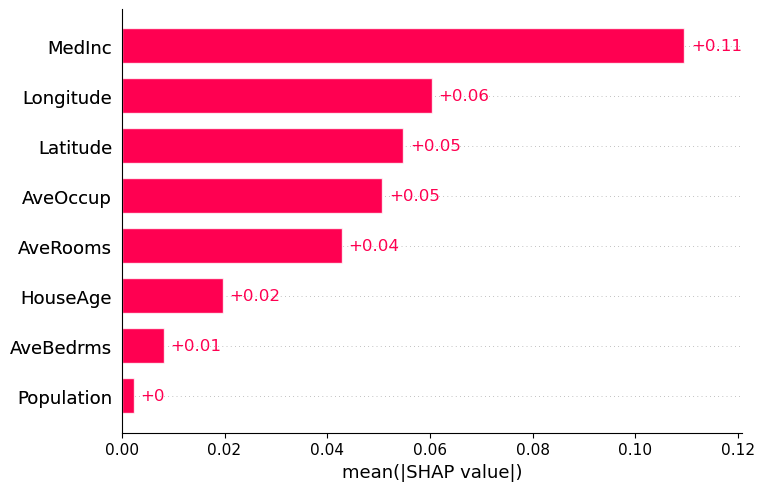}
        \caption{Mean Imputation}
        \label{fig:california-mi06}
    \end{subfigure}    
    \begin{subfigure}{0.45\textwidth}
        \centering
        \includegraphics[width=\textwidth]{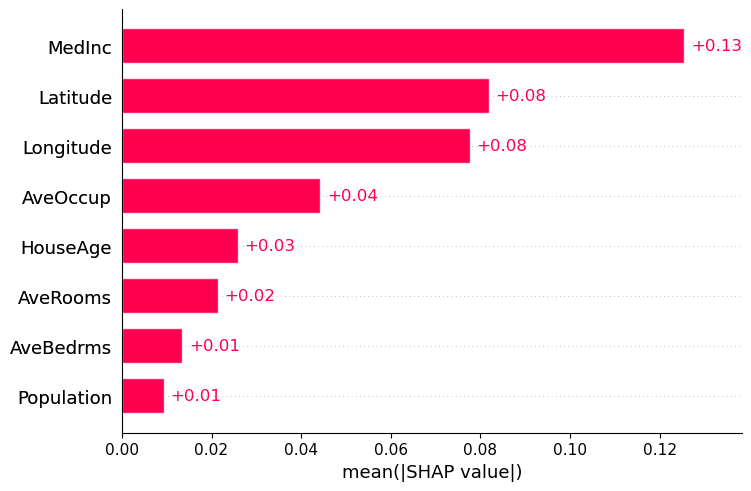}
        \caption{MICE}
        \label{fig:california-mice06}
    \end{subfigure}
    
    \begin{subfigure}{0.45\textwidth}
        \centering
        \includegraphics[width=\textwidth]{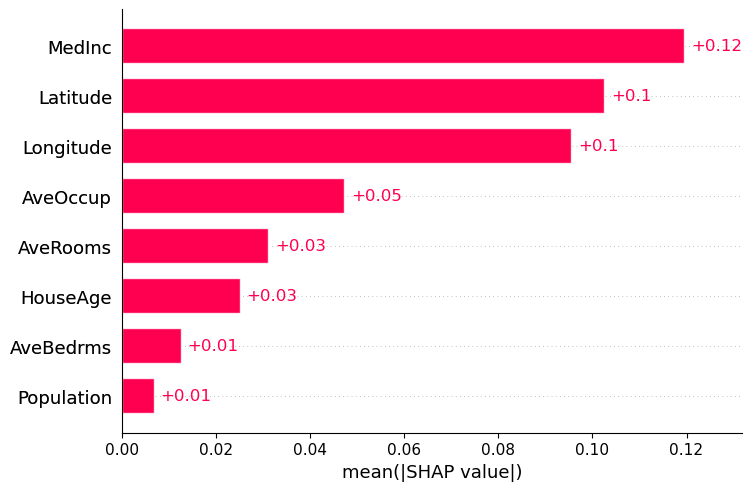}
        \caption{DIMV}
        \label{fig:california-dimv06}
    \end{subfigure}
    \begin{subfigure}{0.45\textwidth}
        \centering
        \includegraphics[width=\textwidth]{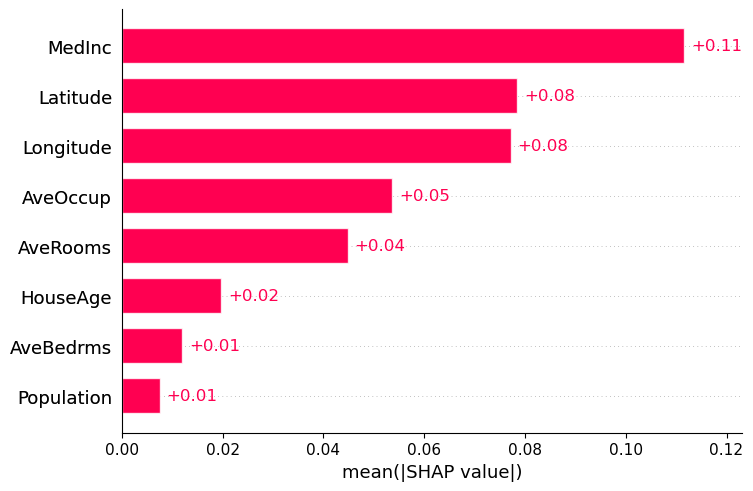}
        \caption{missForest}
        \label{fig:california-mf06}
    \end{subfigure}
    
    \begin{subfigure}{0.45\textwidth}
        \centering
        \includegraphics[width=\textwidth]{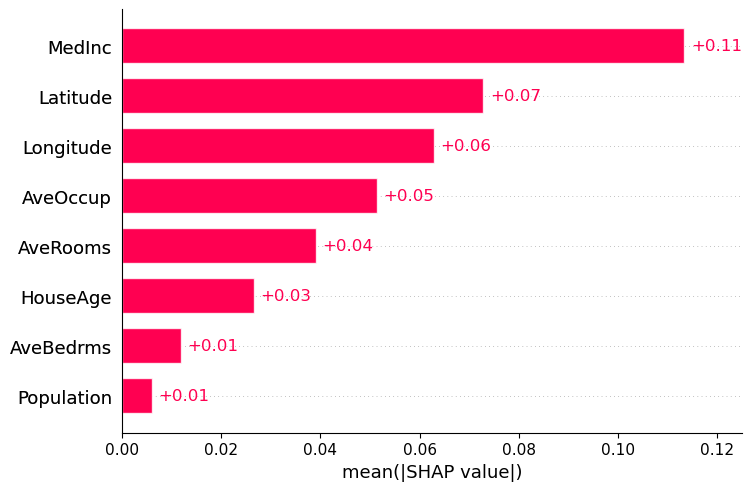}
        \caption{SOFT-IMPUTE}
        \label{fig:california-soft06}
    \end{subfigure}
    \begin{subfigure}{0.45\textwidth}
        \centering
        \includegraphics[width=\textwidth]{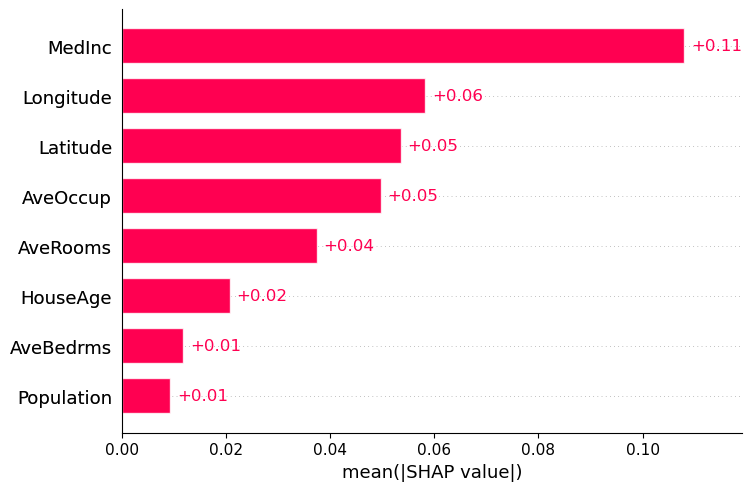}
        \caption{GAIN}
        \label{fig:california-gain06}
    \end{subfigure}
    
    \caption{Global feature importance plot on the California dataset with the missing rate $r=0.6$}
    \label{fig:california-r6-bar}
\end{figure}

\begin{figure}[]
    \centering
    \begin{subfigure}{0.45\textwidth}
        \centering
        \includegraphics[width=\textwidth]{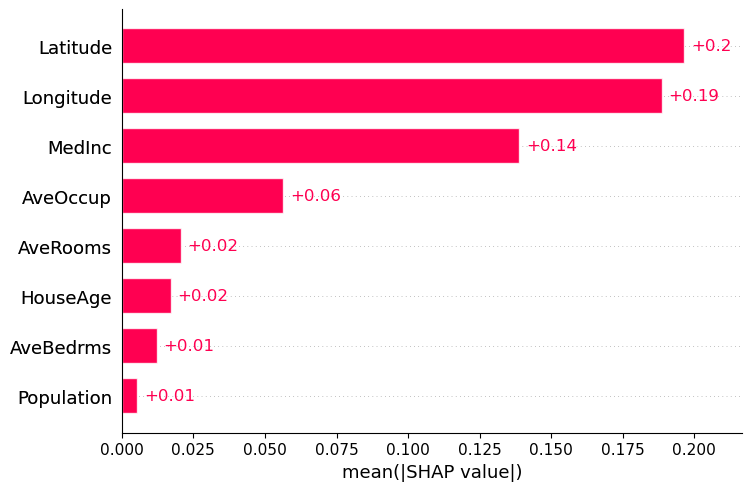}
        \caption{Original (LRO)}
        \label{fig:california-ori08}
    \end{subfigure}
    \begin{subfigure}{0.45\textwidth}
        \centering
        \includegraphics[width=\textwidth]{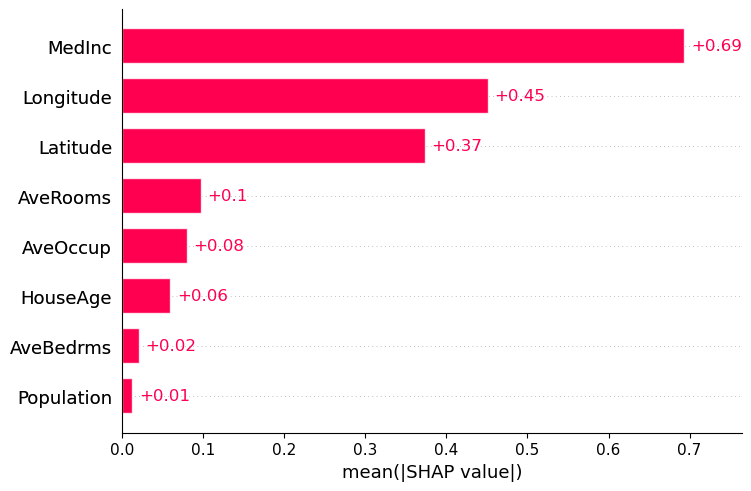}
        \caption{XGBoost without imputation}
        \label{fig:california-xm08}
    \end{subfigure}
    
    \begin{subfigure}{0.45\textwidth}
        \centering
        \includegraphics[width=\textwidth]{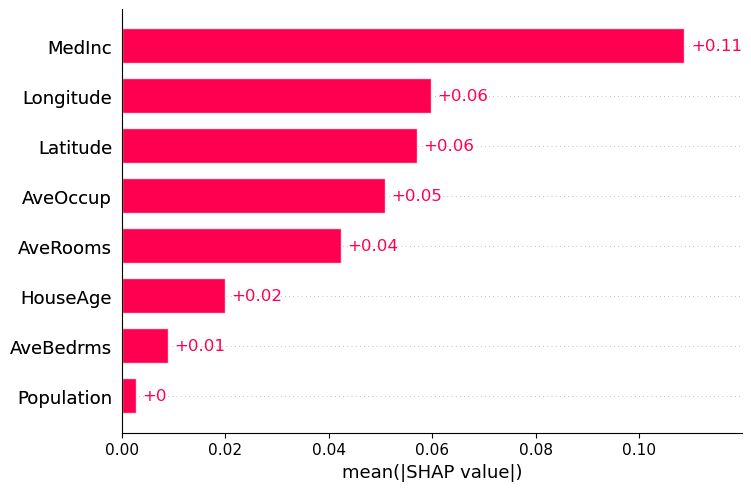}
        \caption{Mean Imputation}
        \label{fig:california-mi08}
    \end{subfigure}
    \begin{subfigure}{0.45\textwidth}
        \centering
        \includegraphics[width=\textwidth]{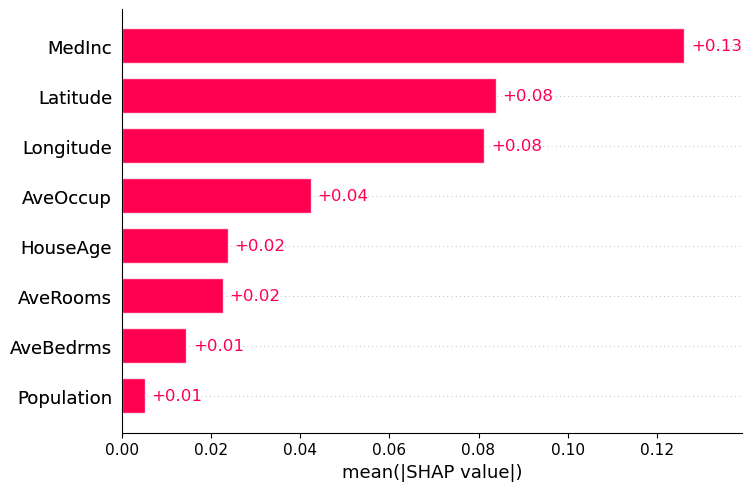}
        \caption{MICE}
        \label{fig:california-mice08}
    \end{subfigure}
    
    \begin{subfigure}{0.45\textwidth}
        \centering
        \includegraphics[width=\textwidth]{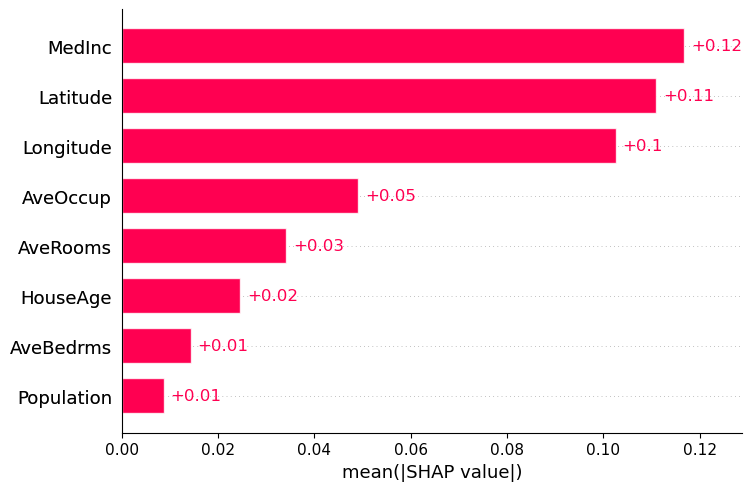}
        \caption{DIMV}
        \label{fig:california-dimv08}
    \end{subfigure}
    \begin{subfigure}{0.45\textwidth}
        \centering
        \includegraphics[width=\textwidth]{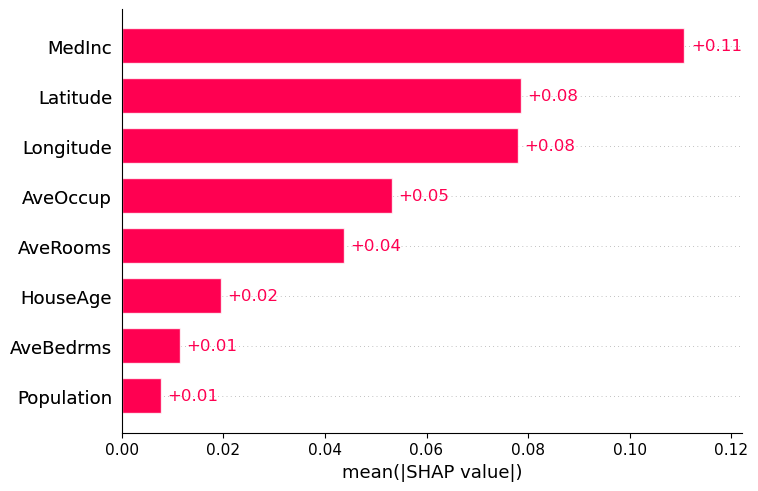}
        \caption{missForest}
        \label{fig:california-mf08}
    \end{subfigure}
    
    \begin{subfigure}{0.45\textwidth}
        \centering
        \includegraphics[width=\textwidth]{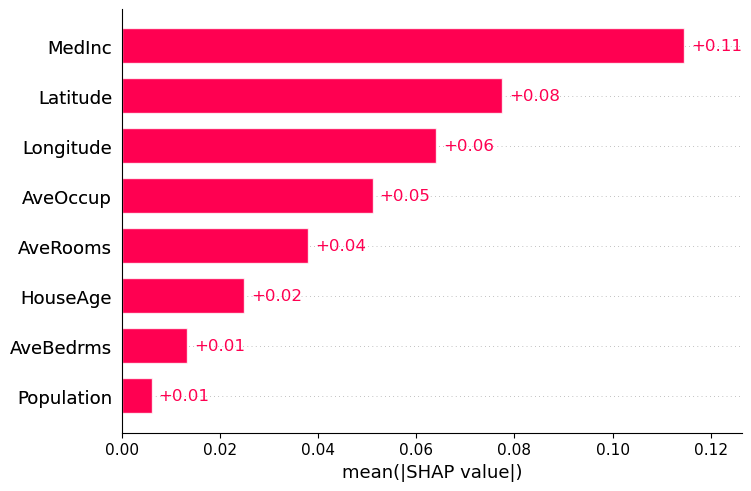}
        \caption{SOFT-IMPUTE}
        \label{fig:california-soft08}
    \end{subfigure}
    \begin{subfigure}{0.45\textwidth}
        \centering
        \includegraphics[width=\textwidth]{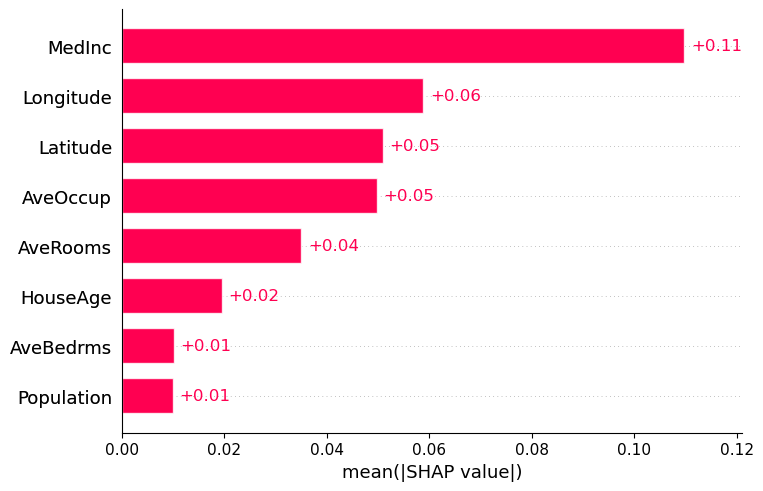}
        \caption{GAIN}
        \label{fig:california-gain08}
    \end{subfigure}
    
    \caption{Global feature importance plot on the California dataset with the missing rate $r=0.8$}
    \label{fig:california-r8-bar}
\end{figure}

\subsection{Beeswarm plot analysis}
While the global plot highlights overall feature importance, it doesn’t reveal how individual feature values impact predictions. The beeswarm plot fills this gap by showing the distribution and direction of Shapley values for each feature across all instances, giving insight into how specific feature values drive predictions up or down. Therefore, in this section, we deeply analyze the effects of feature value, missing rate, and handling missing strategy on the Shapley values by using the beeswarm plot. Note that the gray points in the beeswarm plots represent missing values.

\subsubsection{Beeswarm plot for the California dataset}
The beeswarm plot for the California datasets at missing rates $0.2, 0.4, 0.6,$ and $0.8$ are presented in Figures \ref{fig:beeswarm_plots02}, \ref{fig:beeswarm_plots04}, \ref{fig:beeswarm_plots06}, and \ref{fig:beeswarm_plots08}, respectively. In these figures, we focus on three key features (Latitude, Longitude, and MedInc) because the others show a relatively small impact.

Firstly, we examine the effect of feature values. Notably, feature values consistently influence the Shapley values across all comparison methods and missing rates. For example in Figure~\ref{fig:beeswarm_plots02}, at a missing rate r = 0.2, for the MedInc feature, its low value has a negative impact whereas its high value has a positive impact on model output regardless of all comparison methods.

Secondly, considering the results at each missing rate, we observed similar distributions in the Shapley values for the three key features on both LRO and six imputation methods. However, XGBoost shows a different distribution that is more skewed from zero. For instance, at a missing rate of 0.2 in Figure~\ref{fig:beeswarm_plots02}, the Shapley values in LRO for Latitude and Longitude range from $(-0.75)$ to $0.75$. Likewise, that range for six imputation methods is from around $(-0.5)$ to $0.5$. Meanwhile, XGBoost displays a different distribution from $(-1.2)$ to $0.6$ for Latitude and Longitude.

Finally, we focus on XGBoost without imputation. XGBoost seems to be stable across all missing rates. Especially, missing values always strongly affect the model output. For instance, at a missing rate of 0.4 in Figure~\ref{fig:beeswarm_plots04}, the missing values in the MedInc feature exhibit a strong positive impact, with Shapley values reaching as high as approximately $1.6$ or the missing values in the Longitude and Latitude features show a strong negative effect, with Shapley values dropping to around $(-1.2)$. In other words, for XGBoost without imputation, missing values tend to skew the distribution of Shapley values, either strongly negative or strongly positive.

\begin{figure}[h!]
    \centering
    \begin{subfigure}{0.49\textwidth}
        \centering
        \includegraphics[width=\textwidth]{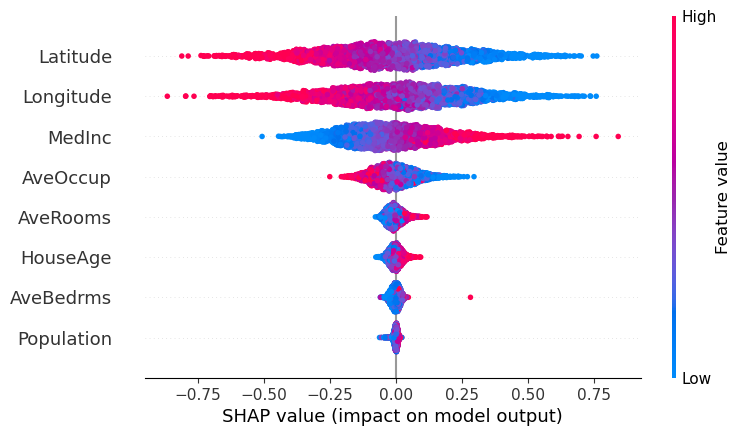}
        \caption{Original (LRO)}
    \end{subfigure}
    \begin{subfigure}{0.49\textwidth}
        \centering
        \includegraphics[width=\textwidth]{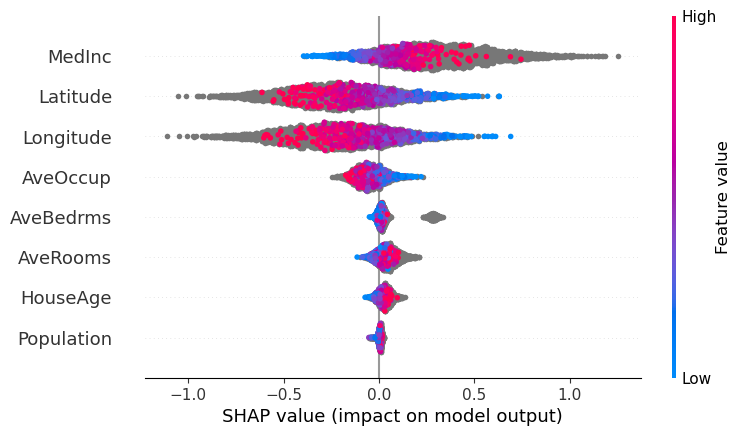}
        \caption{XGBoost without imputation}
    \end{subfigure}
    
    \begin{subfigure}{0.49\textwidth}
        \centering
        \includegraphics[width=\textwidth]{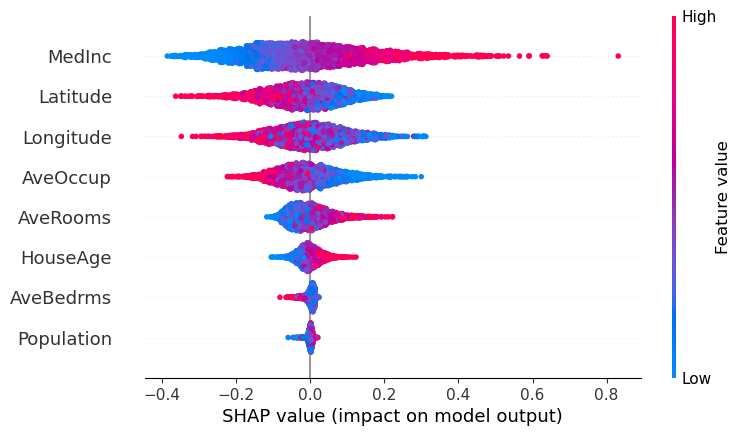}
        \caption{Mean imputation}
    \end{subfigure}    
    \begin{subfigure}{0.49\textwidth}
        \centering
        \includegraphics[width=\textwidth]{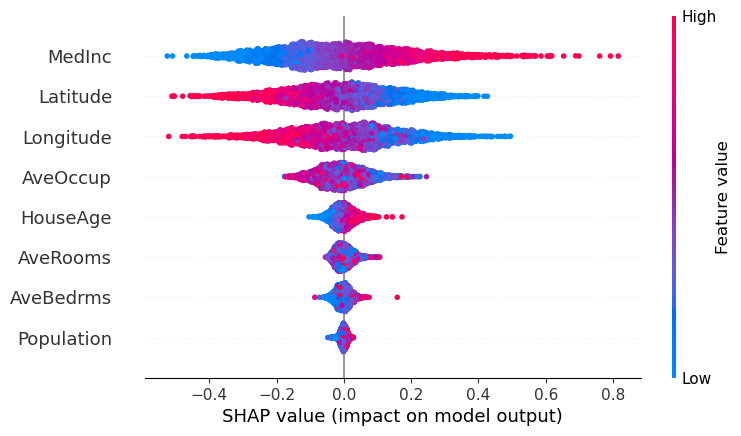}
        \caption{MICE}
    \end{subfigure}
    
    \begin{subfigure}{0.49\textwidth}
        \centering
        \includegraphics[width=\textwidth]{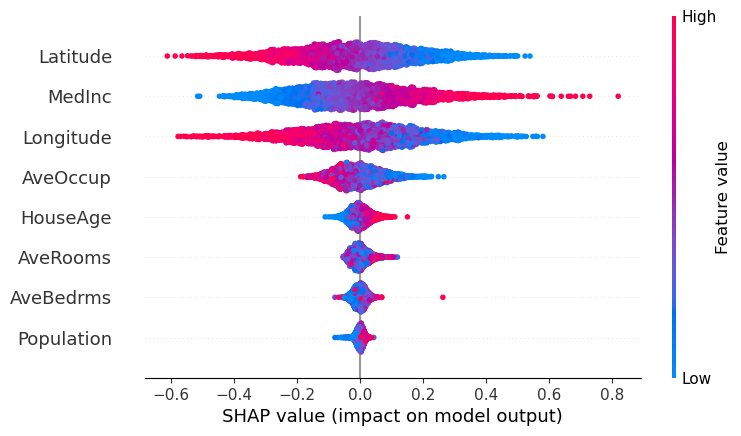}
        \caption{DIMV}
    \end{subfigure}    
    \begin{subfigure}{0.49\textwidth}
        \centering
        \includegraphics[width=\textwidth]{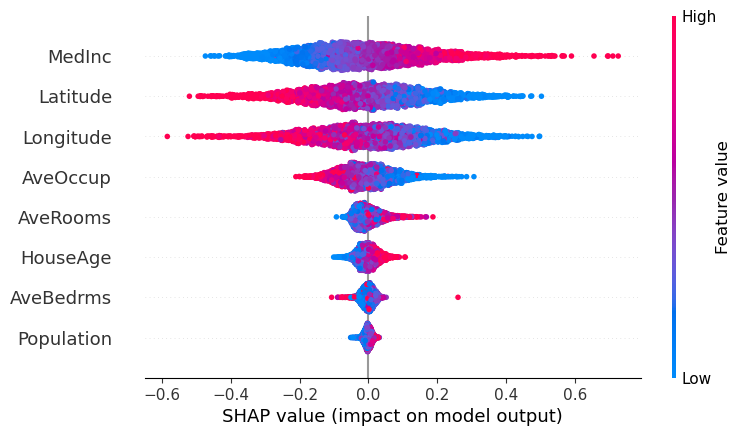}
        \caption{MissForest}
    \end{subfigure}
    
    \begin{subfigure}{0.49\textwidth}
        \centering
        \includegraphics[width=\textwidth]{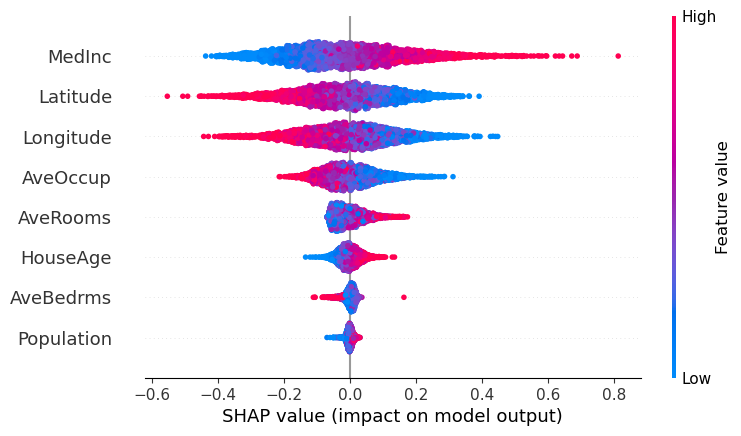}
        \caption{SOFT-IMPUTE}
    \end{subfigure}
    \begin{subfigure}{0.49\textwidth}
        \centering
        \includegraphics[width=\textwidth]{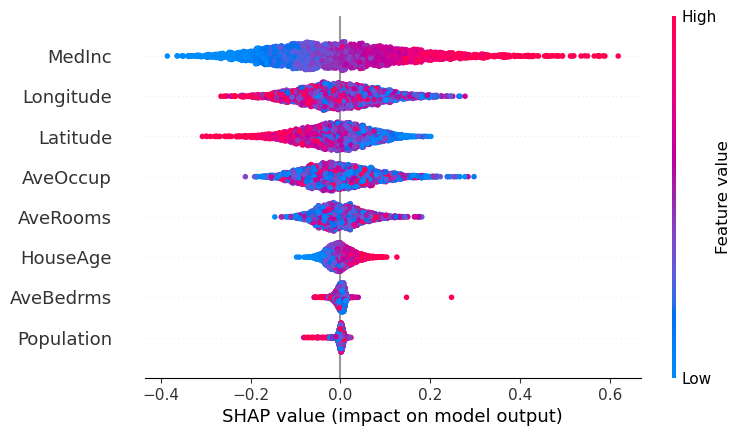}
        \caption{GAIN}
    \end{subfigure}
    
    \caption{Beeswarm plots for the California dataset at missing rate $r=0.2$ }
    \label{fig:beeswarm_plots02}
\end{figure}

\begin{figure}[h!]
    \centering
    \begin{subfigure}{0.49\textwidth}
        \centering
        \includegraphics[width=\textwidth]{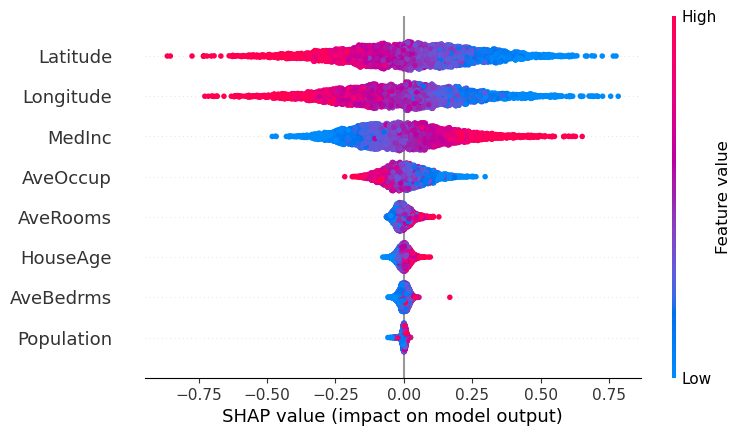}
        \caption{Original (LRO)}
    \end{subfigure}    
    \begin{subfigure}{0.49\textwidth}
        \centering
        \includegraphics[width=\textwidth]{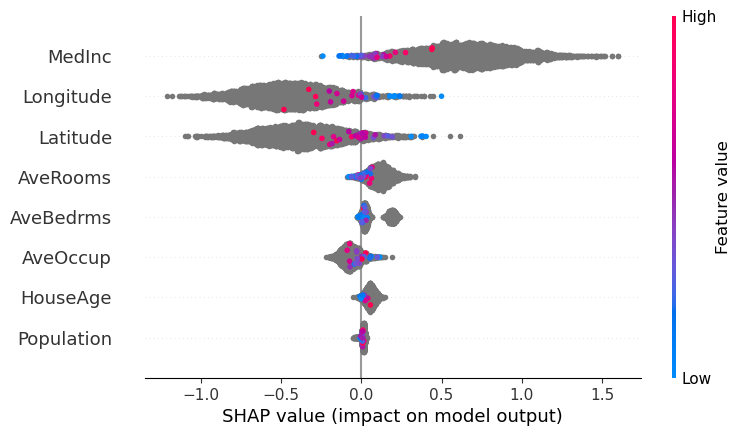}
        \caption{XGBoost without imputation}
    \end{subfigure}
    
    \begin{subfigure}{0.49\textwidth}
        \centering
        \includegraphics[width=\textwidth]{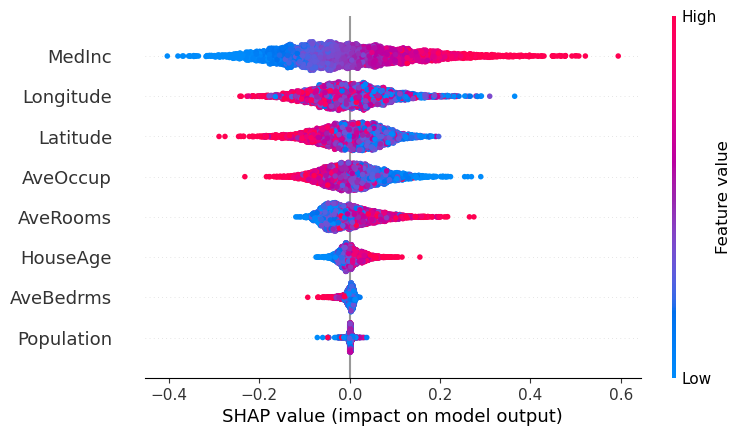}
        \caption{Mean imputation}
    \end{subfigure}    
    \begin{subfigure}{0.49\textwidth}
        \centering
        \includegraphics[width=\textwidth]{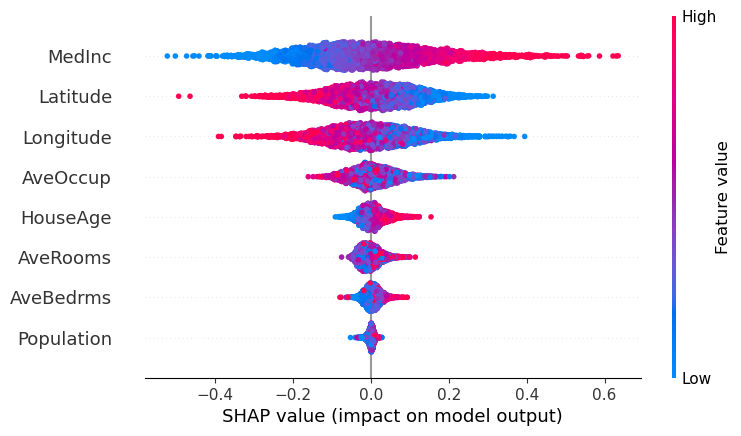}
        \caption{MICE}
    \end{subfigure}
    
    \begin{subfigure}{0.49\textwidth}
        \centering
        \includegraphics[width=\textwidth]{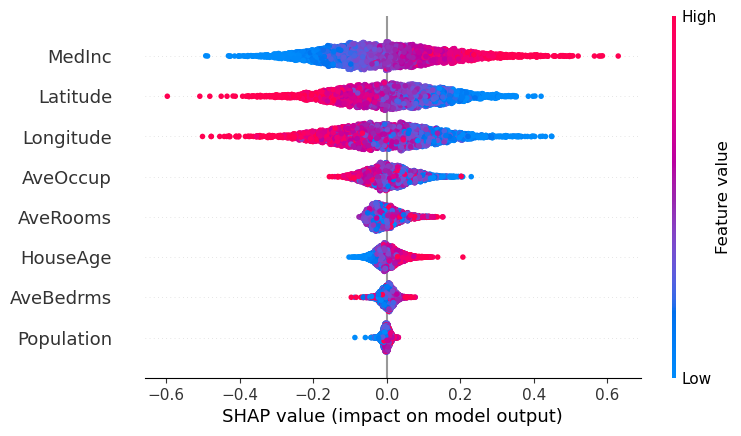}
        \caption{DIMV}
    \end{subfigure}    
    \begin{subfigure}{0.49\textwidth}
        \centering
        \includegraphics[width=\textwidth]{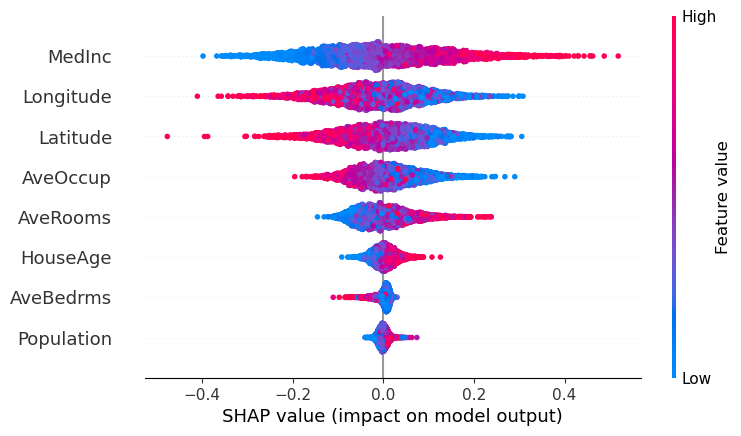}
        \caption{MissForest}
    \end{subfigure}
    
    \begin{subfigure}{0.49\textwidth}
        \centering
        \includegraphics[width=\textwidth]{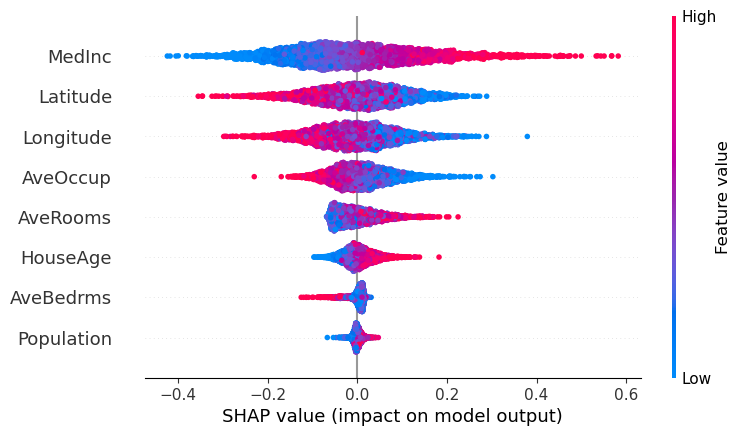}
        \caption{SOFT-IMPUTE}
    \end{subfigure}
    \begin{subfigure}{0.49\textwidth}
        \centering
        \includegraphics[width=\textwidth]{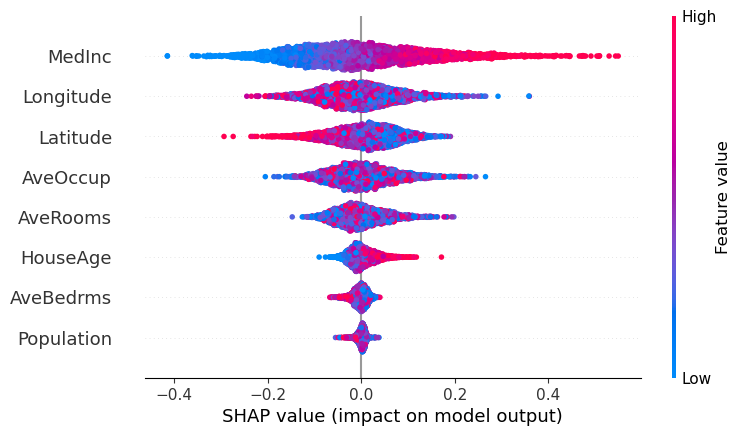}
        \caption{GAIN}
    \end{subfigure}
    
    \caption{Beeswarm plots for the California dataset at missing rate $r=0.4$ }
    \label{fig:beeswarm_plots04}
\end{figure}

\begin{figure}[h!]
    \centering
    \begin{subfigure}{0.49\textwidth}
        \centering
        \includegraphics[width=\textwidth]{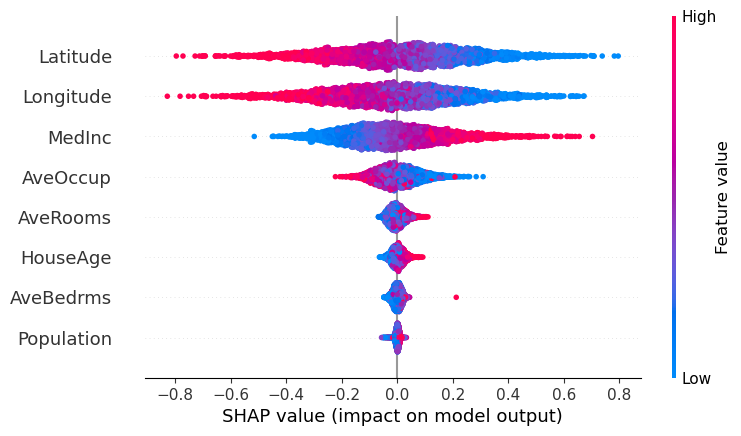}
        \caption{Original (LRO)}
    \end{subfigure}
    \begin{subfigure}{0.49\textwidth}
        \centering
        \includegraphics[width=\textwidth]{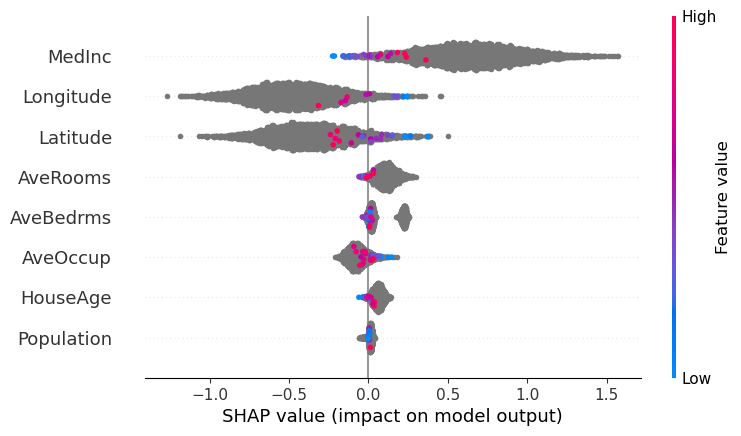}
        \caption{XGBoost without imputation}
    \end{subfigure}
    
    \begin{subfigure}{0.49\textwidth}
        \centering
        \includegraphics[width=\textwidth]{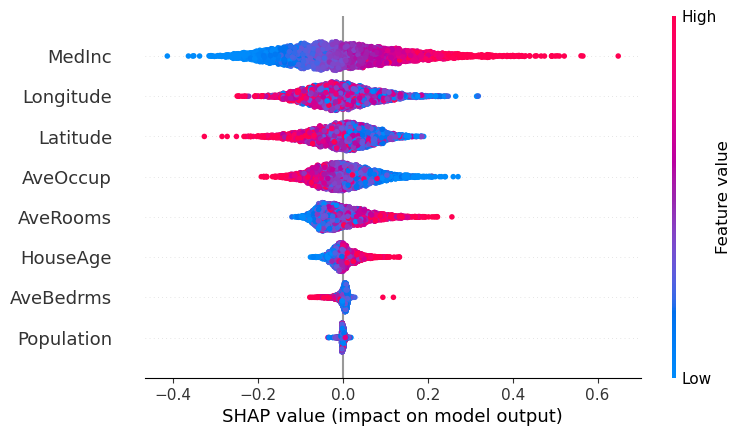}
        \caption{Mean imputation}
    \end{subfigure}    
    \begin{subfigure}{0.49\textwidth}
        \centering
        \includegraphics[width=\textwidth]{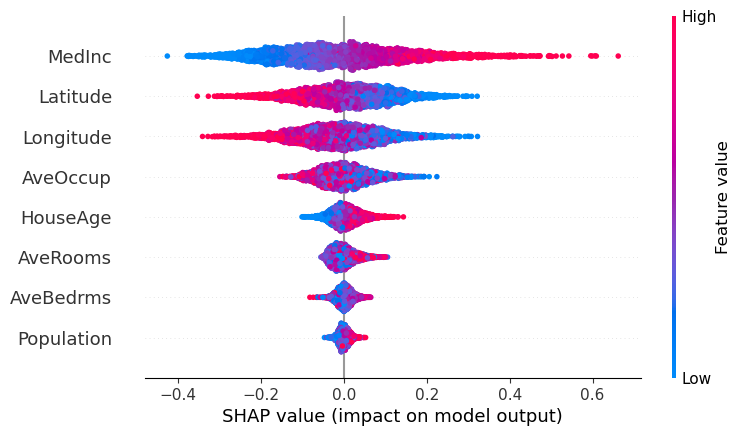}
        \caption{MICE}
    \end{subfigure}
    
    \begin{subfigure}{0.49\textwidth}
        \centering
        \includegraphics[width=\textwidth]{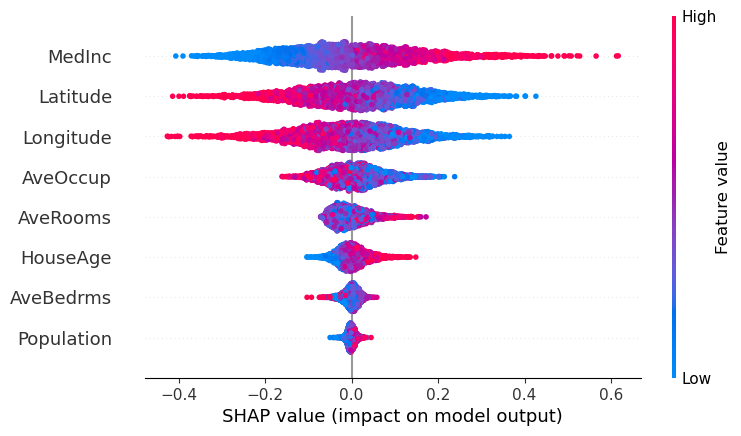}
        \caption{DIMV}
    \end{subfigure}    
    \begin{subfigure}{0.49\textwidth}
        \centering
        \includegraphics[width=\textwidth]{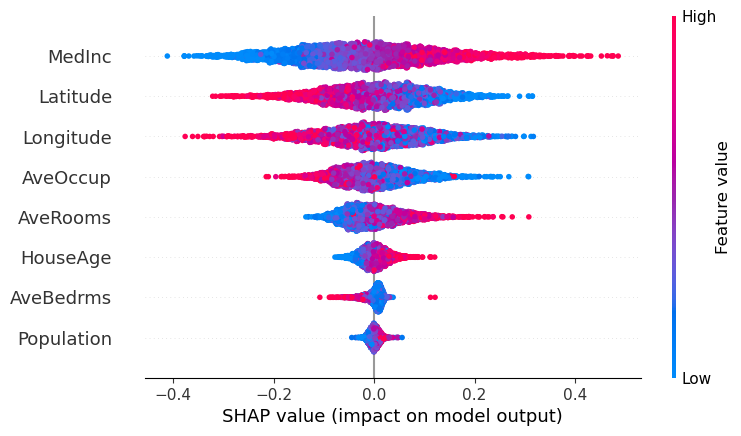}
        \caption{MissForest}
    \end{subfigure}
    
    \begin{subfigure}{0.49\textwidth}
        \centering
        \includegraphics[width=\textwidth]{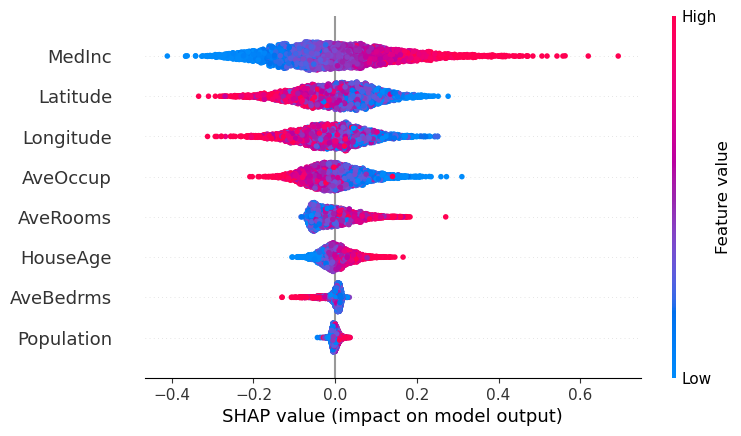}
        \caption{SOFT-IMPUTE}
    \end{subfigure}
    \begin{subfigure}{0.49\textwidth}
        \centering
        \includegraphics[width=\textwidth]{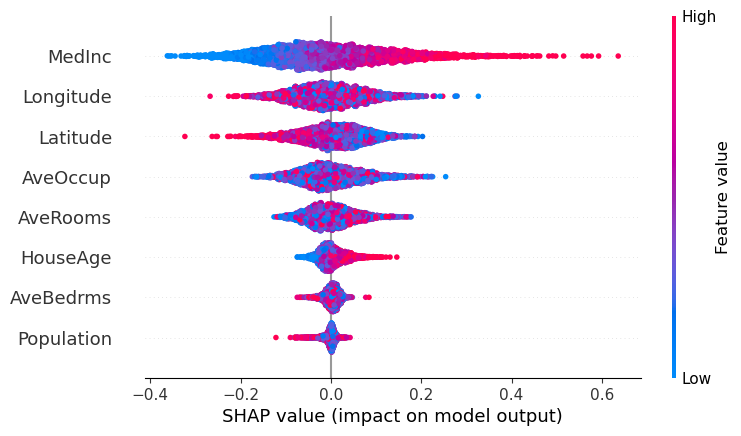}
        \caption{GAIN}
    \end{subfigure}
    
    \caption{Beeswarm plots for the California dataset at missing rate $r=0.6$ }
    \label{fig:beeswarm_plots06}
\end{figure}

\begin{figure}[h!]
    \centering
    \begin{subfigure}{0.49\textwidth}
        \centering
        \includegraphics[width=\textwidth]{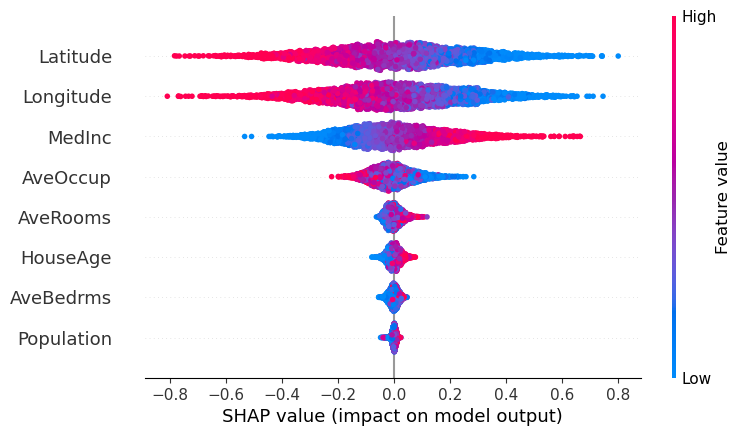}
        \caption{Original (LRO)}
    \end{subfigure}    
    \begin{subfigure}{0.49\textwidth}
        \centering
        \includegraphics[width=\textwidth]{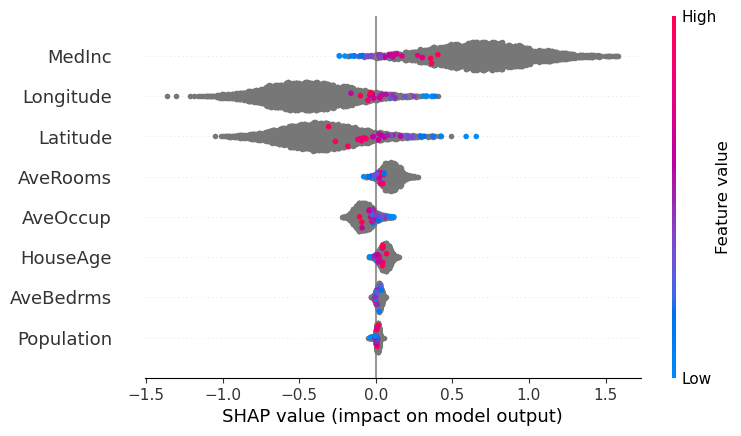}
        \caption{XGBoost without imputation}
    \end{subfigure}
    
    \begin{subfigure}{0.49\textwidth}
        \centering
        \includegraphics[width=\textwidth]{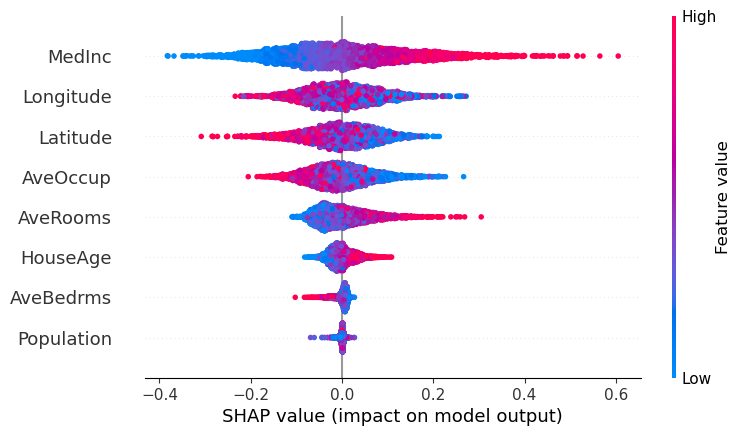}
        \caption{Mean imputation}
    \end{subfigure}    
    \begin{subfigure}{0.49\textwidth}
        \centering
        \includegraphics[width=\textwidth]{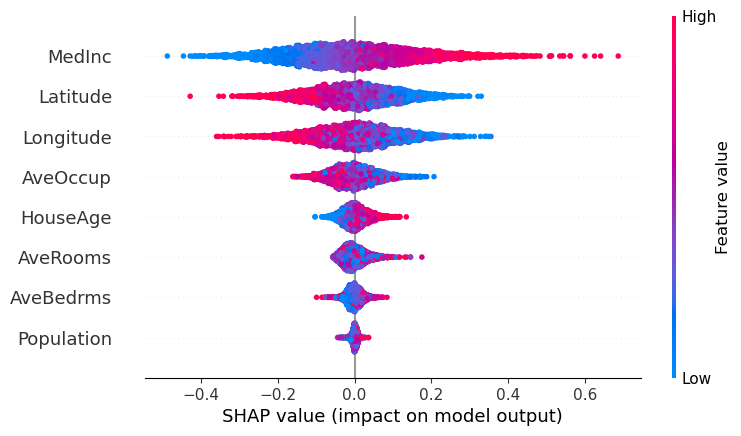}
        \caption{MICE}
    \end{subfigure}
    
    \begin{subfigure}{0.49\textwidth}
        \centering
        \includegraphics[width=\textwidth]{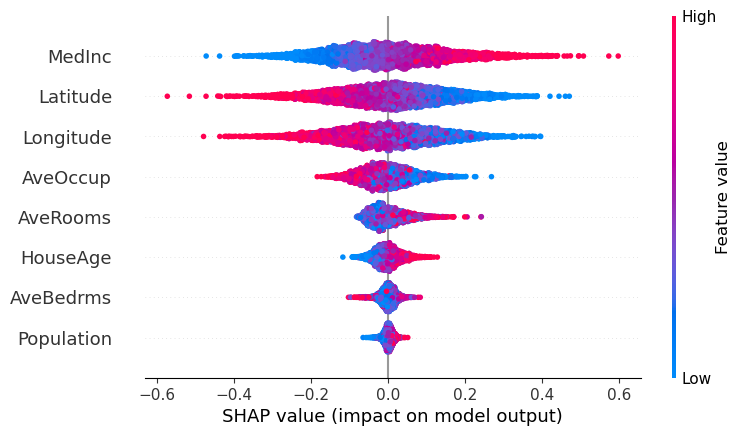}
        \caption{DIMV}
    \end{subfigure}    
    \begin{subfigure}{0.49\textwidth}
        \centering
        \includegraphics[width=\textwidth]{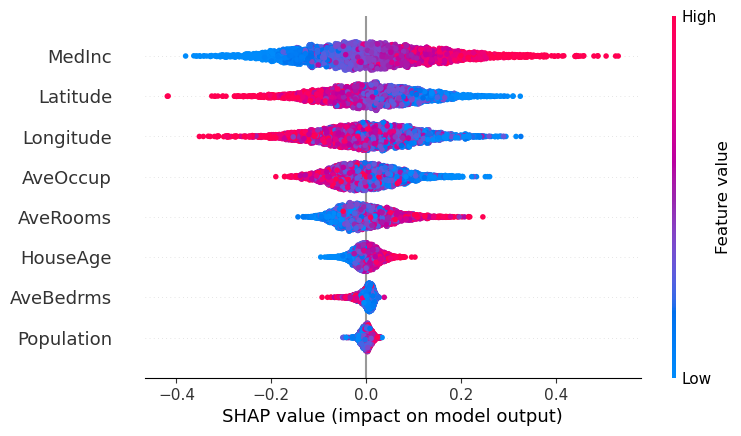}
        \caption{MissForest}
    \end{subfigure}
    
    \begin{subfigure}{0.49\textwidth}
        \centering
        \includegraphics[width=\textwidth]{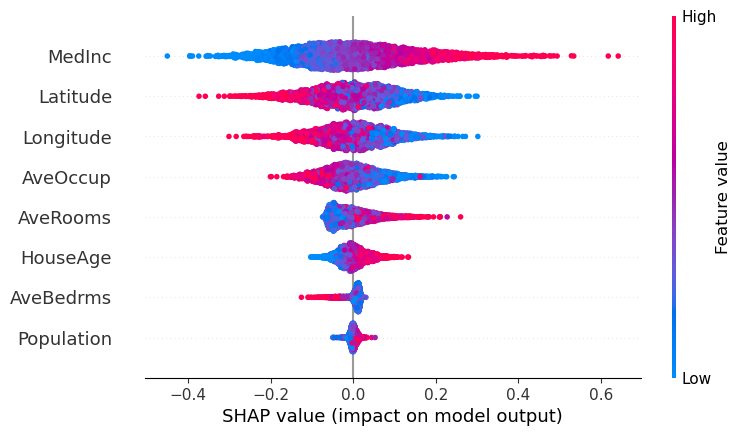}
        \caption{SOFT-IMPUTE}
    \end{subfigure}
    \begin{subfigure}{0.49\textwidth}
        \centering
        \includegraphics[width=\textwidth]{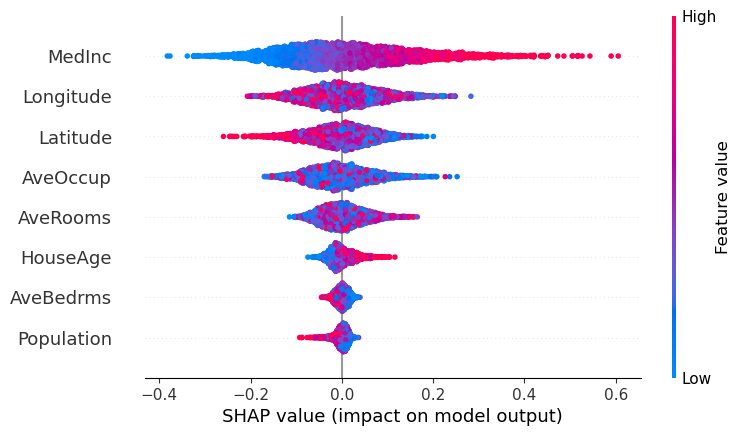}
        \caption{GAIN}
    \end{subfigure}
    
    \caption{Beeswarm plots for the California dataset at missing rate $r=0.8$ }
    \label{fig:beeswarm_plots08}
\end{figure}

\subsubsection{Beeswarm plot for the Diabetes dataset}
The beeswarm plot for the Diabetes dataset at missing rates $0.2, 0.4, 0.6,$ and $0.8$ are presented in Figures \ref{fig:diabetes-beeswarm_plots02}, \ref{fig:diabetes-beeswarm_plots04}, \ref{fig:diabetes-beeswarm_plots06}, and \ref{fig:diabetes-beeswarm_plots08}, respectively.
A similar pattern in analyzing the impact of feature values on Shapley values can be seen, where feature values consistently influence Shapley values across all comparison methods and missing rates.

Regarding the distribution of Shapley values, across all missing rates, LRO and imputation methods exhibit a rather balanced distribution around the zero value. In contrast, the XGBoost model shows an increasingly skewed distribution as the missing rate rises. For example, focusing on the ``bmi" feature in Figure~\ref{fig:diabetes-beeswarm_plots02}, most of the Shapley values for LRO and imputation methods range from $(-0.25)$ to $0.25$, while for XGBoost, they range from $(-0.15)$ to $0.45$.

Considering XGBoost without imputation, as the missing rates increase, the Shapley values of missing values become increasingly skewed to one side. For instance, in Figure~\ref{fig:diabetes-beeswarm_plots08}, at a missing rate of 0.8, the Shapley values for the features ``bmi", ``bp", ``s5", and ``s6" are consistently positive, while those for ``s2" and ``s1" are predominantly negative.

\begin{figure}[h!]
    \centering
    \begin{subfigure}{0.42\textwidth}
        \centering
        \includegraphics[width=\textwidth]{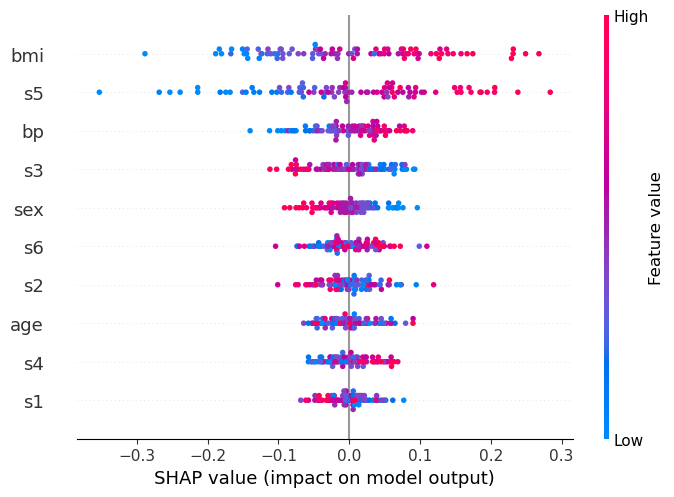}
        \caption{Original (LRO)}
    \end{subfigure}    
    \begin{subfigure}{0.42\textwidth}
        \centering
        \includegraphics[width=\textwidth]{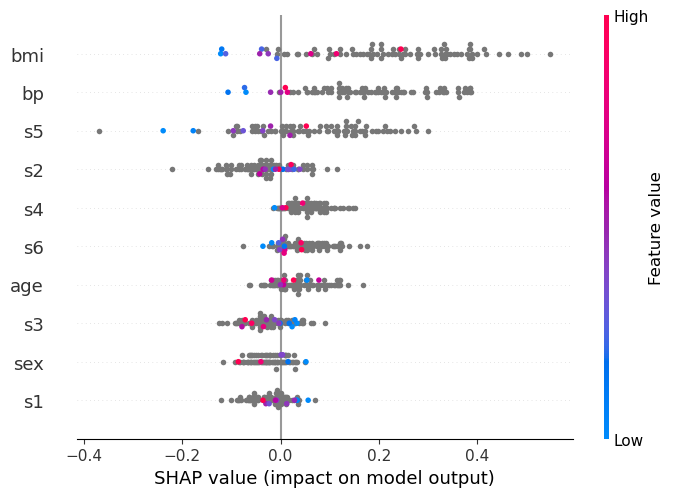}
        \caption{XGBoost without imputation}
    \end{subfigure}
    
    \begin{subfigure}{0.42\textwidth}
        \centering
        \includegraphics[width=\textwidth]{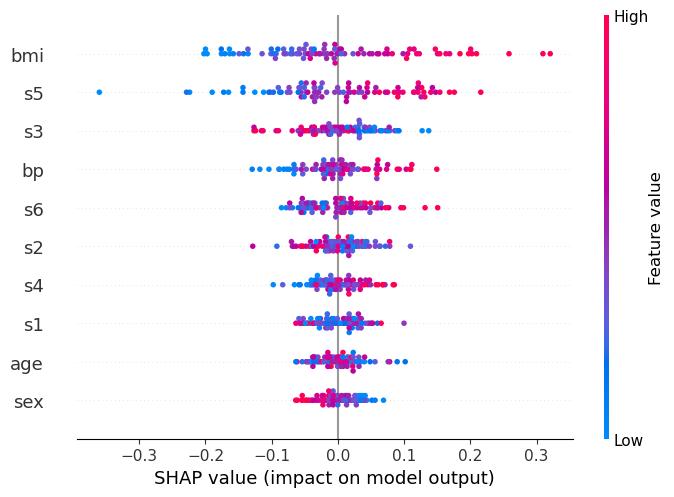}
        \caption{Mean imputation}
    \end{subfigure}    
    \begin{subfigure}{0.42\textwidth}
        \centering
        \includegraphics[width=\textwidth]{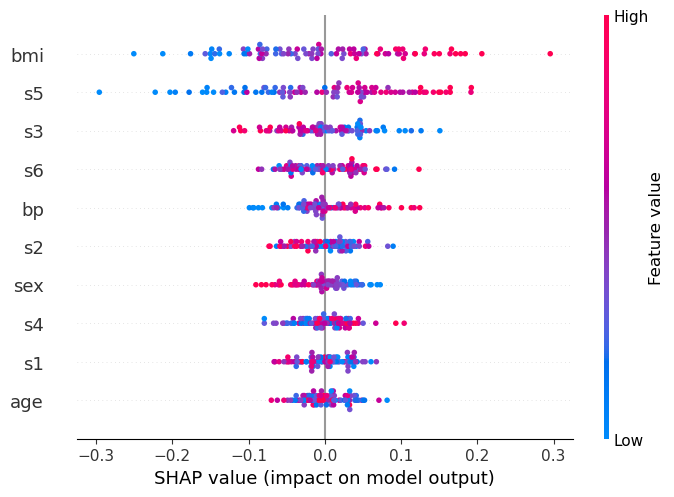}
        \caption{MICE}
    \end{subfigure}
    
    \begin{subfigure}{0.42\textwidth}
        \centering
        \includegraphics[width=\textwidth]{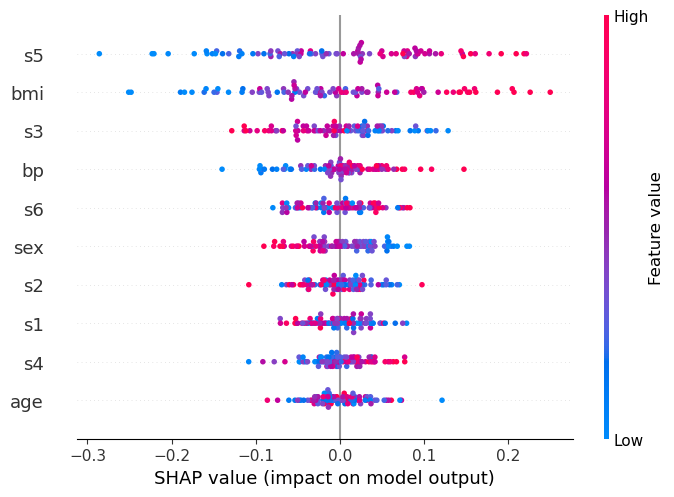}
        \caption{DIMV}
    \end{subfigure}    
    \begin{subfigure}{0.42\textwidth}
        \centering
        \includegraphics[width=\textwidth]{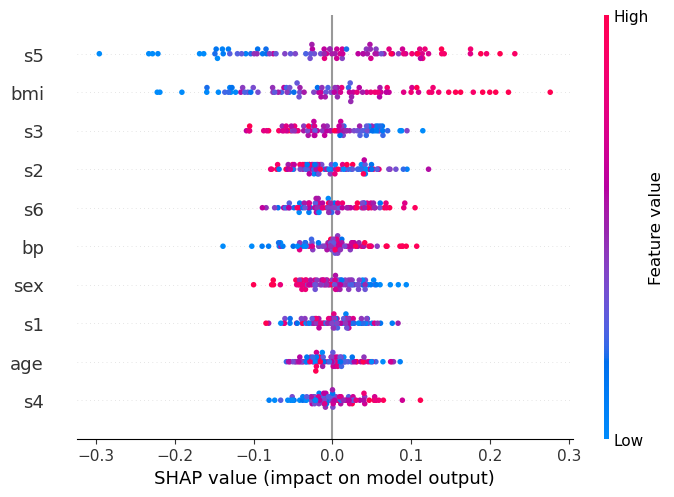}
        \caption{MissForest}
    \end{subfigure}
    
    \begin{subfigure}{0.42\textwidth}
        \centering
        \includegraphics[width=\textwidth]{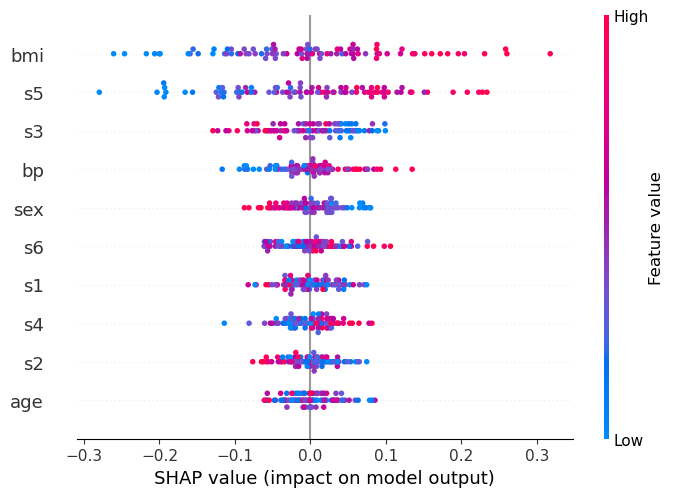}
        \caption{SOFT-IMPUTE}
    \end{subfigure}
    \begin{subfigure}{0.42\textwidth}
        \centering
        \includegraphics[width=\textwidth]{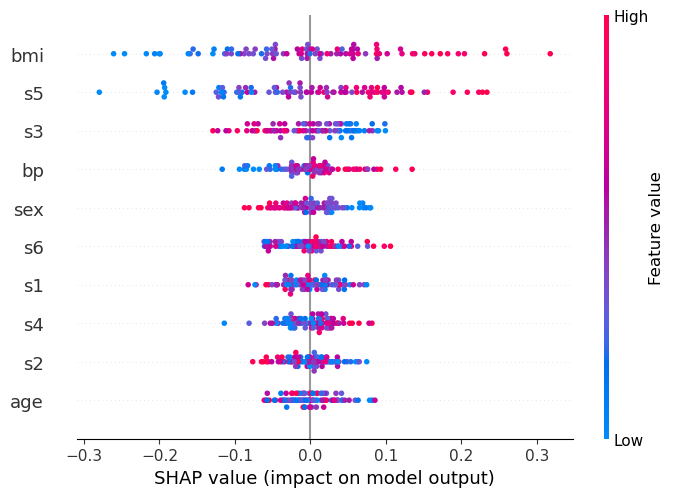}
        \caption{GAIN}
    \end{subfigure}
    
    \caption{Beeswarm plots for the Diabetes dataset at missing rate $r=0.2$ }
    \label{fig:diabetes-beeswarm_plots02}
\end{figure}

\begin{figure}[h!]
    \centering
    \begin{subfigure}{0.42\textwidth}
        \centering
        \includegraphics[width=\textwidth]{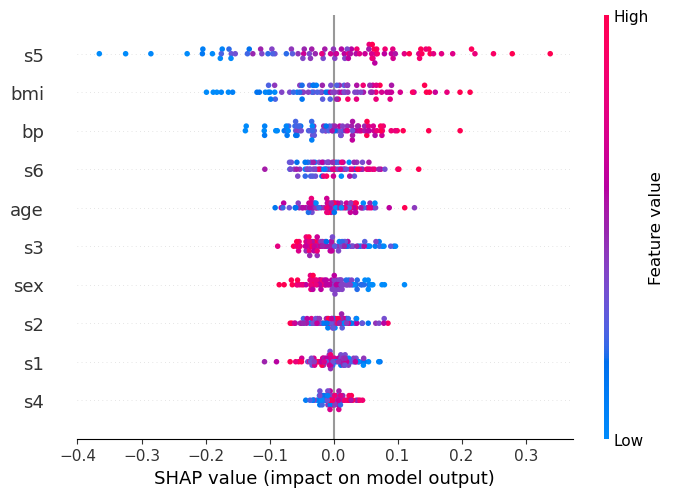}
        \caption{Original (LRO)}
    \end{subfigure}    
    \begin{subfigure}{0.42\textwidth}
        \centering
        \includegraphics[width=\textwidth]{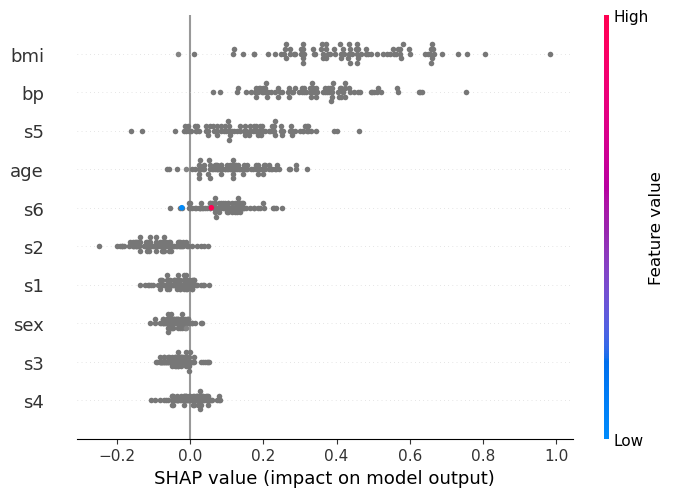}
        \caption{XGBoost without imputation}
    \end{subfigure}
    
    \begin{subfigure}{0.42\textwidth}
        \centering
        \includegraphics[width=\textwidth]{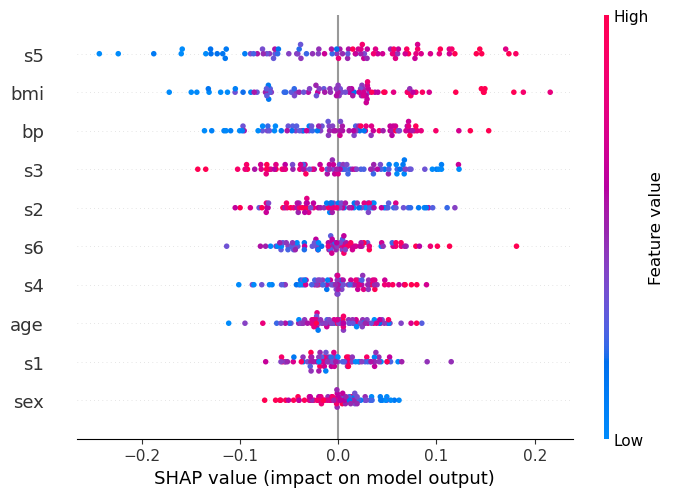}
        \caption{Mean imputation}
    \end{subfigure}    
    \begin{subfigure}{0.42\textwidth}
        \centering
        \includegraphics[width=\textwidth]{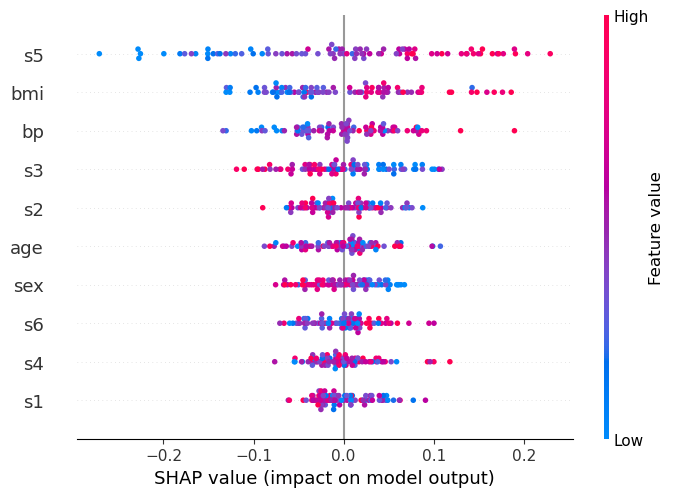}
        \caption{MICE}
    \end{subfigure}
    
    \begin{subfigure}{0.42\textwidth}
        \centering
        \includegraphics[width=\textwidth]{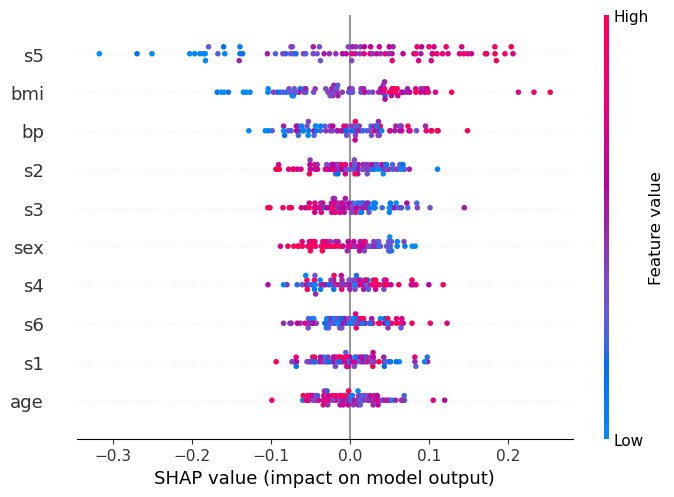}
        \caption{DIMV}
    \end{subfigure}    
    \begin{subfigure}{0.42\textwidth}
        \centering
        \includegraphics[width=\textwidth]{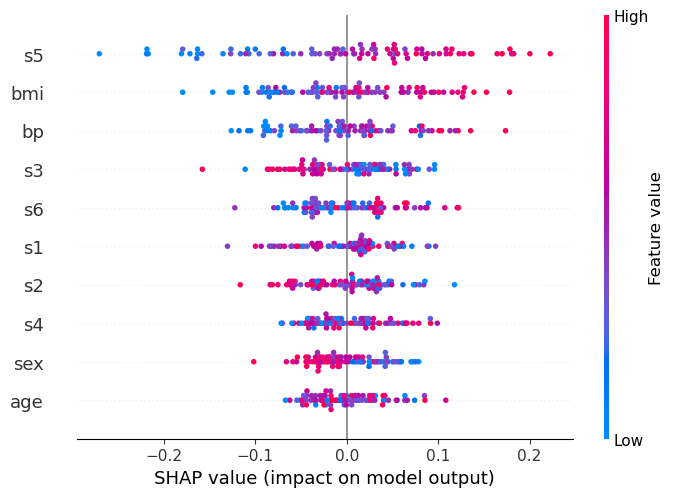}
        \caption{MissForest}
    \end{subfigure}
    
    \begin{subfigure}{0.42\textwidth}
        \centering
        \includegraphics[width=\textwidth]{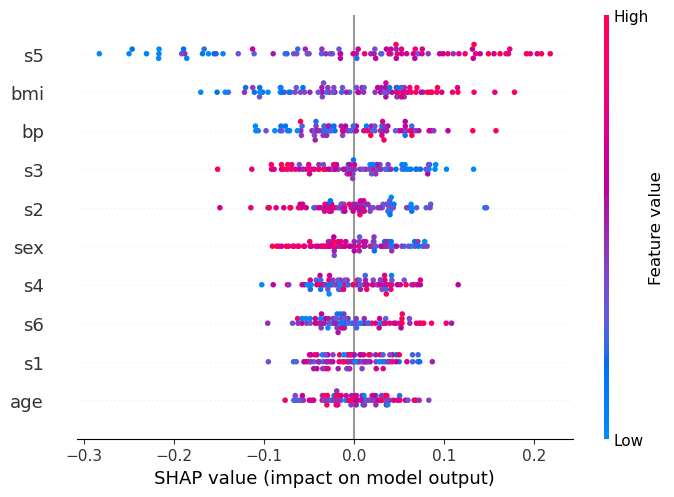}
        \caption{SOFT-IMPUTE}
    \end{subfigure}
    \begin{subfigure}{0.42\textwidth}
        \centering
        \includegraphics[width=\textwidth]{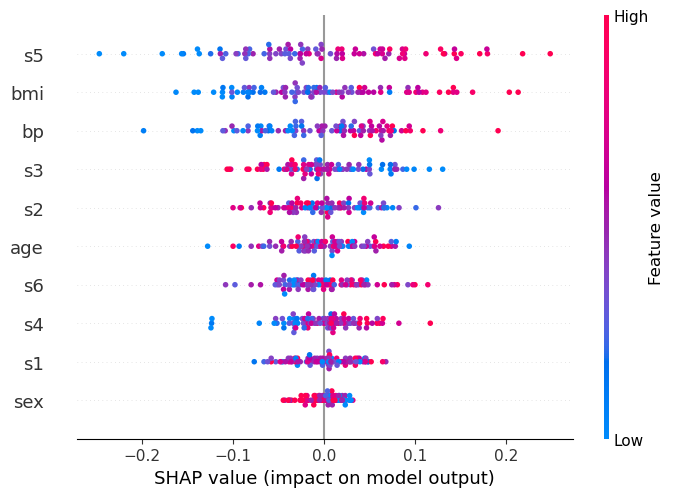}
        \caption{GAIN}
    \end{subfigure}
    
    \caption{Beeswarm plots for the Diabetes dataset at missing rate $r=0.4$ }
    \label{fig:diabetes-beeswarm_plots04}
\end{figure}

\begin{figure}[h!]
    \centering
    \begin{subfigure}{0.42\textwidth}
        \centering
        \includegraphics[width=\textwidth]{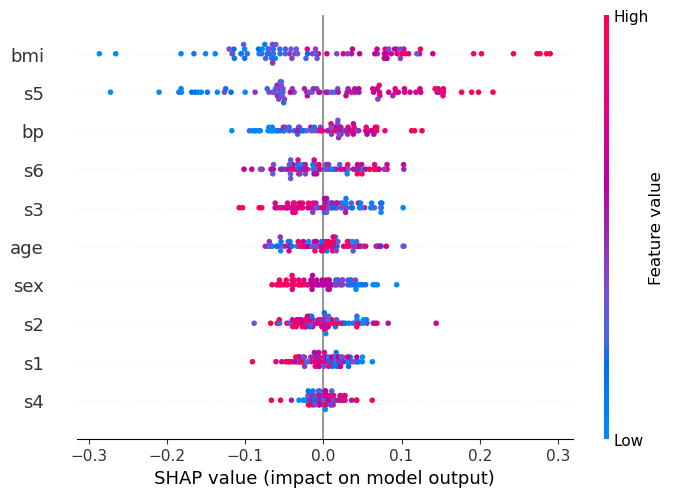}
        \caption{Original (LRO)}
    \end{subfigure}    
    \begin{subfigure}{0.42\textwidth}
        \centering
        \includegraphics[width=\textwidth]{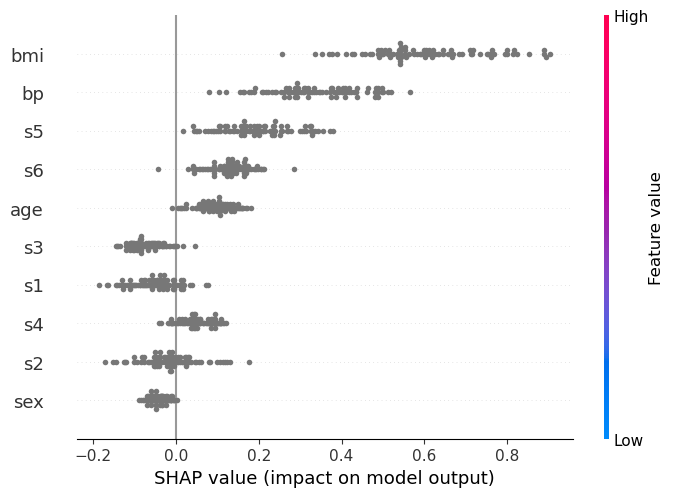}
        \caption{XGBoost without imputation}
    \end{subfigure}
    
    \begin{subfigure}{0.42\textwidth}
        \centering
        \includegraphics[width=\textwidth]{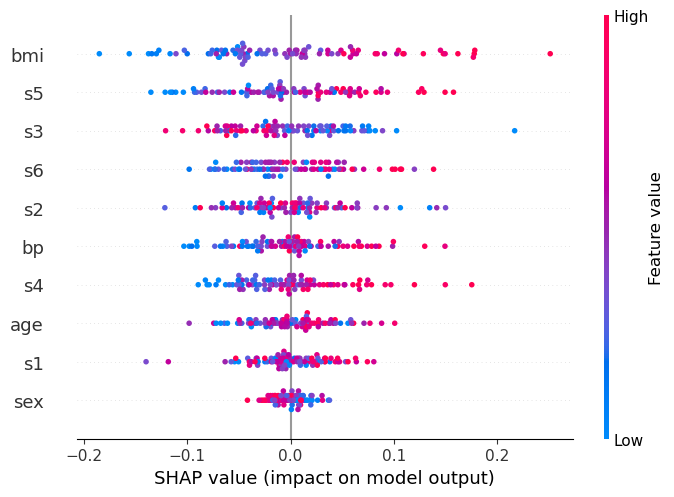}
        \caption{Mean imputation}
    \end{subfigure}    
    \begin{subfigure}{0.42\textwidth}
        \centering
        \includegraphics[width=\textwidth]{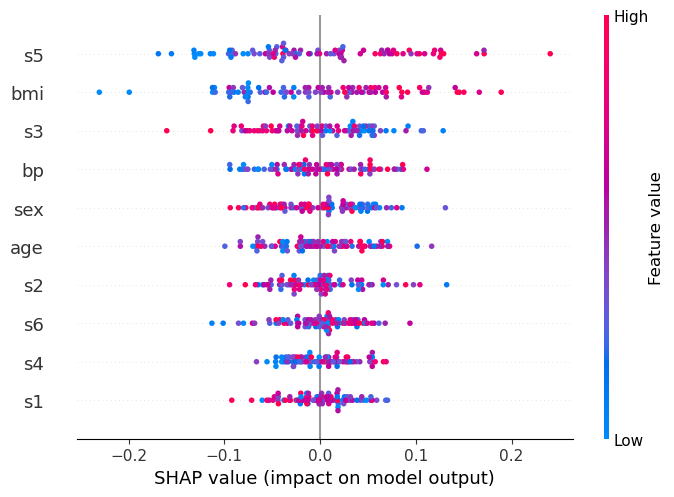}
        \caption{MICE}
    \end{subfigure}
    
    \begin{subfigure}{0.42\textwidth}
        \centering
        \includegraphics[width=\textwidth]{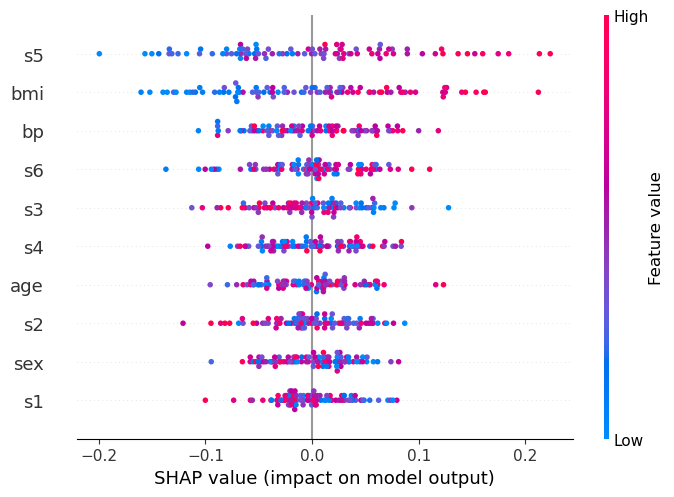}
        \caption{DIMV}
    \end{subfigure}    
    \begin{subfigure}{0.42\textwidth}
        \centering
        \includegraphics[width=\textwidth]{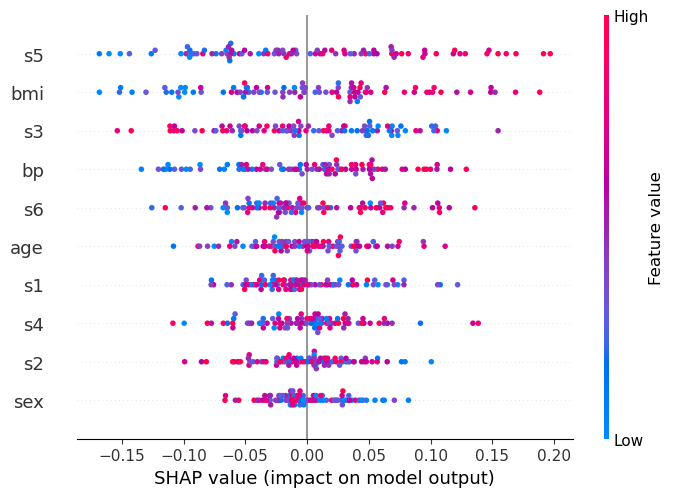}
        \caption{MissForest}
    \end{subfigure}
    
    \begin{subfigure}{0.42\textwidth}
        \centering
        \includegraphics[width=\textwidth]{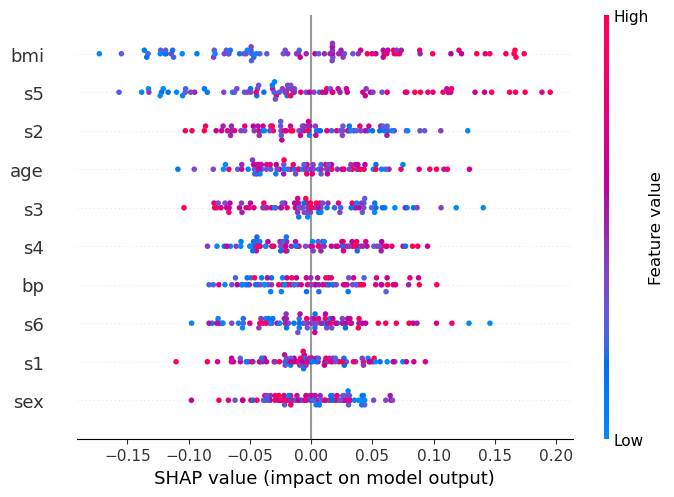}
        \caption{SOFT-IMPUTE}
    \end{subfigure}
    \begin{subfigure}{0.42\textwidth}
        \centering
        \includegraphics[width=\textwidth]{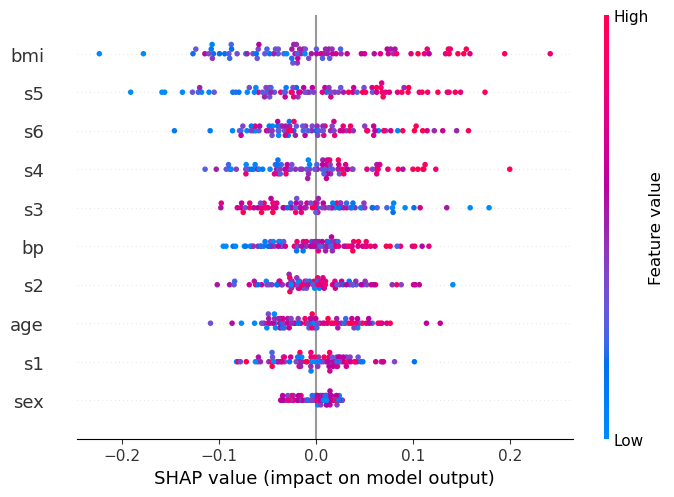}
        \caption{GAIN}
    \end{subfigure}
    
    \caption{Beeswarm plots for the Diabetes dataset at missing rate $r=0.6$ }
    \label{fig:diabetes-beeswarm_plots06}
\end{figure}

\begin{figure}[h!]
    \centering
    \begin{subfigure}{0.42\textwidth}
        \centering
        \includegraphics[width=\textwidth]{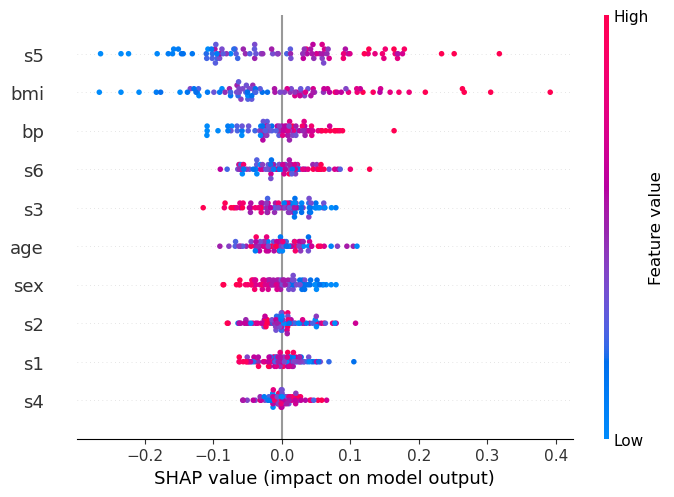}
        \caption{Original (LRO)}
    \end{subfigure}    
    \begin{subfigure}{0.42\textwidth}
        \centering
        \includegraphics[width=\textwidth]{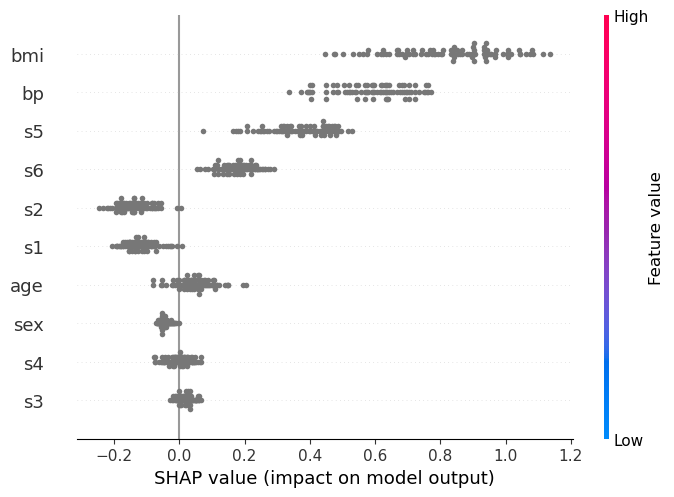}
        \caption{XGBoost without imputation}
    \end{subfigure}
    
    \begin{subfigure}{0.42\textwidth}
        \centering
        \includegraphics[width=\textwidth]{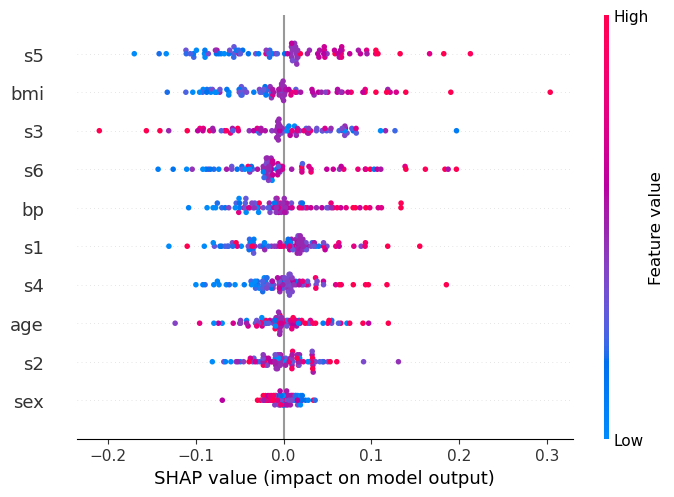}
        \caption{Mean imputation}
    \end{subfigure}    
    \begin{subfigure}{0.42\textwidth}
        \centering
        \includegraphics[width=\textwidth]{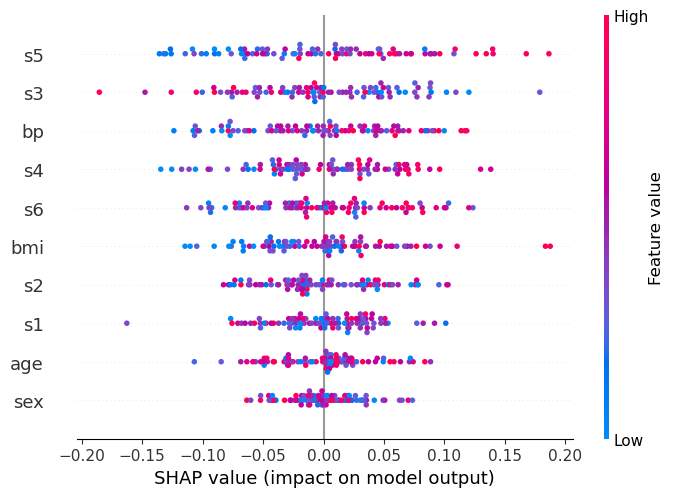}
        \caption{MICE}
    \end{subfigure}
    
    \begin{subfigure}{0.42\textwidth}
        \centering
        \includegraphics[width=\textwidth]{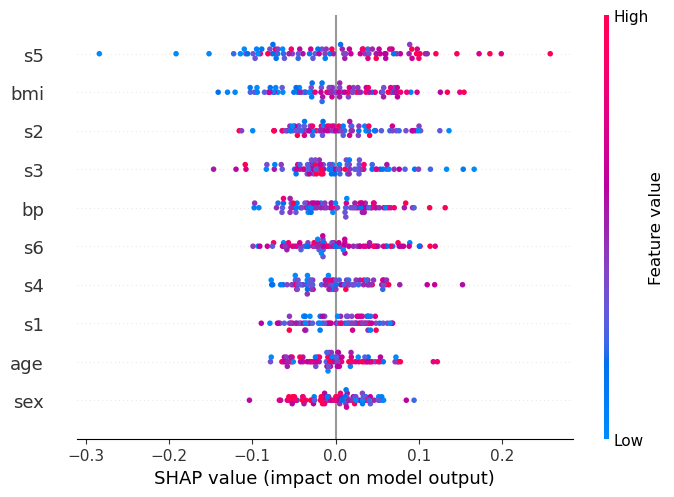}
        \caption{DIMV}
    \end{subfigure}    
    \begin{subfigure}{0.42\textwidth}
        \centering
        \includegraphics[width=\textwidth]{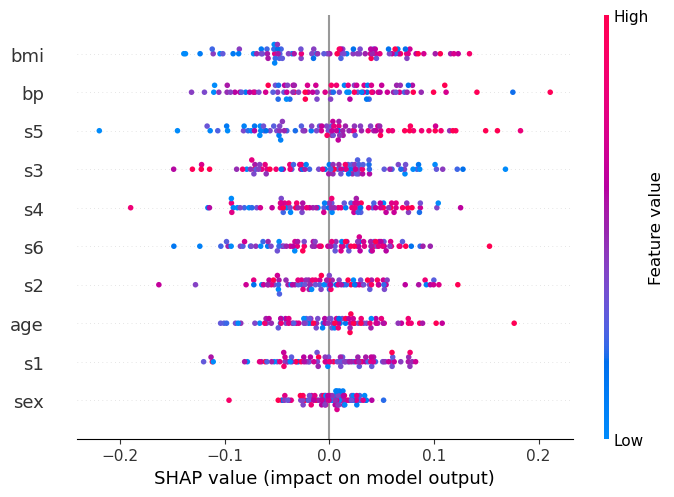}
        \caption{MissForest}
    \end{subfigure}
    
    \begin{subfigure}{0.42\textwidth}
        \centering
        \includegraphics[width=\textwidth]{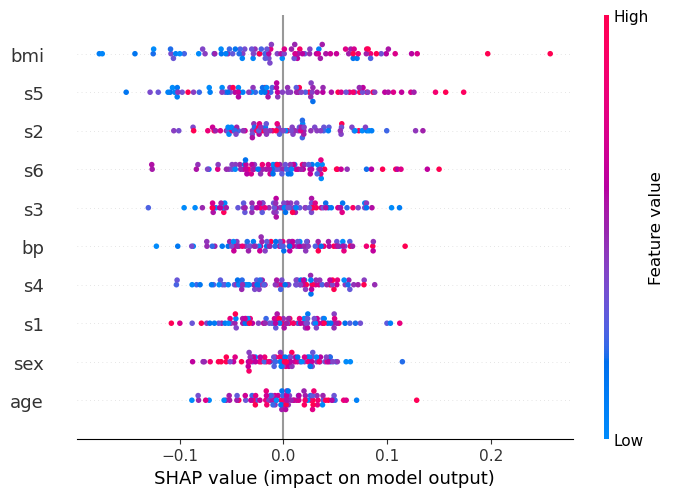}
        \caption{SOFT-IMPUTE}
    \end{subfigure}
    \begin{subfigure}{0.42\textwidth}
        \centering
        \includegraphics[width=\textwidth]{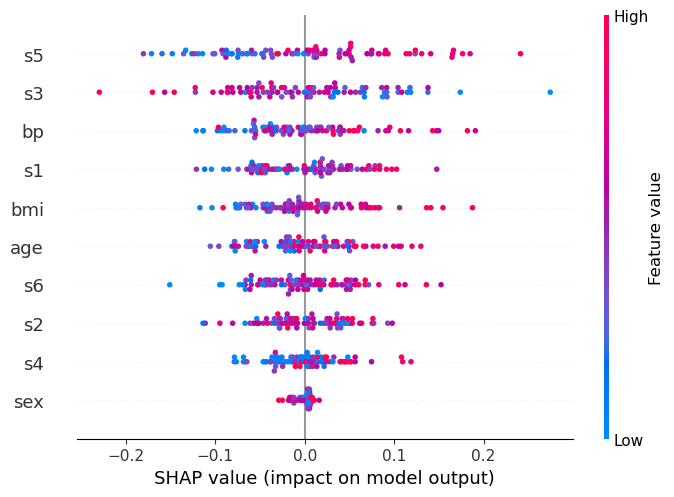}
        \caption{GAIN}
    \end{subfigure}

    \caption{Beeswarm plots for the Diabetes dataset at missing rate $r=0.8$ }
    \label{fig:diabetes-beeswarm_plots08}
\end{figure}

\subsection{MSE analysis}
\subsubsection{MSE analysis for regression tasks}
The performances of XGBoost without imputation and various imputation methods are evaluated for regression on the California and Diabetes datasets at different missing rates, as shown in Table~\ref{tab:MSE_LinReg}. Moreover,  recall that the MSE represents the mean squared error between the true and predicted values in the test set labels. Also, \MSESHAP is the MSE between the Shapley values of the model fitted based on imputed data versus the Shapley values of the model fitted based on the original data. 

\begin{table}[!htp]
\centering
\resizebox{\textwidth}{!}{
\begin{tabular}{c c c c c c c c c c}
\hline
Dataset & $r$ & Criteria & XGBoost & MI & MICE & DIMV & missForest & SOFT & GAIN \\
\hline
\multirow{8}{*}{California} & \multirow{2}{*}{0.2} & MSE & 0.223 & 0.254 & 0.239 & 0.239 & \textbf{0.174} & 0.263 & 0.317 \\
& & \MSESHAP & 0.354 & 0.175 & 0.077 & \textbf{0.054} & 0.061 & 0.096 & 0.217 \\ \cline{2-10}
& \multirow{2}{*}{0.4} & MSE & \textbf{0.238} & 0.250 & 0.271 & 0.268 & 0.239 & 0.262 & 0.300 \\
& & \MSESHAP & 0.345 & 0.129 & 0.082 & \textbf{0.067} & 0.093 & 0.100 & 0.142 \\ \cline{2-10}
& \multirow{2}{*}{0.6} & MSE & \textbf{0.240} & 0.253 & 0.270 & 0.270 & 0.243 & 0.263 & 0.308 \\
& & \MSESHAP & 0.362 & 0.134 & 0.090 & \textbf{0.076} & 0.097 & 0.110 & 0.149 \\ \cline{2-10}
& \multirow{2}{*}{0.8} & MSE & \textbf{0.231} & 0.243 & 0.263 & 0.260 & 0.233 & 0.254 & 0.293 \\
& & \MSESHAP & 0.346 & 0.136 & 0.090 & \textbf{0.073} & 0.098 & 0.106 & 0.151 \\ \hline

\multirow{8}{*}{Diabetes} & \multirow{2}{*}{0.2}& MSE &0.193 & 0.188 & 0.161 & \textbf{0.150} & 0.170 & 0.155 & 0.216\\
& &\MSESHAP & 0.063 & 0.019 & \textbf{0.016} & \textbf{0.016} & 0.017 & \textbf{0.016} & 0.021 \\ \cline{2-10}
&\multirow{2}{*}{0.4} & MSE &0.309 & 0.298 & 0.332 & \textbf{0.275} & 0.320 & 0.304 & 0.358 \\
&&\MSESHAP & 0.125 & 0.029 & 0.028 & \textbf{0.027} & 0.030 & 0.028 & 0.033\\ \cline{2-10}
&\multirow{2}{*}{0.6}&MSE &0.359 & 0.366 & 0.386 & 0.371 & 0.445 & \textbf{0.349} & 0.427 \\
&&\MSESHAP & 0.142 & 0.038 & 0.039 & \textbf{0.035} & 0.042 & 0.037 & 0.044\\ \cline{2-10}
&\multirow{2}{*}{0.8} & MSE & 0.549 & \textbf{0.535} & 0.563 & 0.574 & 0.660 & 0.555 & 0.603 \\
&& \MSESHAP & 0.224 & 0.049 & 0.051 & 0.050 & 0.058 & \textbf{0.048} & 0.055 \\
\hline

\end{tabular}
}
\caption{MSE on the test set labels and MSE of Shapley values for linear regression tasks on the California and Diabetes datasets.}
\label{tab:MSE_LinReg}
\end{table}

Taking the results on the California dataset from Table~\ref{tab:MSE_LinReg}, at $r=0.2$, missForest achieves the lowest MSE of 0.174, which is significantly lower than the second-best method, XGBoost, with an MSE of 0.223. However, when the missing rate exceeds 0.4, XGBoost consistently achieves the lowest MSE, followed closely by missForest. Interestingly, although missForest and XGBoost alternate in obtaining the lowest MSE, this does not correspond to the lowest \MSESHAP. Instead, DIMV demonstrates the best performance, indicating its robustness in preserving the explainability of the imputed data. For example, the \MSESHAP values for DIMV are 0.054, 0.067, 0.076, and 0.073, outperforming those of XGBoost, which are 0.354, 0.345, 0.362, and 0.346, as the missing rate increases from 0.2 to 0.8, respectively.

For the Diabetes dataset, overall, both MSE and \MSESHAP consistently increase as the missing rate increases. 
With a low missing rate (from 0.4 to lower), DIMV shows the best performance at both MSE and \MSESHAP. For example, at $r=0.2$, DIMV achieves the best MSE of 0.150, followed by SOFT with an MSE of 0.155. Also, DIMV, SOFT, and MICE demonstrate the best \MSESHAP, all of 0.016. As the missing rate increases, there is no consistency in the method that delivers the best results; instead, different methods alternately show their superiority. For instance, at $r=0.6$, SOFT outperforms the others in MSE with a value of 0.349, while DIMV achieves the best \MSESHAP at 0.035. At $r=0.8$, MI achieves the lowest MSE (0.535), while SOFT records the lowest \MSESHAP (0.048).

We, therefore, observe that different imputation methods excel in terms of prediction performance and explainability, underscoring the importance of selecting appropriate imputation methods based on specific criteria in the context of missing data analysis. The reason for the prominent performance of DIMV in terms of \MSESHAP could be that DIMV might better preserve the relationships between features and their contributions to the model's predictions compared to the other approaches. In fact, DIMV is based on the Gaussian conditional formula with \(L_2\) norm regularization. Meanwhile, missForest is a tree-based nonlinear approach, and SOFT-IMPUTE is a matrix decomposition approach to imputation. Also, the simplicity of DIMV and its regularization terms could reduce biases introduced by other imputation methods, leading to more accurate Shapley values.

\subsubsection{MSE analysis for classification tasks}
\begin{table}[!htp]
\centering
\resizebox{\textwidth}{!}{
\begin{tabular}{c c c c c c c c c c}
\hline
Dataset & $r$ & Criteria & XGBoost & MI & MICE & DIMV & missForest & SOFT & GAIN \\
\hline

\multirow{8}{*}{MNIST} & \multirow{2}{*}{0.2}&MSE & - &  0.004& 0.001& 0.001& \textbf{0.000}& 0.002& 0.004\\
&&\MSESHAP & 0.038& 0.021& 0.010& 0.009& \textbf{0.005}& 0.011& 0.019\\ \cline{2-10}
&\multirow{2}{*}{0.4}&MSE & - & 0.009& 0.004& 0.003& \textbf{0.002}& 0.005& 0.008\\
&&\MSESHAP & 0.074& 0.034& 0.020& 0.018& \textbf{0.011}& 0.023& 0.034\\ \cline{2-10}
&\multirow{2}{*}{0.6} & MSE & - & 0.013& 0.010& 0.006& \textbf{0.004}& 0.009& 0.014\\
&&\MSESHAP & 0.109& 0.046& 0.038& 0.030& \textbf{0.024}& 0.037& 0.045 \\ \cline{2-10}
&\multirow{2}{*}{0.8}&MSE & - & 0.017& 0.017& 0.012& \textbf{0.011}& 0.027& 0.020\\
&&\MSESHAP & 0.142& 0.053& 0.053& 0.044& \textbf{0.041}& 0.053& 0.052\\\hline

\multirow{6}{*}{Glass} & \multirow{2}{*}{0.2}&MSE & -  & 0.192 & \textbf{0.094} & 0.115 & \textbf{0.094} & 0.095 & 0.265 \\
&&\MSESHAP & 0.041 & 0.029 & \textbf{0.015} & 0.017 & 0.020 & 0.018 & 0.032  \\ \cline{2-10}
&\multirow{2}{*}{0.4}&MSE & - & 0.782 & 1.617 & 0.624 & 0.612 & \textbf{0.578} & 1.319 \\
&&\MSESHAP & 0.186 & 0.080 & 0.063 & 0.057 & 0.071 & \textbf{0.062} & 0.0965 \\ \cline{2-10}
&\multirow{2}{*}{0.6} & MSE & - & 0.706 & 1.258 & 0.675 & 0.707 & \textbf{0.623} & 0.924 \\
&&\MSESHAP & 0.149 & 0.064 & 0.059 & \textbf{0.050} & 0.060 & \textbf{0.050} & 0.070 \\\hline
\end{tabular}
}
\caption{MSE on the test set inputs and MSE of Shapley values for classification tasks on the MNIST and Glass datasets.}
\label{tab:MSE_Classification}
\end{table}

For the classification tasks on the MNIST and Glass datasets, note that the MSE is calculated based on the test set inputs, and the results for the missing rate of 0.8 are unavailable because some rows have all features missing. Focusing on the MNIST dataset, missForest consistently delivers the best MSE and \MSESHAP across all missing rates. Specifically, the MSE values are 0 - 0.002 - 0.004 - 0.011, and the \MSESHAP values are 0.005 - 0.011 - 0.024 - 0.041 as the missing rate increases from 0.2 to 0.8. On the Glass dataset, at the missing rate of 0.2, MICE outperforms the others with an MSE of 0.094 and a \MSESHAP of 0.015. When the missing rate increases from 0.4 to 0.6, SOFT takes the lead with an MSE of 0.578 and 0.623, and \MSESHAP values of 0.062 and 0.050, respectively. One important observation is that the XGBoost classifier gives the highest \MSESHAP across all missing rates, thereby illustrating its lower ability to preserve the explainability of the imputed data compared to other methods.

\section{Discussion}\label{sec-discuss}

In summary, from the present results and the deep analysis in the previous section, we can observe several interesting key insights:
\begin{itemize}
\item The global feature importance plots and the beeswarm plots clearly show that different imputation methods lead to varying Shapley value distributions. This shows that the choice of the imputation method can significantly affect the explainability of the model.

\item Across different missing rates, XGBoost, without imputation, seems to cause the most significant changes in the Shapley values. While XGBoost can train and predict directly on data with missing values, the MSE between Shapley values of XGBoost and the Original is the highest in most of the experiments.

\item As the missing rate increases from 0.2 to 0.8, the differences between the imputation methods become more pronounced. This indicates that the choice of the imputation method becomes increasingly critical as the number of missing data increases.

\item The results for the California and Diabetes datasets show some differences, suggesting that the impact of imputation methods may be data set dependent. This highlights the importance of considering the characteristics of the data set when choosing an imputation method. Such a research gap should be further investigated.

\item The MSE plots show that methods with lower imputation MSE do not necessarily preserve Shapley values better (as measured by Shapley MSE). This suggests a potential trade-off between the accuracy of the imputation and the maintenance of the original importance structure of the feature. Interestingly, the results by \cite{schroeder2024interpretable} show that the methods that achieved the best MSE did not necessarily yield the most interpretable rule-based ML models, in the sense of model size and rule recovery.  

\item Mean imputation (MI) tends to significantly alter the importance of features, especially at higher missing rates.

\item MICE and DIMV often show similar patterns, possibly due to the fact that MICE is based on regression and DIMV is based on a conditional Gaussian formula. MissForest and SOFT-IMPUTE sometimes preserve feature rankings better than simpler methods, but this is not consistent across all scenarios. GAIN does not show its superiority in all comparison datasets.
\end{itemize}
The variability in results between methods and missing rates underscores the need to evaluate imputation effects when using Shapley values for model interpretation for datasets with missing values. The following discussion is structured around our result and the specific pitfalls that may arise due to an incomplete understanding of the relationship between missing data, imputation methods, and Shapley values. We highlight how different approaches can lead to vastly different interpretations, how dataset characteristics and missing rates affect results, and the importance of considering both imputation accuracy and explainability preservation:

\begin{itemize}
    \item 
\textbf{Pitfall 1: Assume the neutrality of the imputation method.}
Our study reveals that different imputation methods can significantly alter Shapley values and, consequently, the explainability of the model. For instance, mean imputation and GAIN tend to distort feature importance, especially at higher missing rates, while methods like MICE and DIMV often show similar patterns. This underscores the importance of carefully considering the imputation method when using Shapley values for model explanation, as the choice can lead to vastly different interpretations of feature importance.

\item \textbf{Pitfall 2: Overlooking data set dependency. }
We observed that the effects of imputation methods on Shapley values vary between data sets. For example, the California and Diabetes datasets showed different patterns of importance of features in different imputation methods. This highlights that dataset characteristics play a crucial role in determining the best imputation approach, and warns against applying a one-size-fits-all solution across different datasets. Future work should address the question of what dataset characteristics are the most influential for the imputation.

\item \textbf{Pitfall 3: Ignoring the impact of missing rate. }
Our results demonstrate that the impact of the choice of the imputation method becomes more pronounced as the missing rate increases. At lower missing rates, differences between methods are less stark, but as missing data increases, the choice of the imputation method becomes increasingly critical. This emphasizes the need for more careful consideration of imputation techniques when dealing with data sets with substantial missing data. In addition, a method may perform better than another one at low missing rates but perform worse than another one at high missing rates, such as DIMV in the Diabetes dataset or MICE in the Glass dataset.

\item \textbf{Pitfall 4: Focusing solely on imputation accuracy. } 
Our findings suggest a potential trade-off between imputation precision and the preservation of Shapley values. Methods that provide more accurate imputations do not necessarily better preserve the original Shapley values. This highlights a potential tension between optimizing for prediction accuracy and maintaining explainability and suggests that practitioners should consider both aspects when selecting an imputation method.
\end{itemize}

These pitfalls underscore the complex interaction between missing data handling, imputation methods, and model explainability using Shapley values. Our findings highlight that there is no universal solution for handling missing data while preserving the explainability of the model. Instead, the choice of method should be context-dependent, considering factors such as data set characteristics, missing data rates, and the specific requirements of the analysis. Moreover, our results emphasize the need for a holistic approach that balances imputation accuracy with the preservation of feature-importance structures. As machine learning models continue to be applied in critical domains, understanding and addressing these pitfalls becomes important to ensuring reliable and interpretable results. As an additional suggestion for future work, it would be useful to develop explanation methods that are aware of whether some features in the training set have been imputed at training time and how often. Features that have been imputed more often are expected to be less reliable and, therefore, have relatively low importance.

\section{Conclusion}\label{sec-conclusion}

In this paper, we explore the impact of various imputation methods on the calculation of Shapley values for model explainability. Our findings indicate that the choice of the imputation strategy can significantly influence the accuracy and reliability of Shapley values, thereby affecting the insights drawn from the machine learning models. 

Our comparative analysis revealed that the chosen imputation method should align with the specific characteristics of the data set and the objectives of the analysis. Practitioners should carefully consider the trade-offs between computational efficiency and the potential for bias introduction when selecting an imputation method. Moreover, our results revealed that imputation techniques that achieve the best accuracy do not necessarily produce the most accurate feature importance values. The study underscores the necessity of evaluating the effects of imputation as a critical step in the preprocessing pipeline, especially when Shapley values are used for model interpretation.

Future work should focus on extending this analysis to a broader range of datasets and machine learning models to validate our findings further. In addition, the development of new imputation methods is tailored for specific types of data, model structures, and explainability, especially in relation to Shapley value interpretations. Also, more research needs to be done on the direction of handling missing data directly, as this helps to avoid noises and bias introduced to the model by imputation. By addressing these challenges, we can improve the reliability of model interpretations and support more informed decision-making in the application of machine learning. 
\clearpage
\bibliographystyle{elsarticle-num-names} 
\bibliography{ref}
\newpage
\appendix
\section{Global feature importance plots on the Diabetes dataset}\label{appendix-importance-plots}
\begin{figure}[!htp]
    \centering
    \begin{subfigure}{0.32\textwidth}
        \centering
        \includegraphics[width=\textwidth]{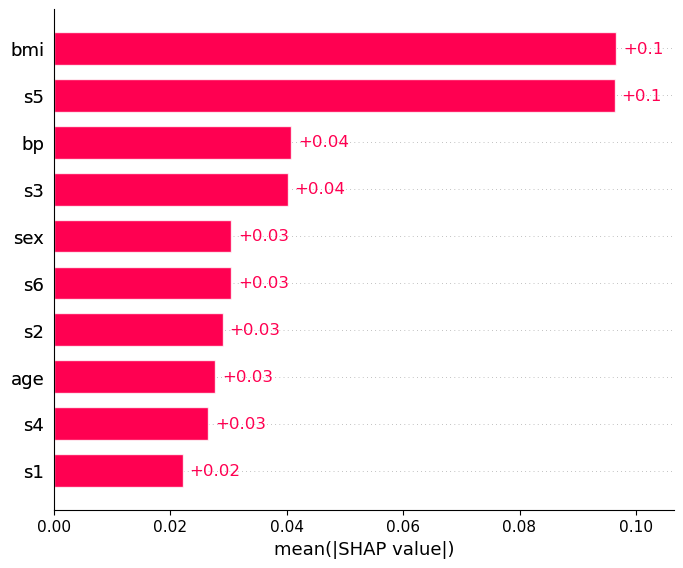}
        \caption{Original (LRO)}
        \label{fig:diabetes-ori02}
    \end{subfigure}
    \begin{subfigure}{0.32\textwidth}
        \centering
        \includegraphics[width=\textwidth]{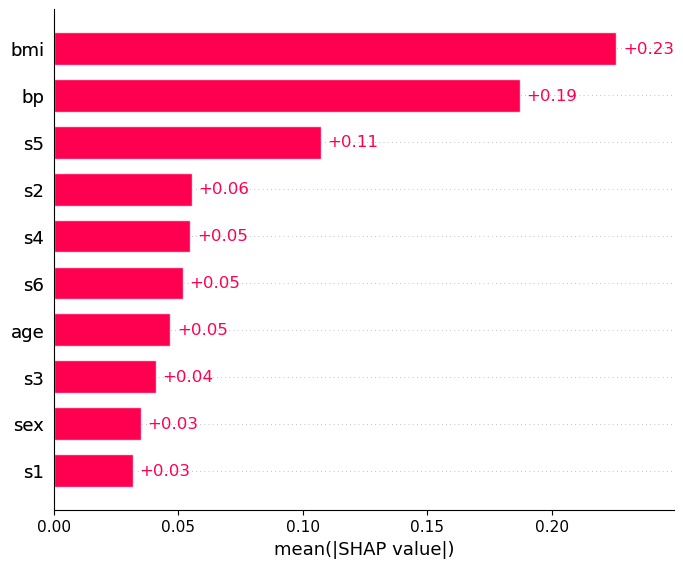}
        \caption{XGBoost without imputation}
        \label{fig:diabetes-xm02}
    \end{subfigure}
    \begin{subfigure}{0.32\textwidth}
        \centering
        \includegraphics[width=\textwidth]{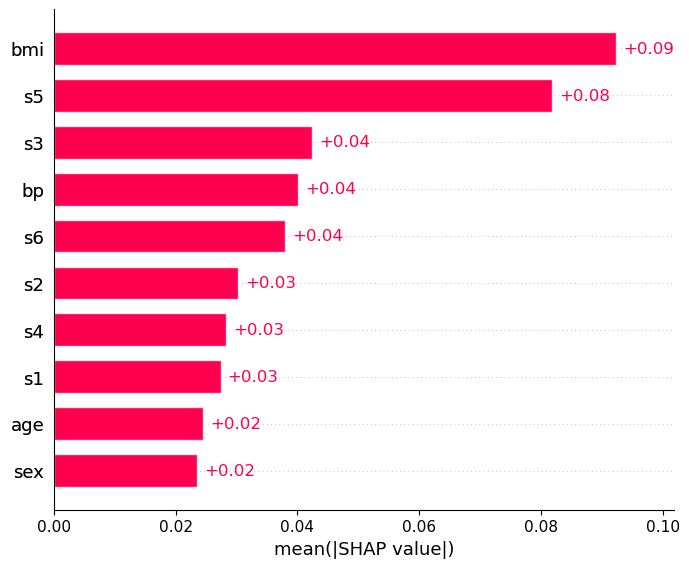}
        \caption{Mean Imputation}
        \label{fig:diabetes-mi02}
    \end{subfigure}
    
    \begin{subfigure}{0.32\textwidth}
        \centering
        \includegraphics[width=\textwidth]{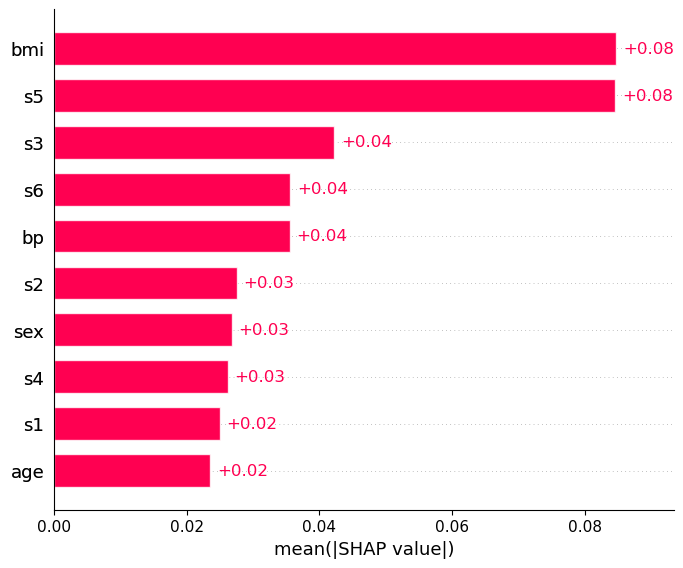}
        \caption{MICE}
        \label{fig:diabetes-mice02}
    \end{subfigure}
    \begin{subfigure}{0.32\textwidth}
        \centering
        \includegraphics[width=\textwidth]{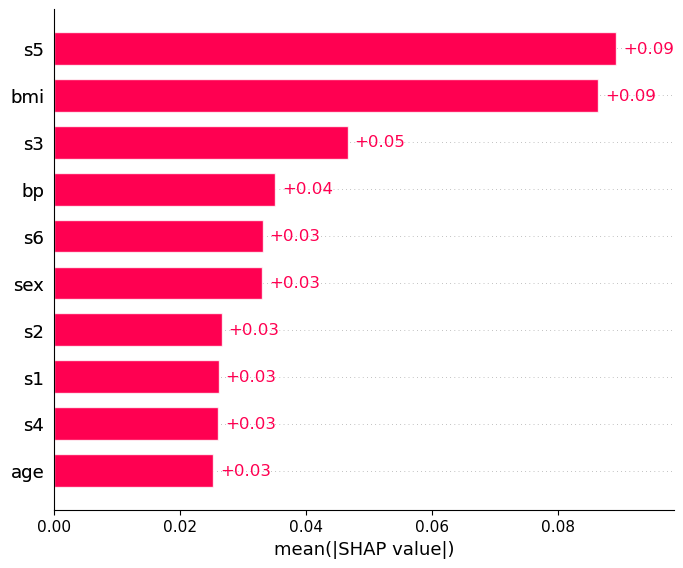}
        \caption{DIMV}
        \label{fig:diabetes-dimv02}
    \end{subfigure}
    \begin{subfigure}{0.32\textwidth}
        \centering
        \includegraphics[width=\textwidth]{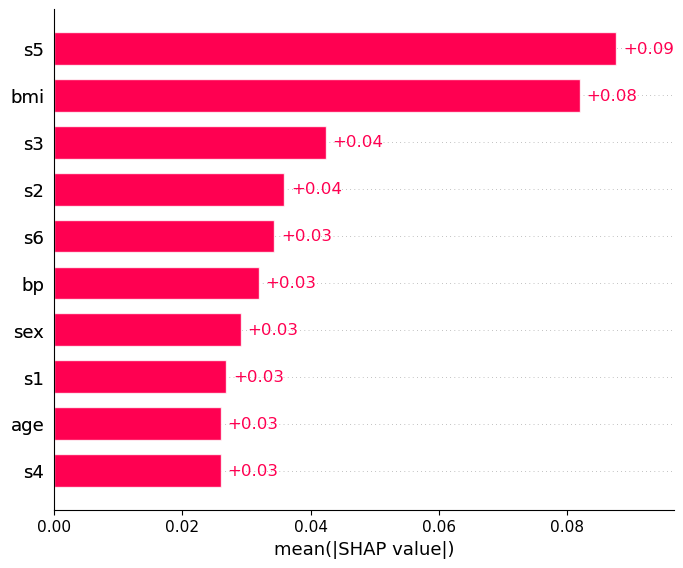}
        \caption{missForest}
        \label{fig:diabetes-mf02}
    \end{subfigure}
    \begin{subfigure}{0.32\textwidth}
        \centering
        \includegraphics[width=\textwidth]{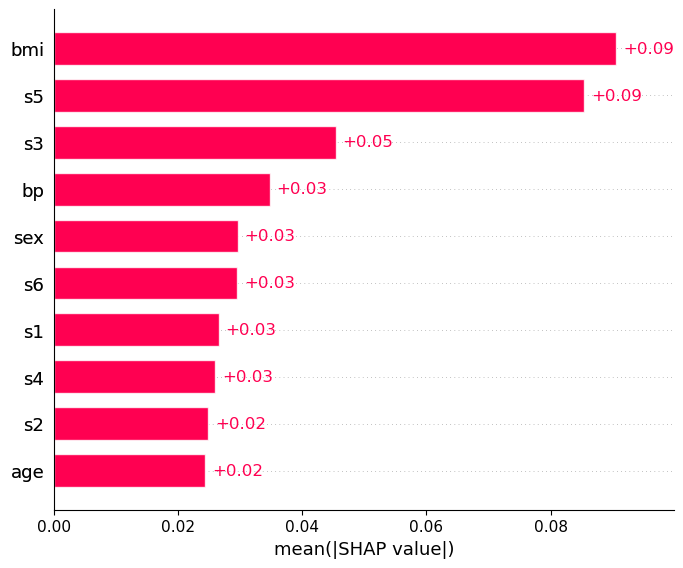}
        \caption{SOFT-IMPUTE}
        \label{fig:diabetes-soft02}
    \end{subfigure}
    \begin{subfigure}{0.32\textwidth}
        \centering
        \includegraphics[width=\textwidth]{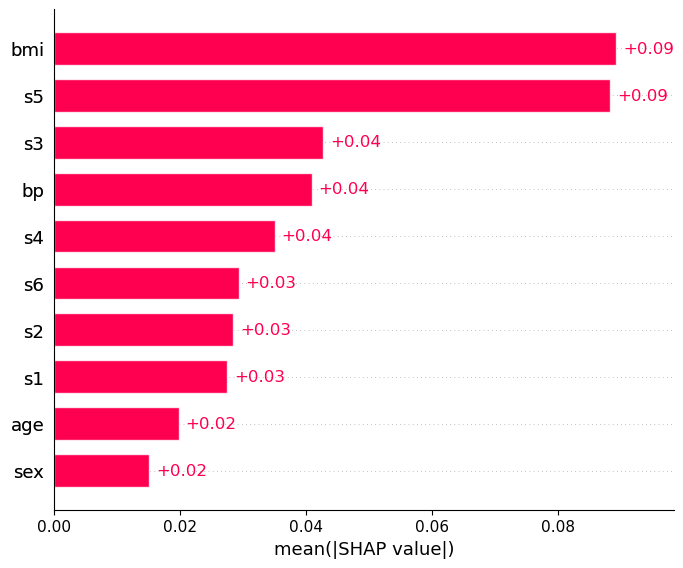}
        \caption{GAIN}
        \label{fig:diabetes-gain02}
    \end{subfigure}    

    \caption{Global feature importance plot on the Diabetes dataset with the missing rate $r=0.2$}
    \label{fig:diabetes-r2-bar}
\end{figure}

\begin{figure}[!htp]
    \centering
    \begin{subfigure}{0.32\textwidth}
        \centering
        \includegraphics[width=\textwidth]{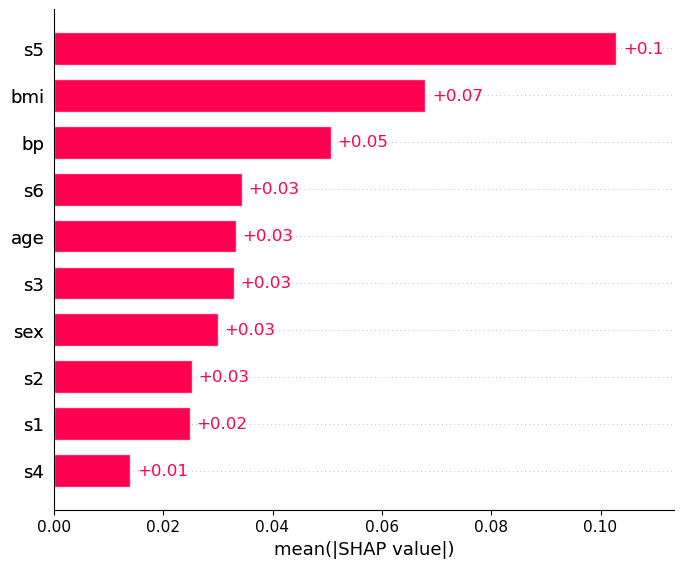}
        \caption{Original (LRO)}
        \label{fig:diabetes-ori04}
    \end{subfigure}
    \begin{subfigure}{0.32\textwidth}
        \centering
        \includegraphics[width=\textwidth]{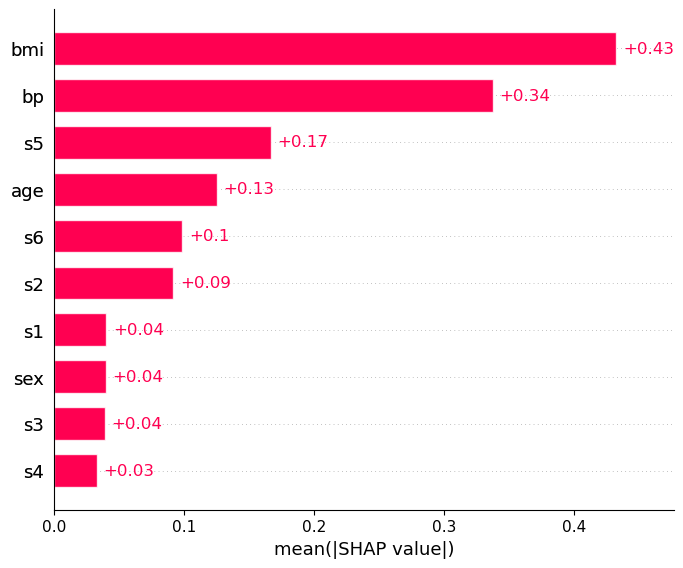}
        \caption{XGBoost without imputation}
        \label{fig:diabetes-xm04}
    \end{subfigure}
    \begin{subfigure}{0.32\textwidth}
        \centering
        \includegraphics[width=\textwidth]{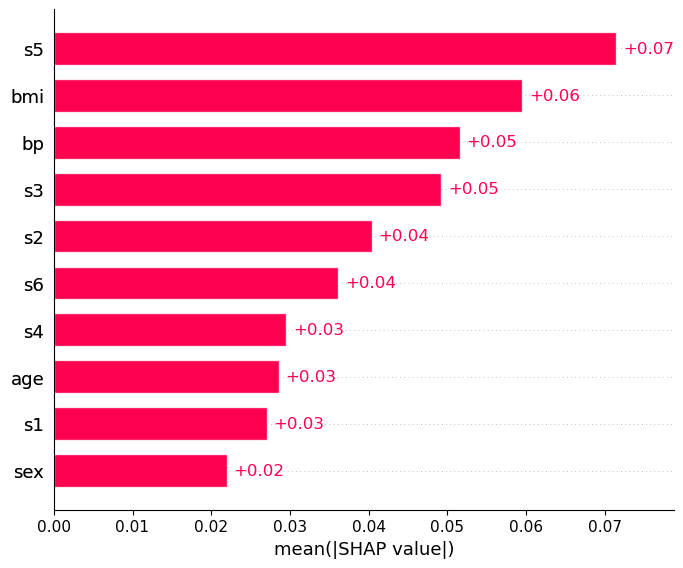}
        \caption{Mean Imputation}
        \label{fig:diabetes-mi04}
    \end{subfigure}
    
    \begin{subfigure}{0.32\textwidth}
        \centering
        \includegraphics[width=\textwidth]{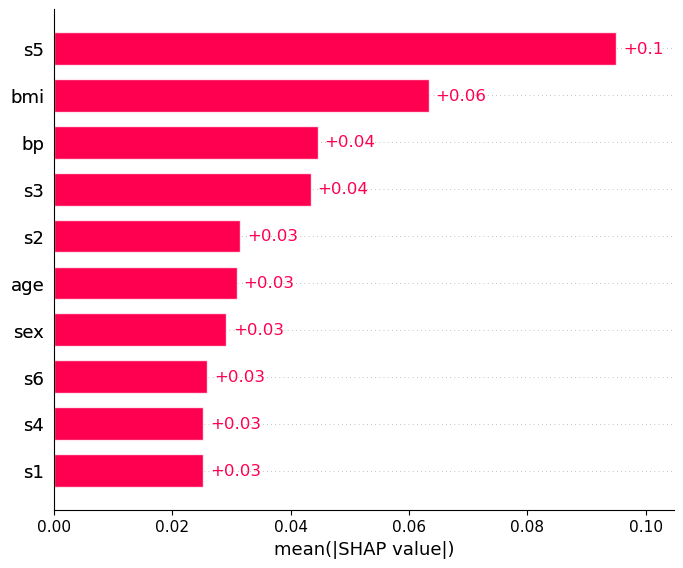}
        \caption{MICE}
        \label{fig:diabetes-mice04}
    \end{subfigure}
    \begin{subfigure}{0.32\textwidth}
        \centering
        \includegraphics[width=\textwidth]{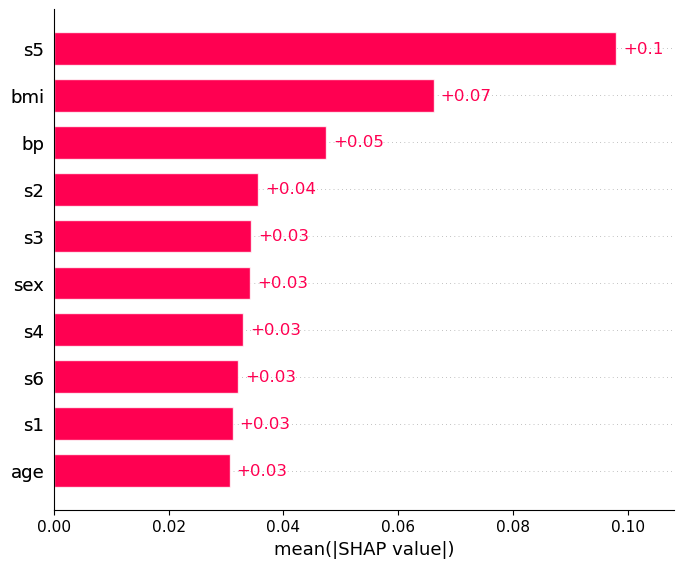}
        \caption{DIMV}
        \label{fig:diabetes-dimv04}
    \end{subfigure}
    \begin{subfigure}{0.32\textwidth}
        \centering
        \includegraphics[width=\textwidth]{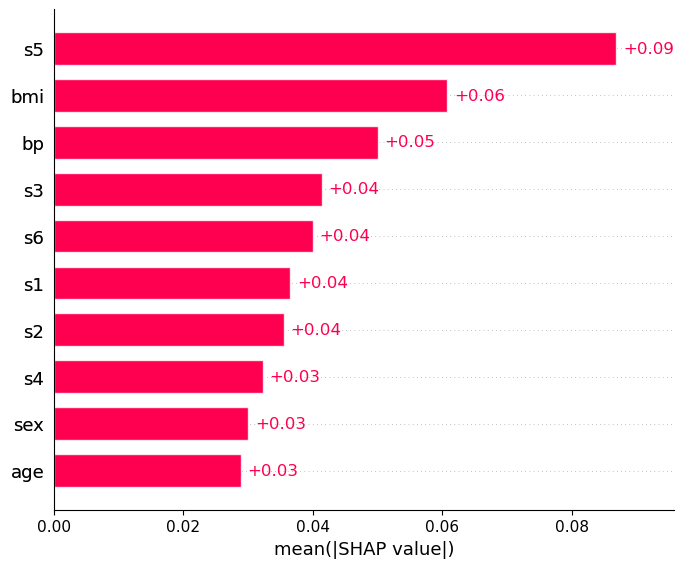}
        \caption{missForest}
        \label{fig:diabetes-mf04}
    \end{subfigure}
    \begin{subfigure}{0.32\textwidth}
        \centering
        \includegraphics[width=\textwidth]{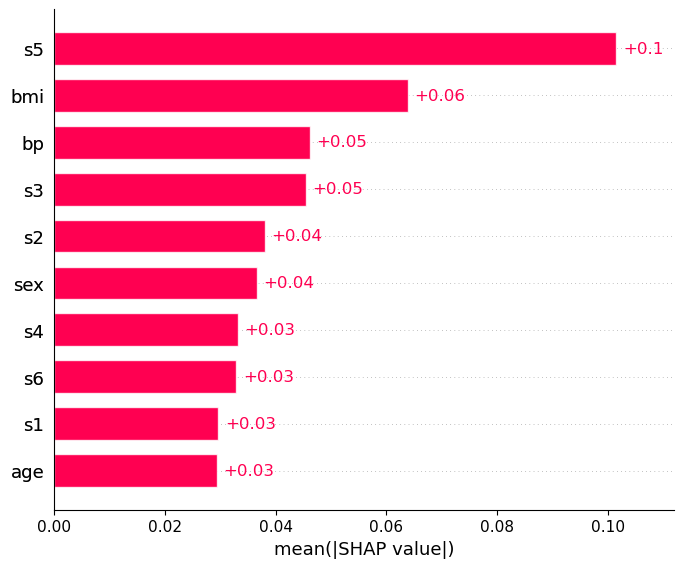}
        \caption{SOFT-IMPUTE}
        \label{fig:diabetes-soft04}
    \end{subfigure}
    \begin{subfigure}{0.32\textwidth}
        \centering
        \includegraphics[width=\textwidth]{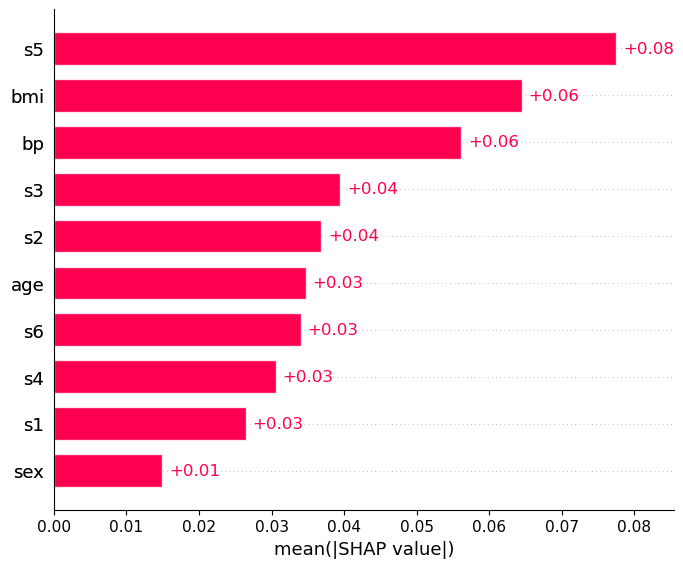}
        \caption{GAIN}
        \label{fig:diabetes-gain04}
    \end{subfigure}
    \caption{Global feature importance plot on the diabetes dataset with the missing rate $r=0.4$}
    \label{fig:diabetes-r4-bar}
\end{figure}

\begin{figure}[!htp]
    \centering
    \begin{subfigure}{0.32\textwidth}
        \centering
        \includegraphics[width=\textwidth]{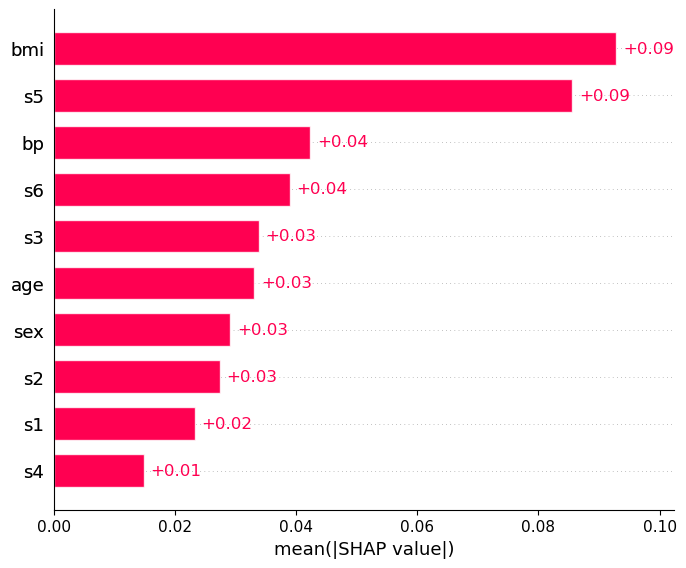}
        \caption{Original (LRO)}
        \label{fig:diabetes-ori06}
    \end{subfigure}
    \begin{subfigure}{0.32\textwidth}
        \centering
        \includegraphics[width=\textwidth]{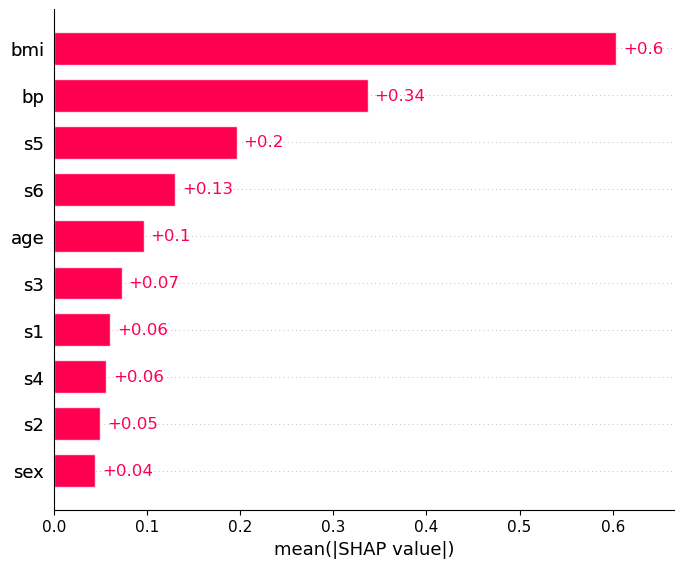}
        \caption{XGBoost without imputation}
        \label{fig:diabetes-xm06}
    \end{subfigure}
    \begin{subfigure}{0.32\textwidth}
        \centering
        \includegraphics[width=\textwidth]{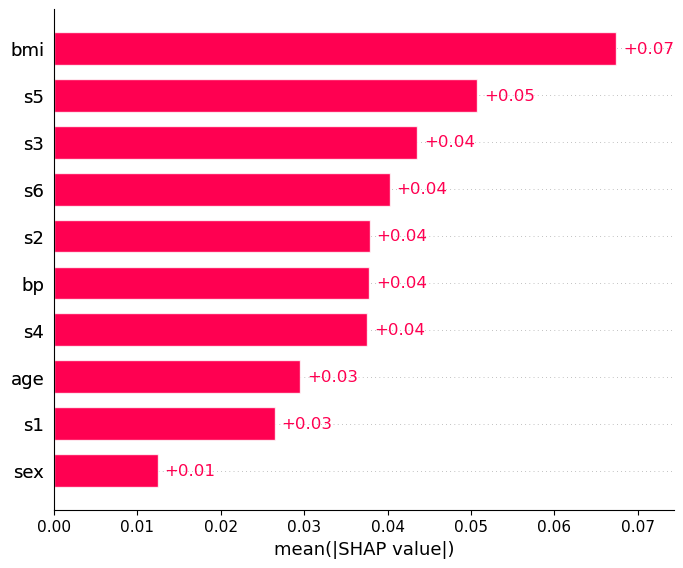}
        \caption{Mean Imputation}
        \label{fig:diabetes-mi06}
    \end{subfigure}
    
    \begin{subfigure}{0.32\textwidth}
        \centering
        \includegraphics[width=\textwidth]{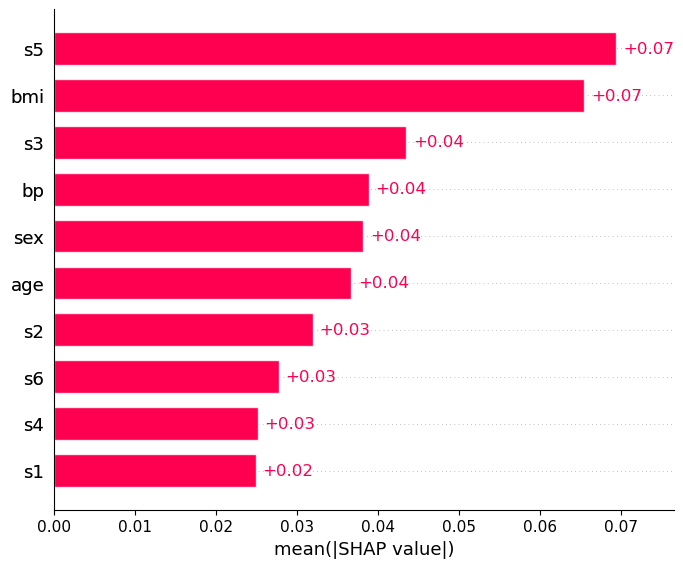}
        \caption{MICE}
        \label{fig:diabetes-mice06}
    \end{subfigure}
    \begin{subfigure}{0.32\textwidth}
        \centering
        \includegraphics[width=\textwidth]{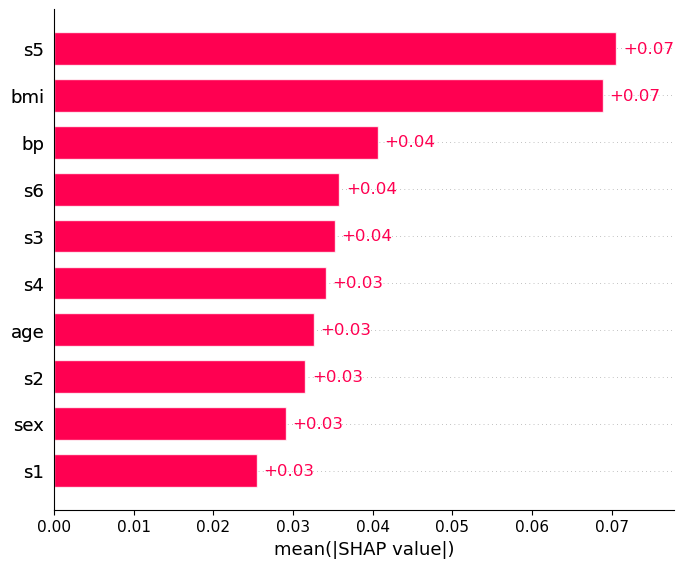}
        \caption{DIMV}
        \label{fig:diabetes-dimv06}
    \end{subfigure}
    \begin{subfigure}{0.32\textwidth}
        \centering
        \includegraphics[width=\textwidth]{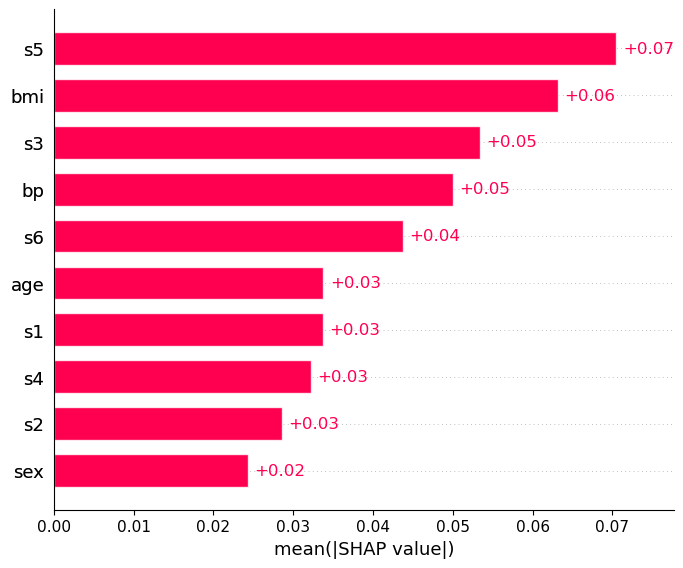}
        \caption{missForest}
        \label{fig:diabetes-mf06}
    \end{subfigure}
    \begin{subfigure}{0.32\textwidth}
        \centering
        \includegraphics[width=\textwidth]{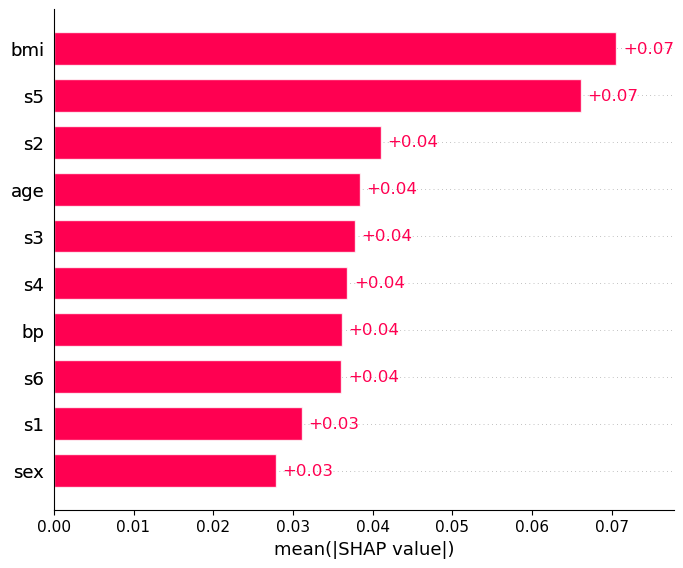}
        \caption{SOFT-IMPUTE}
        \label{fig:diabetes-soft06}
    \end{subfigure}
    \begin{subfigure}{0.32\textwidth}
        \centering
        \includegraphics[width=\textwidth]{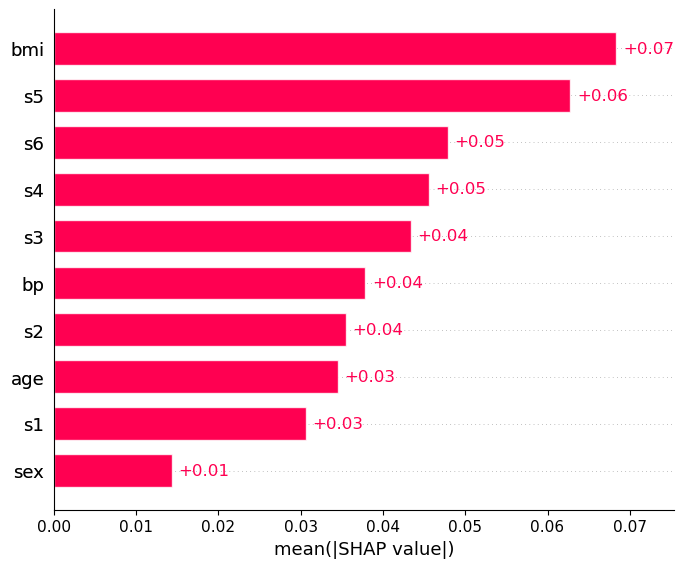}
        \caption{GAIN}
        \label{fig:diabetes-gain06}
    \end{subfigure}
    \caption{Global feature importance plot on the diabetes dataset with the missing rate $r=0.6$}
    \label{fig:diabetes-r6-bar}
\end{figure}

\begin{figure}[!htp]
    \centering
    \begin{subfigure}{0.32\textwidth}
        \centering
        \includegraphics[width=\textwidth]{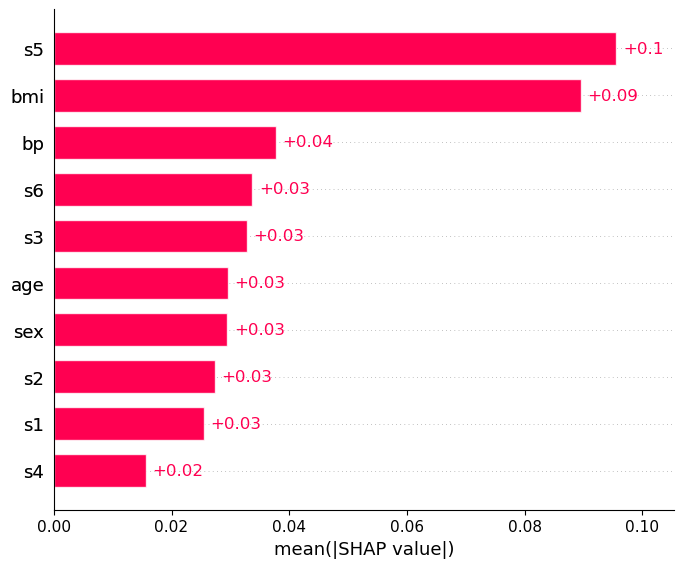}
        \caption{Original (LRO)}
        \label{fig:diabetes-ori08}
    \end{subfigure}
    \begin{subfigure}{0.32\textwidth}
        \centering
        \includegraphics[width=\textwidth]{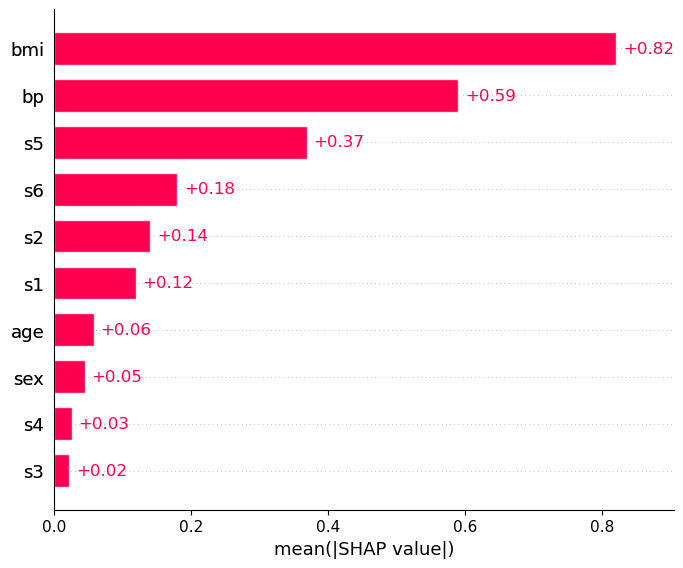}
        \caption{XGBoost without imputation}
        \label{fig:diabetes-xm08}
    \end{subfigure}
    \begin{subfigure}{0.32\textwidth}
        \centering
        \includegraphics[width=\textwidth]{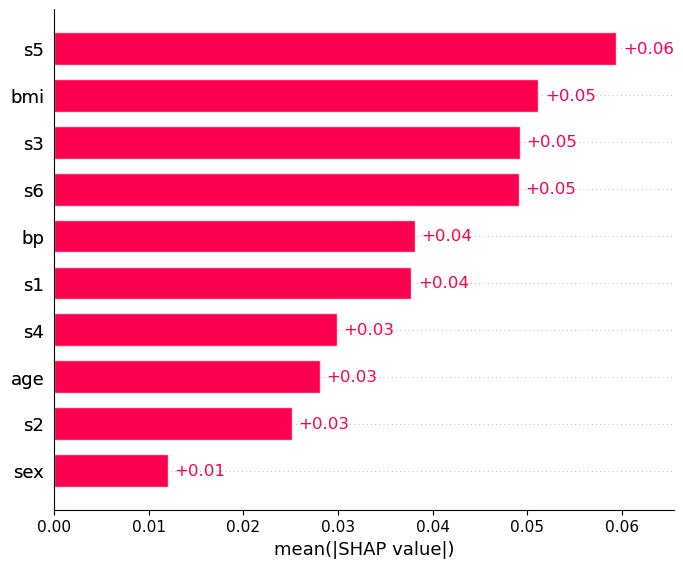}
        \caption{Mean Imputation}
        \label{fig:diabetes-mi08}
    \end{subfigure}
    
    \begin{subfigure}{0.32\textwidth}
        \centering
        \includegraphics[width=\textwidth]{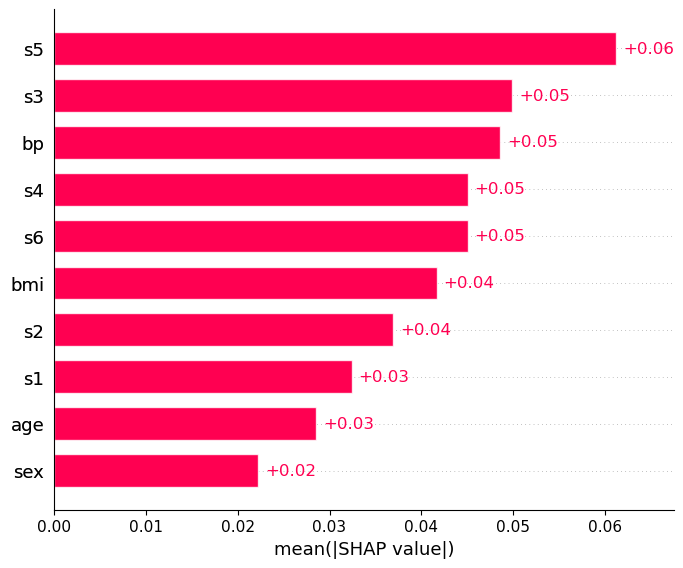}
        \caption{MICE}
        \label{fig:diabetes-mice08}
    \end{subfigure}
    \begin{subfigure}{0.32\textwidth}
        \centering
        \includegraphics[width=\textwidth]{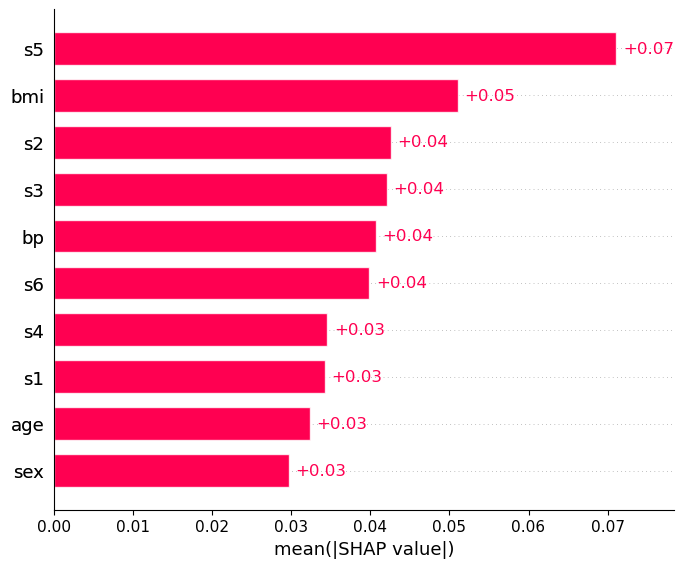}
        \caption{DIMV}
        \label{fig:diabetes-dimv08}
    \end{subfigure}
    \begin{subfigure}{0.32\textwidth}
        \centering
        \includegraphics[width=\textwidth]{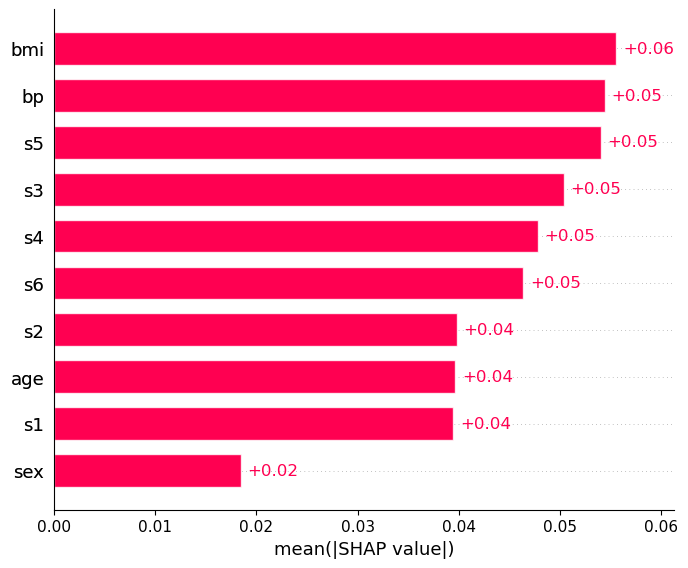}
        \caption{missForest}
        \label{fig:diabetes-mf08}
    \end{subfigure}
    \begin{subfigure}{0.32\textwidth}
        \centering
        \includegraphics[width=\textwidth]{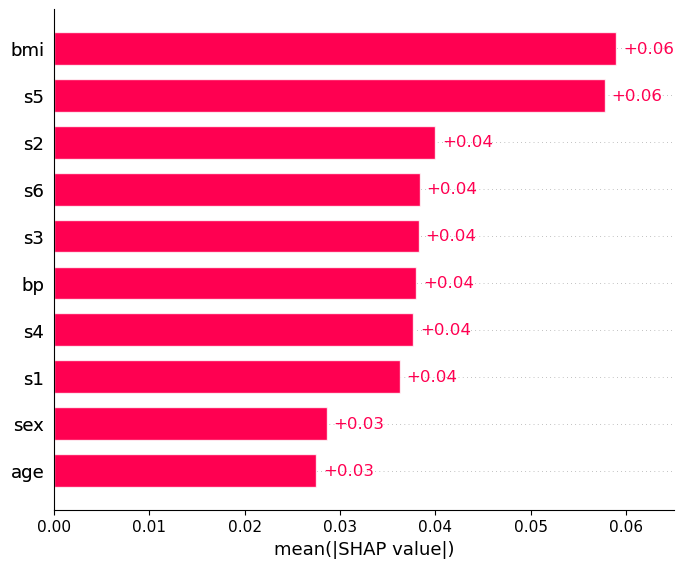}
        \caption{SOFT-IMPUTE}
        \label{fig:diabetes-soft08}
    \end{subfigure}
    \begin{subfigure}{0.32\textwidth}
        \centering
        \includegraphics[width=\textwidth]{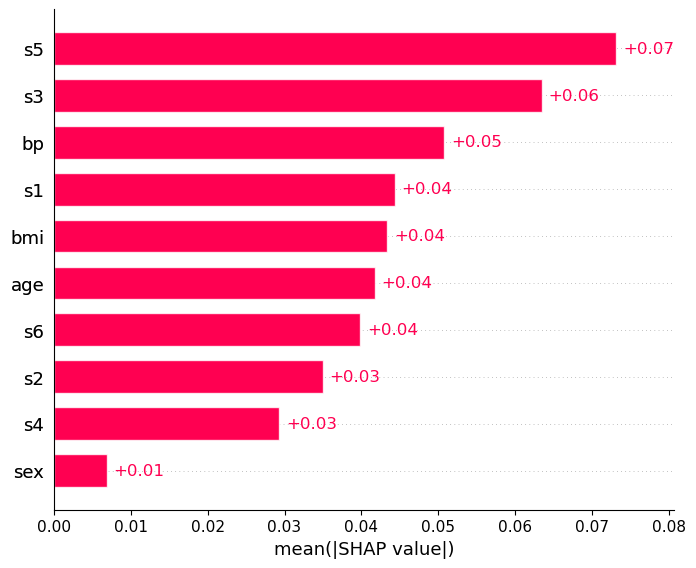}
        \caption{GAIN}
        \label{fig:diabetes-gain08}
    \end{subfigure}
    
    \caption{Global feature importance plot on the diabetes dataset with the missing rate $r=0.8$}
    \label{fig:diabetes-r8-bar}
\end{figure}

\section{Impact of a missing data point on the covariance}
\label{appendix-derivation-cov-diff}

To obtain the impact of one data point being missing in a dataset and imputed by the mean, let $i$ denote the index of the missing data point. With $\bar{\textbf{x}}$ and $\bar{\textbf{x}}^\prime$ denoting respectively the mean in the full and the imputed dataset, the corresponding covariances are respectively broken down as follows:
\newcommand{\cov}{\text{Cov}}
\begin{align}
    \cov(\textbf{x}, \textbf{y}) &= \frac{1}{N}\sum_{j = 1}^{N}(x_j - \bar{x})(y_j - \bar{y}) \nonumber \\
    &= \frac{1}{N}\left((x_i - \bar{x})(y_i - \bar{y}) + \underset{j\neq i}{\sum_{j = 1}^{N}}(x_j - \bar{x})(y_j - \bar{y})\right),
    \label{eq:cov}
\end{align}

\begin{align}
\cov(\textbf{x}^\prime, \textbf{y}) & =
    \frac{1}{N}\sum_{j = 1}^{N}(x_j - \bar{x}^\prime)(y_j - \bar{y}) 
    \nonumber \\
    &= \frac{1}{N}\left(
        (\bar{x}^\prime - \bar{x}^\prime)(y_i - \bar{y}) + \underset{j\neq i}{\sum_{j = 1}^{N}}(x_j - \bar{x}^\prime)(y_j - \bar{y})\right) 
    \nonumber \\ 
    & = \frac{1}{N}\left( \underset{j\neq i}{\sum_{j = 1}^{N}}(x_j - \bar{x}^\prime)(y_j - \bar{y})
    \right),
    \label{eq:cov_prime}
\end{align}
where the last equality holds because, in the imputed dataset, $x_i$ has been replaced with $\bar{\textbf{x}}^\prime$. 
The averages are defined as:
\begin{equation*}
    \bar{\textbf{x}} = \frac{1}{N} {\sum_{j = 1}^{N}} x_j = \frac{1}{N} \left( x_i +
\underset{j\neq i}{\sum_{j = 1}^{N}} x_j
\right) \text{ and } \bar{\textbf{x}}^\prime = \frac{1}{N-1}\left( \underset{j\neq i}{\sum_{j = 1}^{N}} x_j
\right).
\end{equation*}
Then one can derive the following equalities:
\begin{equation}
\bar{\textbf{x}} - \bar{\textbf{x}}^\prime = \frac{1}{N} x_i + \left( \frac{1}{N}-\frac{1}{N-1}\right)\underset{j\neq i}{\sum_{j = 1}^{N}} x_j = \frac{1}{N}  (x_i - \bar{\textbf{x}}^\prime),
\label{eq:x_bar_diff}
\end{equation}
\begin{equation}
\bar{\textbf{x}} = \frac{1}{N} x_i + \bar{\textbf{x}}^\prime - \frac{1}{N} \bar{\textbf{x}}^\prime =  \frac{1}{N} x_i + \frac{N-1}{N} \bar{\textbf{x}}^\prime.
\label{eq:x_bar}
\end{equation}
The impact of $x_i$ being missing in the dataset and imputed by the mean is the result of subtracting \eqref{eq:cov} from \eqref{eq:cov_prime}. Substituting \eqref{eq:x_bar_diff} yields:
\begin{align}
\cov(\textbf{x}^\prime, \textbf{y}) - \cov (\textbf{x}, \textbf{y})   & = 
\frac{1}{N}\left(
    - (x_i - \bar{\textbf{x}})(y_i - \bar{\textbf{y}}) + \underset{j\neq i}{\sum_{j = 1}^{N}} (\bar{\textbf{x}} - \bar{\textbf{x}}^\prime) (y_j - \bar{\textbf{y}})
\right)
\nonumber \\
& = \frac{1}{N}\left(
    (y_i - \bar{\textbf{y}}) (\bar{\textbf{x}}-x_i) 
    + \frac{1}{N} (x_i-\bar{\textbf{x}}^\prime)
    \underset{j\neq i}{\sum_{j = 1}^{N}}
    (y_j - \bar{\textbf{y}})
\right).
\label{eq:cov_diff}
\end{align}
Note also that: 
\begin{equation}
    \underset{j\neq i}{\sum_{j = 1}^{N}}
    (y_j - \bar{\textbf{y}}) = N\bar{\textbf{y}} - y_i - (N-1)\bar{\textbf{y}} = \bar{\textbf{y}} - y_i.
    \label{eq:y_y_bar}
\end{equation}
Substituting \eqref{eq:y_y_bar} into \eqref{eq:cov_diff}:
$$
\cov(\textbf{x}^\prime, \textbf{y}) - \cov (\textbf{x}, \textbf{y})  = 
\frac{1}{N} (y_i - \bar{\textbf{y}})\left(
    \frac{1}{N}(\bar{\textbf{x}}^\prime - x_i) + \bar{\textbf{x}} - x_i.
\right)
$$
Finally, substituting \eqref{eq:x_bar} into the latter equation:
\begin{align}
    \cov(\textbf{x}^\prime, \textbf{y}) - \cov (\textbf{x}, \textbf{y}) 
    & = \frac{1}{N} (y_i - \bar{\textbf{y}})\left(
        \frac{1}{N}\bar{\textbf{x}}^\prime + \frac{N-1}{N} \bar{\textbf{x}}^\prime - x_i
    \right)
    \nonumber \\
    & = \frac{1}{N} (y_i - \bar{\textbf{y}})(\bar{\textbf{x}}^\prime - x_i),
\end{align}
yields the sought result.

\end{document}